\documentclass[10pt,journal,compsoc]{IEEEtran}

%
\ifCLASSOPTIONcompsoc
  \usepackage[nocompress]{cite}
\else
  \usepackage{cite}
\fi

\usepackage[table,xcdraw]{xcolor}
\usepackage[utf8]{inputenc} 
\usepackage[T1]{fontenc}    
\usepackage{ragged2e}
\usepackage[colorlinks, bookmarks=false]{hyperref}       
\usepackage{url}   
\usepackage{booktabs}       

\usepackage{amsmath, amsthm, amsfonts, amssymb}
\usepackage{mathrsfs}
\usepackage{algorithm}
\usepackage{algorithmic}

\usepackage{multirow}
\usepackage{graphicx}
\usepackage{makecell}
\usepackage{caption}
\usepackage{tablefootnote}
\usepackage{wrapfig}
\usepackage{tikz, lscape}
\usetikzlibrary{matrix}

\usepackage[labelformat=simple]{subfig}        
  
\newcommand\figref[1]{figure~\ref{#1}}

\usepackage{cleveref}
\crefname{figure}{figure}{figure}
\crefname{subfig}{figure}{figure}
\crefname{table}{Table}{Table}
\crefname{equation}{Equation}{Equation}
\crefname{section}{Section}{Section}
\crefname{algorithm}{Algorithm}{Algorithm}

\newtheorem{assumption}{Assumption}
\newtheorem{theorem}{Theorem}
\newtheorem{lemma}{Lemma}
\newtheorem{proposition}{Proposition}

\def \E {\mathbb{E}}
\def \w {\boldsymbol{w}}
\def \m {\boldsymbol{m}}
\def \x {\boldsymbol{x}}

\def \xi {\boldsymbol{x}_i}
\def \yi {\boldsymbol{y}_i}

\usepackage[numbers]{natbib}

\hyphenation{op-tical net-works semi-conduc-tor}

\begin{document}
%

\title{Systematic Investigation of Sparse Perturbed Sharpness-Aware Minimization Optimizer}

\author{Peng~Mi,~
        Li~Shen,~
        Tianhe~Ren,~
        Yiyi~Zhou,~
        Tianshuo~Xu,
        Xiaoshuai~Sun,
        Tongliang Liu,~\IEEEmembership{Senior Member,~IEEE,} 
        Rongrong~Ji,~\IEEEmembership{Senior Member,~IEEE,}~       Dacheng~Tao,~\IEEEmembership{Fellow,~IEEE}
\IEEEcompsocitemizethanks{
\IEEEcompsocthanksitem P. Mi and T. Ren, Y. Zhou, T. Xu, X. Sun, and R. Ji are with the Key Laboratory of Multimedia Trusted Perception and Efficient Computing, Ministry of Education of China, Xiamen University, P.R. China. E-mails: mipeng@stu.xmu.edu.cn; rentianhe@stu.xmu.edu.cn; zhouyiyi@xmu.edu.cn; xutianshuo@stu.xmu.edu.cn; xssun@xmu.edu.cn; rrji@xmu.edu.cn.
\IEEEcompsocthanksitem L. Shen is with JD Explire Adacemy, Beijing, China. E-mail: mathshenli@gmail.com.
\IEEEcompsocthanksitem T. Liu and D. Tao are with The University of Sydney, Australia. E-mail:  tongliang.liu@sydney.edu.au; dacheng.tao@gmail.com
}
\thanks{Manuscript revised June 21, 2023.}
}

\markboth{IEEE TRANSACTIONS ON PATTERN ANALYSIS AND MACHINE INTELLIGENCE, VOL. XX, NO. XX}{}%

\IEEEtitleabstractindextext{
\begin{abstract}
\justifying
Deep neural networks often suffer from poor generalization due to complex and non-convex loss landscapes. Sharpness-Aware Minimization (SAM) is a popular solution that smooths the loss landscape by minimizing the maximized change of training loss when adding a perturbation to the weight. However, indiscriminate perturbation of SAM on all parameters is suboptimal and results in excessive computation, double the overhead of common optimizers like Stochastic Gradient Descent (SGD).
In this paper, we propose Sparse SAM (SSAM), an efficient and effective training scheme that achieves sparse perturbation by a binary mask. To obtain the sparse mask, we provide two solutions based on Fisher information and dynamic sparse training, respectively. We investigate the impact of different masks, including unstructured, structured, and $N$:$M$ structured patterns, as well as explicit and implicit forms of implementing sparse perturbation.
We theoretically prove that SSAM can converge at the same rate as SAM,~\emph{i.e.}, $O(\log T/\sqrt{T})$. Sparse SAM has the potential to accelerate training and smooth the loss landscape effectively. Extensive experimental results on CIFAR and ImageNet-1K confirm that our method is superior to SAM in terms of efficiency, and the performance is preserved or even improved with a perturbation of merely 50\% sparsity.
Code is available at \url{https://github.com/Mi-Peng/Systematic-Investigation-of-Sparse-Perturbed-Sharpness-Aware-Minimization-Optimizer}.
\end{abstract}
\begin{IEEEkeywords}
Sharpness Aware Minimization, Sparse Speedup Algorithm.
\end{IEEEkeywords}
}

\maketitle

\IEEEdisplaynontitleabstractindextext

\IEEEpeerreviewmaketitle

\IEEEraisesectionheading{\section{Introduction}
\label[section]{sec:intro}}
\IEEEPARstart{O}{ver} the past decade or so, the great success of deep learning has been due in great part to ever-larger model parameter sizes~\cite{vit, wideresnet, transformer, bert, swin-transformer, biggan}.
However, the excessive parameters also make the model inclined to poor generalization.
To overcome this problem, numerous efforts have been devoted to  training algorithm~\cite{early-stop, entropy-regularization, label-smoothing}, data augmentation~\cite{cutout, mixup, cutmix}, and network design~\cite{dropout, bn}. 

One important finding in recent research is the connection between the geometry of loss landscape and model generalization~\cite{LBtrain, sam, flat-minima, exploring-generalization,adversarial-robust}. In general, the loss landscape of the model is complex and non-convex, which makes model tend to converge to sharp minima. Recent endeavors~\cite{LBtrain, flat-minima, exploring-generalization} show that the flatter the minima of convergence, the better the model generalization. This discovery reveals the nature of previous approaches~\cite{early-stop, dropout, cutout, cutmix, mixup, bn} to improve generalization,~\emph{i.e.}, smoothing the loss landscape. 

Based on this finding, Foret~\emph{et al.}~\cite{sam} propose a novel approach to improve model generalization called \emph{sharpness-aware minimization} (SAM), which simultaneously minimizes loss value and loss sharpness. SAM quantifies the landscape sharpness as the maximized difference of loss when a perturbation is added to the weight. When the model reaches a sharp area, the perturbed gradients in SAM help the model jump out of the sharp minima. In practice, SAM requires two forward-backward computations for each optimization step, where the first computation is to obtain the perturbation and the second one is for parameter update. Despite the remarkable performance~\cite{sam,asam,esam,vit-sam}, This property makes SAM double the computational cost of the conventional optimizer,~\emph{e.g.}, SGD~\cite{bottou2010large}.

Since SAM calculates perturbations indiscriminately for all parameters, a question is arisen: 
\begin{center}
 \emph{Do we need to calculate perturbations for all parameters?}
\end{center}

Above all, we notice that in most deep neural networks, only about 5\% of parameters are sharp and rise steeply during optimization~\cite{LBtrain}. Then we explore the effect of SAM in different dimensions to answer the above question and find out \emph{(i) little difference between SGD and SAM gradients in most dimensions (see figure~\ref{fig:grad-diff}); (ii) more flatter without SAM in some dimensions (see figure~\ref{fig:vis-landscape} and figure~\ref{fig:hessian}).} 

Inspired by the above discoveries, we propose a novel scheme to improve the efficiency of SAM via sparse perturbation, termed Sparse SAM (SSAM). SSAM, which plays the role of regularization, has better generalization, and its sparse operation also ensures the efficiency of optimization.
Specifically, the perturbation in SSAM is multiplied by a binary sparse mask to determine which parameters should be perturbed. 
Different patterns of masks is discussed and experimented including commonly used unstructured mask, truly accelerated structured mask and generally popular $N$:$M$ structured mask. To achieve true acceleration in our Sparse SAM, we review the backward propagation and classify the position of the mask into two types,~\emph{i.e.}, applied directly on the model parameters called explicit mask, and applied on the intermediate gradient called implicit mask.
To obtain the sparse mask, we provide two implementations. The first solution is to use Fisher information~\cite{fisher-information} of the parameters to formulate the binary mask, dubbed SSAM-F. The other one is to employ dynamic sparse training to jointly optimize model parameters and the sparse mask, dubbed SSAM-D. The first solution is relatively more stable but a bit time-consuming, while the latter is more efficient. 

In addition to these solutions, we provide the theoretical analysis including convergence analysis and generalization analysis. The convergence analysis of SAM and SSAM in non-convex stochastic setting proves that our SSAM can converge at the same rate as SAM,~\emph{i.e.}, $O(\log T/\sqrt{T})$. The generalization analysis shows the generalization error upper bound of our SSAM based on PAC-Bayesian generalization bound~\cite{pac_bayesian}. At last, we evaluate the performance and effectiveness of SSAM on CIFAR10~\cite{cifar10/100}, CIFAR100~\cite{cifar10/100} and ImageNet~\cite{imagenet} with various models. The experiments confirm that SSAM contributes to a flatter landscape than SAM, and its performance is on par with or even better than SAM with only about 50\% perturbation. These results coincide with our motivations and findings.

To sum up, the contribution of this paper is three-fold:
\begin{itemize}
    \item We rethink the role of perturbation in SAM and find that the indiscriminate perturbations are suboptimal and computationally inefficient.
    \item We propose a sparsified perturbation approach called Sparse SAM (SSAM) with two variants,~\emph{i.e.}, Fisher SSAM (SSAM-F) and Dynamic SSAM (SSAM-D), both of which enjoy better efficiency and effectiveness than SAM. We discuss mask with different patterns,~\emph{e.g.}, unstructured, structured, $N$:$M$ structured mask, and positions,~\emph{e.g.}, explicit mask and implicit mask.  
    \item We theoretically prove that SSAM can converge at the same rate as SAM,~\emph{i.e.}, $O(\log T/\sqrt{T})$. We analyze the generalization error upper bound of SSAM. 
    \item We evaluate SSAM with various models on CIFAR and ImageNet, showing WideResNet with SSAM of a high sparsity outperforms SAM on CIFAR; SSAM can achieve competitive performance with a high sparsity; SSAM has a comparable convergence rate to SAM.
\end{itemize}

This manuscript is an extension of our conference version~\cite{ssam}. Our significant improvement includes the investigation of different implementations of sparse perturbation, polishing of our convergence proof, accession of generalization error bound, more comprehensive experiments on noisy dataset, and more ablations.

\section{Related Work}
\label[section]{sec:related-work}

In this section, we briefly review the studies on sharpness-aware minimum optimization (SAM), Fisher information in deep learning, and dynamic sparse training.

\textbf{SAM and flat minima.}
Hochreiter~\emph{et al.}~\cite{flat-minima} first reveal that there is a strong correlation between the generalization of a model and the flat minima. After that, there is a growing amount of research based on this finding. Keskar~\emph{et al.}~\cite{LBtrain} conduct experiments with a larger batch size, and in consequence observe the degradation of model generalization capability. They~\cite{LBtrain} also confirm the essence of this phenomenon, which is that the model tends to converge to the sharp minima. Keskar~\emph{et al.}~\cite{LBtrain} and Dinh~\emph{et al.}~\cite{sharp-can-generalize} state that the sharpness can be evaluated by the eigenvalues of the Hessian. However, they fail to find the flat minima due to the notorious computational cost of Hessian. 

Inspired by this, Foret~\emph{et al.}~\cite{sam} introduce a sharpness-aware optimization (SAM) to find a flat minimum for improving generalization capability, which is achieved by solving a mini-max problem. Zhang~\emph{et al.}~\cite{penalize-sam} make a point that SAM~\cite{sam} is equivalent to adding the regularization of the gradient norm by approximating Hessian matrix. Kwon~\emph{et al.}~\cite{asam} propose a scale-invariant SAM scheme with adaptive radius to improve training stability. Zhang~\emph{et al.}~\cite{surrogate} redefine the landscape sharpness from an intuitive and theoretical perspective based on SAM. To reduce the computational cost in SAM, Du~\emph{et al.}~\cite{esam} proposed Efficient SAM (ESAM) to randomly calculate perturbation. However, ESAM randomly select the samples every steps, which may lead to optimization bias. Instead of the perturbations for all parameters,~\emph{i.e.}, SAM, we compute a sparse perturbation,~\emph{i.e.}, SSAM, which learns important but sparse dimensions for perturbation.

\textbf{Fisher information (FI).} Fisher information
\cite{fisher-information} was proposed to measure the information that an observable random variable carries about an unknown parameter of a distribution.
In machine learning, Fisher information is widely used to measure the importance of the model parameters~\cite{pnas-forgetting-in-neural-networks} and decide which parameter to be pruned~\cite{fisher-mask, fisher-pruning}. 
For proposed SSAM-F, Fisher information is used to determine whether a weight should be perturbed for flat minima.  

\textbf{Dynamic sparse training.}
Finding the sparse network via pruning unimportant weights is a popular solution in network compression, which can be traced back to decades~\cite{lecun-1th-prune}. The widely used training scheme, \emph{i.e.}, pretraining-pruning-fine-tuning, is presented by Han~\emph{et.al.}~\cite{prune-pipeline}. Limited by the requirement for the pre-trained model, some recent research~\cite{rig,dst-recent-deep-rewiring,sparse-cosine,dst-recent-topkast,dst-recent-reparameterization,liu2021sparse,liu2022unreasonable} attempts to discover a sparse network directly from the training process. Dynamic Sparse Training (DST) finds the sparse structure by dynamic parameter reallocation. The criterion of pruning could be weight magnitude~\cite{LTH}, gradient~\cite{rig} and Hessian~\cite{lecun-1th-prune, woodfisher}, \emph{etc}.
We claim that different from the existing DST methods that prune neurons, our target is to obtain a binary mask for sparse perturbation. 

\textbf{Efficient training backpropagation.} Efficient training is proposed to achieve true acceleration to reduce training budget, since unstructured sparse is not supported by hardware. Besides structured sparse network~\cite{molchanov2019importance, filter_prune, huang2018learning, bi-nm}, the most popular pattern achieving true efficient train is $N$:$M$ structure~\cite{nvidia-ampere-tensor-core,n:M-structured-sparse,nvidia-whitepaper, n:M-transposable-mask, channel-n-m, sr-ste}, which requires $M$ of every $N$ consecutive elements are 0. Hubara~\emph{et al.}~\cite{n:M-transposable-mask} provides an analysis on the $N$:$M$ case at forward propagation and backward propagation. Pool~\emph{et al.}~\cite{channel-n-m} apply the $N$:$M$ pattern on CNN via permuting the input channels. Zhang~\emph{et al.}~\cite{bi-nm} proposed two $N$:$M$ masks for forward propagation and backward propagation, respectively.

\section{Rethinking the Perturbation in SAM}
\label[section]{sec:rethinking}
In this section, we first review how SAM converges at the flat minimum of a loss landscape. Then, we rethink the role of perturbation in SAM.
 
\subsection{Preliminary}
\label[section]{sec:rethinking:preliminary}

In this paper, we consider the weights of a deep neural network as $\w=(w_1,w_2,...,w_d) \subseteq \mathcal{W}\in \mathbb{R}^d$ and denote a binary mask as $\m \in \{0,1\}^d$, which satisfies $\boldsymbol{1}^T\m=(1-s)\cdot d$ to restrict the computational cost. Given a training dataset as $\mathcal{S} \triangleq \{(\xi,\yi)\}_{i=1}^n$ \emph{i.i.d.} drawn from the distribution $\mathcal{D}$, the per-data-point loss function is defined by $f(\w, \xi, \yi)$. For the classification task in this paper, we use cross-entropy as loss function. The population loss is defined by $f_{\mathcal{D}}=\E_{(\xi,\yi)\sim\mathcal{D}}f(\w,\xi,\yi)$, while the empirical training loss function is $f_{\mathcal{S}}\triangleq \frac{1}{n}\sum_{i=1}^nf(\w,\xi,\yi)$.

Sharpness-aware minimization (SAM)~\cite{sam} aims to simultaneously minimize the loss value and smooth the loss landscape, which is formulated as a min-max problem:
\begin{equation}
\label[equation]{equ:sam}
    \min_{\w} \max_{||\boldsymbol{\epsilon}||_2\leq\rho} f_{\mathcal{S}}(\w + \boldsymbol{\epsilon}).
\end{equation}
SAM first obtains the perturbation $\boldsymbol{\epsilon}$ in a neighborhood ball area with a radius denoted as $\rho$. The optimization tries to minimize the loss of the perturbed weight $\w + \boldsymbol{\epsilon}$. Intuitively, the goal of SAM is that small perturbations to the weight will not significantly rise the empirical loss, which indicates that SAM tends to converge to a flat minimum. To solve the mini-max optimization, SAM approximately calculates the perturbations $\boldsymbol{\epsilon}$ using Taylor expansion around $\w$:
\begin{align}
\label[equation]{equ:eps}
\boldsymbol{\epsilon} =&\mathop{\arg\max}_{||\boldsymbol{\epsilon}||_2\leq\rho} f_{\mathcal{S}}(\w + \boldsymbol{\epsilon})
\notag 
\approx \mathop{\arg\max}_{||\boldsymbol{\epsilon}||_2\leq\rho} f_{\mathcal{S}}(\w) + \boldsymbol{\epsilon} \cdot \nabla_{\w}f(\w) 
\notag \\ 
=& \rho \cdot {\nabla_{\w}f(\w)}{\big /}{||\nabla_{\w}f(\w)||_2}.
\end{align}
In this way, the objective function can be rewritten as $\min_{\w}f_{\mathcal{S}}\big(\w+\rho {\nabla_{\w}f(\w)}{\big/}{||\nabla_{\w}f(\w)||_2}\big)$, which could be implemented by a two-step gradient descent framework in Pytorch or TensorFlow:
\begin{itemize}
    \item In the first step, the gradient at $\w$ is used to calculate the perturbation $\boldsymbol{\epsilon}$ by Eq. \ref{equ:eps}. Then the weight of model will be added to $\w + \boldsymbol{\epsilon}$.
    \item In the second step, the gradient at $\w + \boldsymbol{\epsilon}$ is used to solve $\min_{\w} f_{\mathcal{S}}(w+\boldsymbol{\epsilon})$, \emph{i.e.}, update the weight $\w$ by this gradient.
\end{itemize}

\subsection{Rethinking the Perturbation Step of SAM}
\label[section]{sec:rethinking:rethinkSAM}

\textbf{How does SAM work in flat subspace?} SAM perturbs all parameters indiscriminately, but the fact is that merely about 5\% parameter space is sharp while the rest is flat~\cite{LBtrain}. We are curious whether perturbing the parameters in those already flat dimensions would lead to the instability of the optimization and impair the improvement of generalization. To answer this question, we quantitatively and qualitatively analyze the loss landscapes with different training schemes in~\cref{sec:experiments}, as shown in ~\cref{fig:vis-landscape} and~\cref{fig:hessian}. The results confirm our conjecture that optimizing some dimensions without perturbation can help the model generalize better. 

\textbf{What is the difference between the gradients of SGD and SAM?}
We investigate various neural networks optimized with SAM and SGD on CIFAR10/100 and ImageNet, whose statistics are given in~\cref{fig:grad-diff}. We use the relative difference ratio $r$, defined as $r = \log \left|{(g_{SAM}- g_{SGD})}/{g_{SGD}}\right|$, to measure the difference between the gradients of SAM and SGD. As showin in~\cref{fig:grad-diff}, the parameters with $r$ less than 0 account for the vast majority of all parameters, indicating that most SAM gradients are not significantly different from SGD gradients. These results show that most parameters of the model require no perturbations for achieving the flat minima, which well confirms our the motivation. 
\begin{figure}[h]
    \centering
    \vspace{-0.2cm}
    \includegraphics[width=0.98\linewidth]{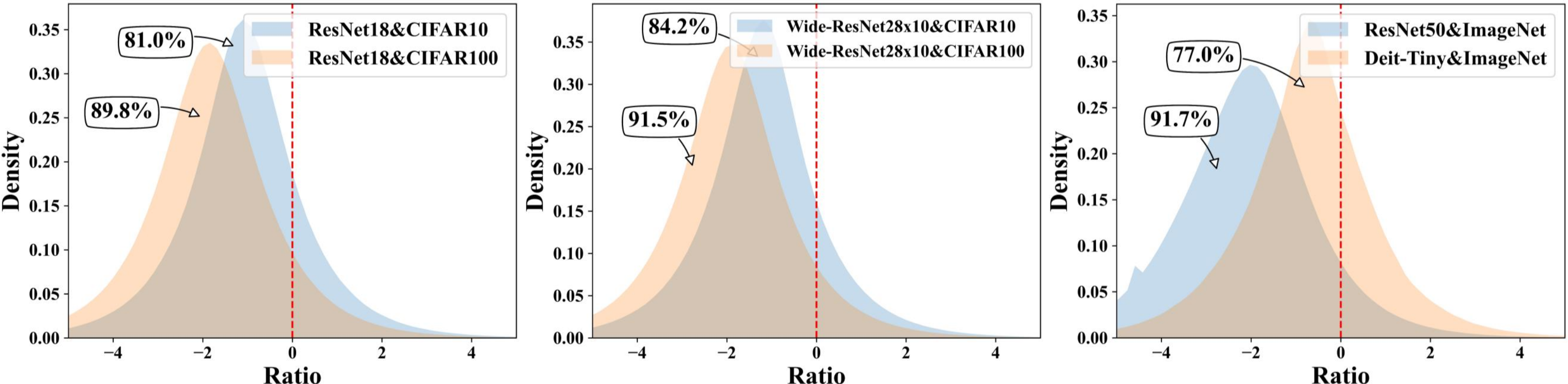}
    \caption{The distribution of relative difference ratio $r$ among various models and datasets. There is little difference between SAM and SGD gradients for most parameters,~\emph{i.e.}, the ratio $r$ is less than $0$.}
    \label[figure]{fig:grad-diff}
    \vspace{-0.4cm}
\end{figure}

Inspired by the above observation and the promising hardware acceleration for sparse operation modern GPUs, we further propose Sparse SAM, a novel sparse perturbation approach, as an implicit regularization to improve the efficiency and effectiveness of SAM.
\section{Methodology}
\label[section]{sec:method}

\begin{figure*}
{
\label{illustration_of_pattern_and_flow}
\centering
\subfloat[Illustration of different patterns of mask applied on perturbation for a sparse computation. (1) The original computation of SAM with requirements of a complete gradient. (2) Unstructured pattern does not require the position of the elements in the mask as long as it satisfies the sparsity ratio, while the implementation of true acceleration on hardware is difficult to achieve. (3) Structured pattern requires the elements of the mask should be in a row/filter and it's easy to be realized but harmful to performance. (4) 2:4 structure pattern requires that there are only two weights are nonzero in each group of four weights, which is being gradually developed and supported by hardware.]{
    \centering
    \includegraphics[scale=0.5, height=5cm]{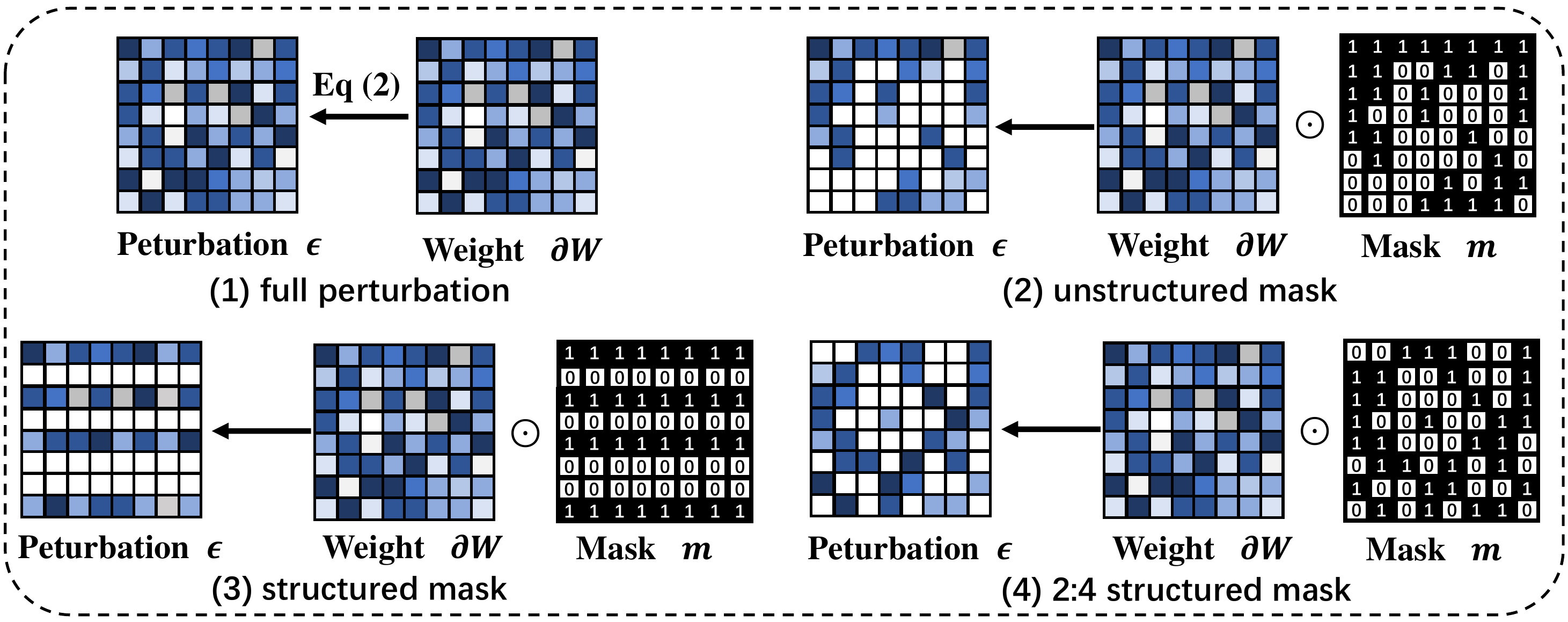}
    \label{mask_pattern}
}
\hspace{+2mm}
\subfloat[Implement sparse perturbation Explicitly and Implicitly. (1) Explicit implementation,~\emph{i.e.}, apply the mask on the gradient of weight, access sparse perturbation directly. (2) Implicit implementation,~\emph{i.e.}, apply the mask on the gradient of intermediate variable, access sparse perturbation indirectly.]{
    \centering
    \includegraphics[scale=0.3, height=5cm]{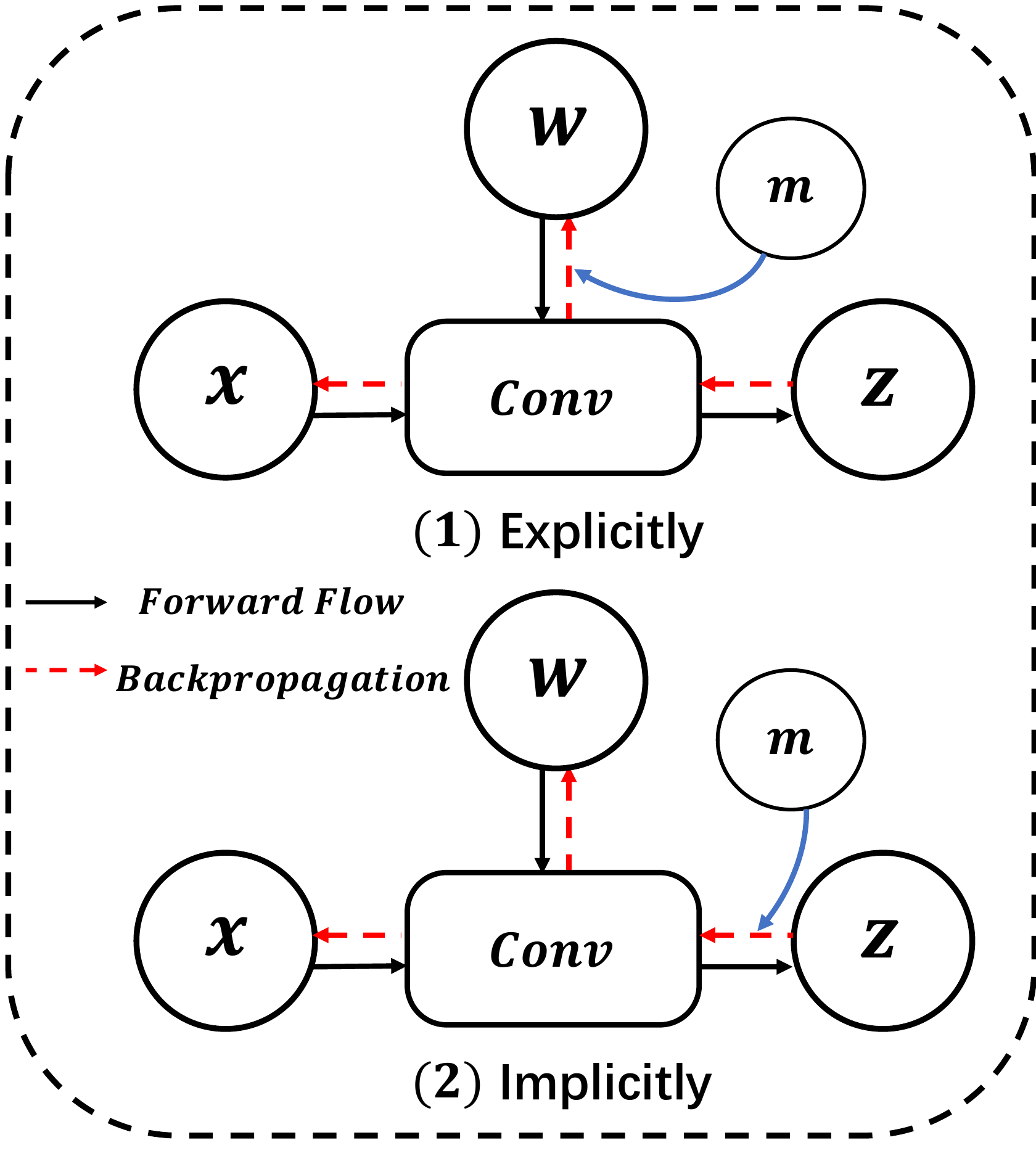}
    \label{train_flow}
}
\caption{Illustration of different patterns of masks, and explicit and implicit implementation of sparse perturbation.}
}
\end{figure*}

In this section, we first define the proposed Sparse SAM (SSAM), which strengths SAM via sparse perturbation. Afterwards, we introduce the instantiations of the sparse mask used in SSAM via Fisher information and dynamic sparse training, dubbed SSAM-F and SSAM-D, respectively. 

\subsection{Sparse SAM}
\label[section]{sec:method:sparseSAM}
Motivated by the finding discussed in the introduction, Sparse SAM (SSAM) employs a sparse binary mask to decide which parameters should be perturbed, thereby improving the efficiency of sharpness-aware minimization.  
Specifically, the perturbation $\boldsymbol{\epsilon}$ will be multiplied by a sparse binary mask $\m$, and the objective function is then rewritten as $\min_{\w}f_{\mathcal{S}}\left(\w+\rho\cdot\frac{\nabla_{\w} f(\w)}{||\nabla_{\w}f(\w)||_2}\odot \m\right)$.  To stable the optimization, the sparse binary mask $\m$ is updated at regular intervals during training. We provide two solutions to obtain the sparse mask $\m$, namely Fisher information based Sparse SAM (SSAM-F) and dynamic sparse training based Sparse SAM (SSAM-D). The overall algorithms of SSAM and sparse mask generations are shown in Algorithm~\ref{alg:sparse-sam-pipeline} and Algorithm~\ref{alg:sparse-sam-generate-mask}, respectively.

    \begin{algorithm}[ht]
    \small
        \caption{Sparse SAM (SSAM)}
        \label{alg:sparse-sam-pipeline}
    \begin{algorithmic}[1]
    \REQUIRE sparse ratio $s$, dense model $\w$, binary mask $\m$, update interval $T_m$, number of samples $N_F$, learning rate $\eta$, training set $\mathcal{S}$.
    \STATE Initialize $\w$ and $\m$ randomly.
    \FOR{epoch $t=1,2 \ldots T$}
        \FOR{each training iteration}
        \STATE{Sample a batch from $\mathcal{S}$: $\mathcal{B}$}
        \STATE{Compute perturbation $\boldsymbol{\epsilon}$ by Eq.~\eqref{equ:eps}}
        \STATE{$\boldsymbol{\epsilon}\gets\boldsymbol{\epsilon} \odot \m$}
        \ENDFOR
    \IF{$t \mod T_m = 0$}
    \STATE Generate mask $m$ via {\color{blue}{Option I}} or {\color{blue}{II}}
    \ENDIF
    \STATE{$\w \gets \w - \eta \nabla f(\w+\boldsymbol{\epsilon})$}
    \ENDFOR
    \RETURN{Final weight of model $\w$}
    \end{algorithmic}
    \end{algorithm}
    \begin{algorithm}[ht]
    \small
         \caption{Sparse Mask Generation}
        \label{alg:sparse-sam-generate-mask}
    \begin{algorithmic}[1]
    \STATE  {\color{blue}{Option I:(Fisher Information Mask)}}
        \STATE{Sample $N_F$ data from $\mathcal{S}$: $\mathcal{B}_F$}
        \STATE{Compute Fisher $\hat{\mathscr{F}}_{\w} $ by Eq.~\eqref{equ:fisher-empirical}}
        \STATE{$\boldsymbol{m_1}\!=\!\{m_i|m_i\!=\!1\}\gets {\rm ArgTopK}(\hat{\mathscr{F}}_{\w}, (1-s)\cdot|\w|)$}
        \STATE{$\boldsymbol{m_0}\!=\!\{m_i|m_i\!=\!0\}\gets \boldsymbol{m} - \boldsymbol{m_1}$}
        \STATE{$\m \gets \boldsymbol{m_0} \cup \boldsymbol{m_1}$}
    \STATE  {\color{blue}{Option II:(Dynamic Sparse Mask)}}    
        \STATE{$N_{drop}=f_{decay}(t;\alpha)\cdot (1 - s) \cdot |\w|$}
        \STATE{$N_{growth}=N_{drop}$}
        \STATE{$\boldsymbol{m_1}
        \!\gets\!\boldsymbol{m_1}\!-\! {\rm ArgTopK}_{m_i\!\in\!\boldsymbol{m_1}}(-|\nabla f(\w)|, N_{drop})$}
        \STATE{$\boldsymbol{m_1}\!\gets\!\boldsymbol{m_1} +{\rm Random}_{\m_i\!\notin\!\boldsymbol{m_1}}(N_{growth})$}
        \STATE{$\boldsymbol{m_0}\!=\!\{m_i|m_i\!=\!0\}\gets \m - \boldsymbol{m_1}$}
        \STATE{$\m \gets \boldsymbol{m_0} \cup \boldsymbol{m_1}$}
    \RETURN Sparse mask $\m$
    \end{algorithmic}
    \end{algorithm}


\subsection{SSAM-F: Fisher information based Sparse SAM}
\label[section]{sec:method:sparseSAM:fisherSAM}

Inspired by the connection between Fisher information and Hessian~\cite{fisher-information}, which can directly measure the flatness of the loss landscape, we apply Fisher information to achieve sparse perturbation, denoted as SSAM-F. The Fisher information is defined by
\begin{equation}
\small
\label[Equation]{equ:fisher-information-matrix}
    \mathscr{F}_{\w} = \E_{x\sim p(x)}\left[ \E_{y\sim p(y|x)}\nabla_{\w}\log p(y|x) \nabla_{\w}\log p(y|x)^T\right]
\end{equation} 
where $p(y|x)$ is the abbreviation of $p_{\w}(y|x)$, which is the output distribution predicted by the model. In over-parameterized networks, the computation of the Fisher information matrix is also intractable, \emph{i.e.}, $|\w|\times|\w|$. Following~\cite{fisher-mask, KFAC, KFAC-opt, KFAC-cnn}, we approximate $\mathscr{F}_{\w}$ as a diagonal matrix, which is included in a vector in $\mathbb{R}^{|\w|}$. Note that there are two expectations in Eq.~\eqref{equ:fisher-information-matrix}. The first one is that the original data distribution $p(x)$, which is often not available. Therefore, we approximate it by sampling $N_F$ data $\x_1,\x_2,\ldots, \x_{N_F}$ from $x\sim p(x)$:
\begin{equation}
\label[equation]{equ:fisher-approximation-x}
    \mathscr{F}_{\w}\approx \frac{1}{N_F}\sum_{i=1}^{N_F}\E_{y\sim p_{\w}(y|\x_i)}(\nabla_{\w}\log p_{\w}(y|\x_i))^2.
\end{equation}
For the second expectation over $p_{\w}(y|\x)$, it is not necessary to compute its explicit expectation, since the ground-truth $y_i$ for each training  sample $x_i$ is available in supervised learning. Therefore, we rewrite the Eq.~\eqref{equ:fisher-approximation-x} as  "empirical Fisher":
\begin{equation}
\label[equation]{equ:fisher-empirical}
\hat{\mathscr{F}}_{\w}=\frac{1}{N_F}\sum_{i=1}^{N_F}(\nabla_{\w}\log p_{\w}(y_i|\x_i))^2.
\end{equation}
We emphasize that the empirical Fisher is a $|\w|$-dimension vector, which is the same size as the mask $\m$. To obtain the mask $\m$, we calculate the empirical Fisher by Eq.~\eqref{equ:fisher-empirical} over $N_F$ training data randomly sampled from training set $\mathcal{S}$. Then we sort the elements of empirical Fisher in descending, and the parameters corresponding to the the top $k$ Fisher values will be perturbed:
\begin{equation}
\boldsymbol{m_1}=\{m_i|m_i=1,m_i\in\m, i\in{\rm ArgTopK}(\hat{\mathscr{F}}_{\w}, k)\},
\end{equation}
where ${\rm ArgTopK}(\boldsymbol{v}, N)$ returns the indices of top $N$ largest elements among $\boldsymbol{v}$, $\boldsymbol{m_1}$ is the set of values that are 1 in $\m$. $k$ is the number of perturbed parameters, which is equal to $(1-s)\cdot |\w|$ for sparsity $s$. After setting the rest values of the mask to 0,~\emph{i.e.}, $\boldsymbol{m_0}=\{m_i| m_i=0,m_i\notin \boldsymbol{m_1}, m_i\in \m \}$, we get the final mask $\m=\boldsymbol{m_0}\cup\boldsymbol{m_1}$. The algorithm of SSAM-F is shown in \cref{alg:sparse-sam-generate-mask}.

\subsection{SSAM-D: Dynamic sparse mask based sparse SAM}
\label[section]{sec:method:sparseSAM:dstSAM}
Considering the computation of empirical Fisher is still relatively high, we also resort to dynamic sparse training for efficient binary mask generation. The mask generation includes a perturbation dropping phase and a perturbation growth phase. At the perturbation dropping phase, some perturbed parameters will be dropped, which means that they require no perturbations. 
We believe that the advantage brought by SAM for the parameters located in relatively flat positions is no longer effective, as SAM is intended to smooth landscape. Therefore, the parameters with small gradients values would be dropped. Formally, the perturbation dropping phase follows
\begin{equation}
    \boldsymbol{m_1} \gets \boldsymbol{m_1} - {\rm ArgTopK}_{\w \in \boldsymbol{m_1}}(-|\nabla f(\w)|, N_{drop}),
\end{equation}
where $\boldsymbol{m_1}$ represents the parameters set whose mask is 1, $N_{drop}$ is the number of perturbations to be dropped. At the perturbation growth phase, for the purpose of exploring the perturbation combinations as many as possible, several unperturbed dimensions grow, which means these dimensions need to compute perturbations. Formally, the perturbation growth phase follows
\begin{equation}
    \boldsymbol{m_1} \gets \boldsymbol{m_1} + {\rm Random}_{\w \notin \boldsymbol{m_1}}(N_{growth}),
\end{equation}
where ${\rm Random}_{\mathcal{S}}(N)$ randomly returns $N$ elements in $\mathcal{S}$, and $N_{growth}$ is the number of perturbation growths. To keep the sparsity constant during training, the number of growths is equal to the number of dropping,~\emph{i.e.}, $N_{growth}=N_{drop}$. Afterwards, we set the rest values of the mask to $0$,~\emph{i.e.}, $\boldsymbol{m_0}\!=\!\{m_i\!=\!0|m_i\!\notin\!\boldsymbol{m_1}\}$, and get the final mask $\m\!=\!\boldsymbol{m_0}\cup\boldsymbol{m_1}$. The drop ratio $\alpha$ represents the proportion of dropped perturbations in the total perturbations $s\cdot|\w|$,~\emph{i.e.}, $\alpha=N_{drop}/\left((1-s)\cdot|\w|\right)$. In particular, a larger drop rate means that more combinations of binary mask can be explored during optimization, which, however, may slightly interfere the optimization process. Following \cite{rig, sparse-cosine}, we apply a cosine decay scheduler to alleviate this problem:
\begin{equation}
    f_{decay}(t;\alpha) = \frac{\alpha}{2}\left( 1+\cos \left( {t\pi}{/}{T} \right)  \right),
\end{equation}
where $T$ denotes number of training epochs. The algorithm of SSAM-D is depicted in ~\cref{alg:sparse-sam-generate-mask}.

\subsection{Masking with different patterns and explicit or implicit implementation of sparse perturbation}
In this subsection, we first discuss the pattern of mask,~\emph{e.g.}, \emph{unstructured} mask, \emph{structured} mask and \emph{N:M structured} mask. Then we discuss the position to be masked.

\textbf{Different patterns.} Similar to model pruning, the mask has different constraints to achieve acceleration. As shown in figure~\ref{mask_pattern}, we illustrate the diagram of different patterns of masks including \emph{unstructured} mask, \emph{structured} mask, and \emph{2:4 structured} mask. An unstructured mask has no requirements for the elements, which is more flexible and common-case in sparse training. Nevertheless, the unstructured pattern is not hardware-friendly and it can not be calculated rapidly. A structured mask normally appears in the convolution operator, where the unit to be masked is the filter. Structured patterns request a whole row or column or filter be masked together, which could achieve a true acceleration. $N$: $M$ structured pattern requires M out of every N consecutive elements in the matrix are 0,~\emph{e.g.}, 2:4 pattern shown in~\figref{mask_pattern}.

\textbf{Explicit and implicit implementation.}
We first briefly review backpropagation. Take the linear layer $\boldsymbol{A}_{n\times m}$ in a neural network as an example,~\emph{i.e.},
\begin{align}
\label{forward}
    \boldsymbol{y}_{n\times 1} = \boldsymbol{A}_{n\times m} \boldsymbol{x}_{m \times 1}.
\end{align}
where $\boldsymbol{x}$ is the output of the last layer and $\boldsymbol{y}$ would be the input of the next layer. At the backpropagation step, it calculates the derivative of loss $f$ with respect to each of these terms
\begin{align}
    & \frac{\partial f}{\partial \boldsymbol{A}}_{n\times m} =   \frac{\partial f}{\partial \boldsymbol{y}}_{n\times 1} \boldsymbol{x}^T_{1\times m}
    \\
    & \frac{\partial f}{\partial \boldsymbol{x}}_{m\times 1} =   \boldsymbol{A}^T_{m\times n} \frac{\partial f}{\partial \boldsymbol{y}}_{n\times 1}  \label{backward_intermediate}
\end{align}
The first term $\frac{\partial f}{\partial \boldsymbol{A}}$ is the well-known gradient of parameters,~\emph{i.e.}, $\nabla f(\w)$, which directly participates in the step of updating the model parameters. Intuitively, the sparse mask on gradient $\frac{\partial f}{\partial \boldsymbol{A}}$ could directly access sparse perturbation shown in figure~\ref{train_flow}. We can formalize it explicitly
\begin{align}
    \boldsymbol{\epsilon} \propto \frac{\partial f}{\partial \boldsymbol{A}} \odot  \m = \left(  \frac{\partial f}{\partial \boldsymbol{y}}\boldsymbol{x}^T\right) \odot  \m
\end{align}
which is mainly used in our method and subsequent experiments. 

The second one $\frac{\partial f}{\partial \boldsymbol{x}}$ is the intermediate gradient for previous layer, which dominates most of the time in back propagation. Therefore, to achieve actually accelerated SSAM, we try to apply the mask on the intermediate gradient in backward propagation, shown in figure~\ref{train_flow}, which could be considered as implicit sparse perturbation. We also conduct the corresponding experiments in following section.

\begin{table*}[ht]
\centering
\caption{Test accuracy of ResNet on CIFAR with unstructured Sparse SAM.}
\label[table]{resnet_cifar}
\begin{tabular}{ccccccc}
\toprule
Model & Optimizer & Sparsity & \multicolumn{2}{c}{CIFAR10} & \multicolumn{2}{c}{CIFAR100} \\ \hline
\multirow{9}{*}{ResNet18} & SGD & / & \multicolumn{2}{c}{96.07\%} & \multicolumn{2}{c}{77.80\%} \\
 & SAM & 0\% & \multicolumn{2}{c}{96.83\%} & \multicolumn{2}{c}{81.03\%}  \\ \cline{2-7} 
 &  &  & SSAM-F & SSAM-D & SSAM-F & SSAM-D  \\ \cline{4-7} 
 & \multirow{6}{*}{SSAM(Ours)} & 50\% & \textbf{96.81\% \textcolor{blue}{(-0.02)}} & \textbf{96.87\% \textcolor{blue}{(+0.04)}} & \textbf{81.24\% \textcolor{blue}{(+0.21)}} & \textbf{80.59\% \textcolor{blue}{(-0.44)}}  \\
 &  & 80\% & 96.64\% \textcolor{blue}{(-0.19)} & 96.76\% \textcolor{blue}{(-0.07)} & 80.47\%  \textcolor{blue}{(-0.56)} & 80.43\% \textcolor{blue}{(-0.60)} \\
 &  & 90\% & 96.75\% \textcolor{blue}{(-0.08)} & 96.67\% \textcolor{blue}{(-0.16)} & 80.02\%  \textcolor{blue}{(-1.01)} & 80.39\% \textcolor{blue}{(-0.64)} \\
 &  & 95\% & 96.66\% \textcolor{blue}{(-0.17)} & 96.56\% \textcolor{blue}{(-0.27)} & 80.50\%  \textcolor{blue}{(-0.53)} & 79.79\% \textcolor{blue}{(-1.24)} \\
 &  & 98\% & 96.55\% \textcolor{blue}{(-0.28)} & 96.61\% \textcolor{blue}{(-0.22)} & 80.09\%  \textcolor{blue}{(-0.94)} & 79.79\% \textcolor{blue}{(-1.24)}  \\
 &  & 99\% & 96.52\% \textcolor{blue}{(-0.31)} & 96.59\% \textcolor{blue}{(-0.24)} & 80.07\% \textcolor{blue}{(-0.96)} & 79.61\% \textcolor{blue}{(-1.42)} \\ 
 \hline
\multirow{9}{*}{WideResNet28-10} & SGD & / & \multicolumn{2}{c}{97.11\%} & \multicolumn{2}{c}{81.93\%} \\
 & SAM & 0\% & \multicolumn{2}{c}{97.48\%} & \multicolumn{2}{c}{84.20\%} \\ \cline{2-7} 
 &  &  & SSAM-F & SSAM-D & SSAM-F & SSAM-D   \\ \cline{4-7}
 & \multirow{6}{*}{SSAM(Ours)} & 50\% & \textbf{97.71\% \textcolor{blue}{(+0.23)}} & 97.70\% \textcolor{blue}{(+0.22)} & \textbf{85.16\% \textcolor{blue}{(+0.96)}} & \textbf{84.99\% \textcolor{blue}{(+0.79)}} \\
 &  & 80\% & 97.67\% \textcolor{blue}{(+0.19)} & \textbf{97.72\% \textcolor{blue}{(+0.24)}} & 84.57\%  \textcolor{blue}{(+0.37)} & 84.36\% \textcolor{blue}{(+0.16)}  \\
 &  & 90\% & 97.47\% \textcolor{blue}{(-0.01)} & 97.53\% \textcolor{blue}{(+0.05)} & 84.76\%  \textcolor{blue}{(+0.56)} & 84.16\% \textcolor{blue}{(-0.04)} \\
 &  & 95\% & 97.42\% \textcolor{blue}{(-0.06)} & 97.52\% \textcolor{blue}{(+0.04)} & 84.17\%  \textcolor{blue}{(-0.03)} & 83.66\% \textcolor{blue}{(-0.54)}  \\
 &  & 98\% & 97.32\% \textcolor{blue}{(-0.16)} & 97.30\% \textcolor{blue}{(-0.18)} & 83.85\%  \textcolor{blue}{(-0.35)} & 83.30\% \textcolor{blue}{(-0.90)}  \\
 &  & 99\% & 97.59\% \textcolor{blue}{(+0.11)} & 97.27\% \textcolor{blue}{(-0.25)} & 84.00\% \textcolor{blue}{(-0.20)} & 84.16\% \textcolor{blue}{(-0.04)}  \\ 
 \bottomrule
\end{tabular}
\end{table*}

\subsection{Theoretical analysis of Sparse SAM}
\label[section]{sec:method:theoretical}
In the following, we analyze the convergence of SAM and SSAM in non-convex stochastic setting. Before introducing the main theorem, we first describe the following assumptions that are commonly used for characterizing the convergence of nonconvex stochastic optimization \cite{con-adv, surrogate, understand-sam,chen2022towards,bottou2018optimization,ghadimi2013stochastic}.
\begin{assumption}
\label{assume:bounded-gradient}
(Bounded Gradient.) It exists non-negative real number $G \geq 0$ s.t. $||\nabla f(\w)|| \leq G$.
\end{assumption}

\begin{assumption}
\label{assume:bounded-variance}
(Bounded Variance.) It exists non-negative real number $\sigma \geq 0$ s.t. $\E [||g(\w) - \nabla f(\w)||^2] \leq \sigma^2$.
\end{assumption}

\begin{assumption}
\label{assume:l-smoothness}
(L-smoothness.) It exists non-negative real number $L > 0 $ s.t. $||\nabla f(\w) - \nabla f(\boldsymbol{v})|| \leq L||\w - \boldsymbol{v}|| $, $\forall  \w, \boldsymbol{v} \in \mathbb{R}^d$.
\end{assumption}

\begin{theorem}
\label{theorem:sam}
Consider function $f(\w)$ satisfying the Assumptions
\ref{assume:bounded-gradient}-\ref{assume:l-smoothness} optimized by SAM. Let $\eta_t = \frac{\eta_0}{\sqrt{t}}$ and perturbation amplitude $\rho$ decay with square root of $t$, \emph{e.g.}, $\rho_t=\frac{\rho_0}{\sqrt{t}}$. With $\rho_0 \leq \frac{1}{2}G \eta_0$, we have
\begin{equation}
    \frac{1}{T} \sum_{t=0}^T \E ||\nabla f(\w_t)||^2 \leq C_1 \frac{1}{\sqrt{T}} + C_2\frac{\log T}{\sqrt{T}},
\end{equation}
where $C_1=\frac{2}{\eta_0}(f(\w_0)-\E f(\w_T))$ and $C_2=2(L \sigma^2 \eta_0 + LG\rho_0)$. 
\end{theorem}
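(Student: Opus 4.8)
The plan is to run a standard descent-lemma argument for nonconvex stochastic optimization, adapted to the two-step SAM update $\w_{t+1} = \w_t - \eta_t\, g(\w_t + \boldsymbol{\epsilon}_t)$ with $\boldsymbol{\epsilon}_t = \rho_t\, g(\w_t)/\|g(\w_t)\|$, so that $\|\boldsymbol{\epsilon}_t\| = \rho_t$. First I would invoke $L$-smoothness (Assumption~\ref{assume:l-smoothness}) in its descent form $f(\w_{t+1}) \le f(\w_t) + \langle \nabla f(\w_t), \w_{t+1}-\w_t\rangle + \frac{L}{2}\|\w_{t+1}-\w_t\|^2$ and substitute the update to obtain $f(\w_{t+1}) \le f(\w_t) - \eta_t\langle \nabla f(\w_t), g(\w_t+\boldsymbol{\epsilon}_t)\rangle + \frac{L\eta_t^2}{2}\|g(\w_t+\boldsymbol{\epsilon}_t)\|^2$. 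I would then take expectation conditioned on $\w_t$ and on the randomness defining $\boldsymbol{\epsilon}_t$, using conditional unbiasedness of the stochastic gradient at the perturbed point.

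The crux is the cross term. I would write $\nabla f(\w_t + \boldsymbol{\epsilon}_t) = \nabla f(\w_t) + \big(\nabla f(\w_t+\boldsymbol{\epsilon}_t) - \nabla f(\w_t)\big)$, so that $\langle \nabla f(\w_t), \nabla f(\w_t+\boldsymbol{\epsilon}_t)\rangle \ge \|\nabla f(\w_t)\|^2 - \|\nabla f(\w_t)\|\cdot L\|\boldsymbol{\epsilon}_t\|$, and bound the second piece by $GL\rho_t$ via Assumption~\ref{assume:bounded-gradient} and $L$-smoothness. For the quadratic term I would use Assumption~\ref{assume:bounded-variance} to split $\E\|g(\w_t+\boldsymbol{\epsilon}_t)\|^2 \le \sigma^2 + \|\nabla f(\w_t+\boldsymbol{\epsilon}_t)\|^2$ and again bound $\|\nabla f(\w_t+\boldsymbol{\epsilon}_t)\|$ in terms of $\|\nabla f(\w_t)\|$ and $L\rho_t$. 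Moving the resulting $\frac{L\eta_t^2}{2}\|\nabla f(\w_t)\|^2$ contribution to the left-hand side leaves a coefficient of at least $\tfrac{\eta_t}{2}$ on $\E\|\nabla f(\w_t)\|^2$ (since the decaying step size makes $L\eta_t\le 1$ for all but finitely many $t$), which is the origin of the factors of $2$ in $C_1$ and $C_2$; the leftover perturbation terms are of order $\eta_t\rho_t$ and $\eta_t^2$, and here the hypothesis $\rho_0 \le \tfrac12 G\eta_0$, equivalently $\rho_t \le \tfrac12 G\eta_t$, is what lets the higher-order $\rho_t^2$ contributions be absorbed into the stated constants.

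After rearranging, each step yields $\frac{\eta_t}{2}\E\|\nabla f(\w_t)\|^2 \le \E[f(\w_t)-f(\w_{t+1})] + \frac1t\big(LG\eta_0\rho_0 + \tfrac12 L\eta_0^2\sigma^2 + \cdots\big)$, where substituting $\eta_t=\eta_0/\sqrt t$ and $\rho_t=\rho_0/\sqrt t$ makes the products $\eta_t\rho_t$ and $\eta_t^2$ both scale like $1/t$. Summing $t=0,\dots,T$, the function-value differences telescope to $f(\w_0)-\E f(\w_T)$, the error terms sum via the harmonic bound $\sum_{t=1}^T \tfrac1t \le 1+\log T$, and lower-bounding $\eta_t \ge \eta_T = \eta_0/\sqrt T$ on the left lets me factor out $\tfrac{\eta_0}{2\sqrt T}$. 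Dividing both sides by $\tfrac{\eta_0}{2\sqrt T}\,T = \tfrac{\eta_0\sqrt T}{2}$ produces exactly the $\tfrac{1}{\sqrt T}$ term with $C_1=\tfrac{2}{\eta_0}(f(\w_0)-\E f(\w_T))$ and the $\tfrac{\log T}{\sqrt T}$ term with $C_2=2(L\sigma^2\eta_0 + LG\rho_0)$.

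I expect the main obstacle to be the stochastic bookkeeping in the cross term rather than any single inequality: because $\boldsymbol{\epsilon}_t$ is itself a function of a stochastic gradient, one must fix the filtration carefully and argue that $g(\w_t+\boldsymbol{\epsilon}_t)$ is conditionally unbiased for $\nabla f(\w_t+\boldsymbol{\epsilon}_t)$ given that filtration. Getting the constant $C_2$ to come out cleanly, with no residual $G^2$ term, also hinges on using $\rho_0\le\tfrac12 G\eta_0$ to merge the perturbation and variance contributions, which is the only place the scale condition on $\rho_0$ is genuinely needed.
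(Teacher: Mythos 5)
Your overall architecture matches the paper's proof: a descent-lemma step from $L$-smoothness, a lower bound on the cross term of the form $\|\nabla f(\w_t)\|^2 - LG\rho_t$, a second-moment bound on the perturbed stochastic gradient via Assumption~\ref{assume:bounded-variance} plus smoothness, telescoping under the schedules $\eta_t=\eta_0/\sqrt{t}$, $\rho_t=\rho_0/\sqrt{t}$, the harmonic bound $\sum_{t\le T}1/t\le 1+\log T$, and the lower bound $\eta_t\ge\eta_0/\sqrt{T}$ to extract the $1/\sqrt{T}$ and $\log T/\sqrt{T}$ rates. You also correctly identify the role of $\rho_0\le\tfrac12 G\eta_0$: in the paper it makes the coefficient of the summable $\sum_t t^{-3/2}$ terms (which come from the $\eta_t\rho_t^2$ and $\eta_t^2\rho_t$ contributions) non-positive, so they drop out entirely rather than polluting $C_1$ or $C_2$.

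The one step where your plan as written would not go through is exactly the one you flag as the crux. The perturbation is $\boldsymbol{\epsilon}_t=\rho_t\, g(\w_t)/\|g(\w_t)\|$, built from the \emph{same} stochastic gradient, so $g(\w_t+\boldsymbol{\epsilon}_t)$ is \emph{not} conditionally unbiased for $\nabla f(\w_t+\boldsymbol{\epsilon}_t)$ given the filtration at $\w_t$: the noise in the perturbation direction and the noise in the second gradient evaluation are correlated, and no choice of filtration repairs this. The paper instead introduces the deterministic perturbed point $\hat{\w}_{t+\frac12}=\w_t+\rho_t\,\nabla f(\w_t)/\|\nabla f(\w_t)\|$, writes $\langle\nabla f(\w_t),g(\w_{t+\frac12})\rangle=\langle\nabla f(\w_t),g(\w_{t+\frac12})-g(\hat{\w}_{t+\frac12})\rangle+\langle\nabla f(\w_t),g(\hat{\w}_{t+\frac12})\rangle$, takes expectation only on the second term (where unbiasedness is legitimate), and controls the first via Young's inequality and the Lipschitz bound $\|g(\w_{t+\frac12})-g(\hat{\w}_{t+\frac12})\|\le L\rho_t\|g(\w_t)/\|g(\w_t)\|-\nabla f(\w_t)/\|\nabla f(\w_t)\|\|\le 2L\rho_t$. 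This costs an extra $\tfrac12\|\nabla f(\w_t)\|^2+2L^2\rho_t^2$ (hence the final coefficient $\tfrac{\eta_t}{2}$ on the gradient norm, and another $t^{-3/2}$ term absorbed by the $\rho_0$ condition), and it implicitly uses $L$-Lipschitzness of the stochastic gradient $g$ itself, slightly beyond Assumption~\ref{assume:l-smoothness} as stated. Your sketch needs this detour (or an equivalent decoupling) to be a complete proof; with it, the rest of your bookkeeping delivers the stated $C_1$ and $C_2$. A cosmetic point: the sum should start at $t=1$, since $\eta_0/\sqrt{t}$ is undefined at $t=0$, as in the paper's Proposition~\ref{pro:1}.
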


\begin{theorem}
\label{theorem:ssam}
Consider function $f(\w)$ satisfying the Assumptions
\ref{assume:bounded-gradient}-\ref{assume:l-smoothness} optimized by SSAM. Let $\eta_t=\frac{\eta_0}{\sqrt{t}}$ and perturbation amplitude $\rho$ decay with square root of $t$, \emph{e.g.}, $\rho_t=\frac{\rho_0}{\sqrt{t}}$. With $\rho_0 \leq G\eta_0/5$, we have:
\begin{equation}
    \frac{1}{T}\sum_{t=0}^T \E||\nabla f(\w_t)||^2 \leq
    C_3\frac{1}{\sqrt{T}} + C_4\frac{\log T}{\sqrt{T}},
\end{equation}
where $C_3=\frac{2}{\eta_0}(f(\w_0)-\E f(\w_T)+L^3\eta_0^2\rho_0^2\frac{\pi^2}{6})$ and $C_4=2(L\sigma^2\eta_0+LG\rho_0)$.
\end{theorem}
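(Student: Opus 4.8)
The plan is to mirror the proof of Theorem~\ref{theorem:sam}, isolating the extra error that the sparse mask $\m$ injects into each descent step. Writing the SSAM iteration as $\w_{t+1} = \w_t - \eta_t\, g(\w_t + \boldsymbol{\epsilon}_t)$ with masked perturbation $\boldsymbol{\epsilon}_t = \rho_t\,(\nabla f(\w_t)/\|\nabla f(\w_t)\|)\odot\m_t$, I would begin from the $L$-smoothness descent inequality (Assumption~\ref{assume:l-smoothness})
\begin{equation*}
f(\w_{t+1}) \le f(\w_t) + \langle \nabla f(\w_t),\, \w_{t+1}-\w_t\rangle + \tfrac{L}{2}\|\w_{t+1}-\w_t\|^2,
\end{equation*}
substitute the update, and take expectation over the stochastic gradient, using $\E[g(\cdot)]=\nabla f(\cdot)$ together with Assumption~\ref{assume:bounded-variance} to bound $\E\|g(\w_t+\boldsymbol{\epsilon}_t)\|^2 \le \sigma^2 + \|\nabla f(\w_t+\boldsymbol{\epsilon}_t)\|^2$.

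Two facts about the mask drive the remaining analysis. First, since $\m_t\in\{0,1\}^d$, masking can only shrink the perturbation, so $\|\boldsymbol{\epsilon}_t\|\le\rho_t$ exactly as in SAM; combined with Assumption~\ref{assume:l-smoothness} this gives $\|\nabla f(\w_t+\boldsymbol{\epsilon}_t)-\nabla f(\w_t)\|\le L\rho_t$, which controls the cross term $\langle\nabla f(\w_t),\nabla f(\w_t+\boldsymbol{\epsilon}_t)\rangle \ge \|\nabla f(\w_t)\|^2 - LG\rho_t$ via Assumption~\ref{assume:bounded-gradient}. Second, the masked direction is no longer collinear with $\nabla f(\w_t)$, so I cannot use the tighter first-order estimate available for SAM and must instead retain the full second-order remainder $\|\nabla f(\w_t+\boldsymbol{\epsilon}_t)\|^2 \le (\|\nabla f(\w_t)\| + L\rho_t)^2$. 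Its $L^2\rho_t^2$ piece, carried by the quadratic term $\tfrac{L\eta_t^2}{2}\|\cdot\|^2$, is precisely the source of the extra $L^3\eta_t^2\rho_t^2$ contribution that is absent from Theorem~\ref{theorem:sam}. Collecting terms yields a per-step bound of the schematic form
\begin{equation*}
\tfrac{\eta_t}{2}\E\|\nabla f(\w_t)\|^2 \le \E f(\w_t) - \E f(\w_{t+1}) + \eta_t LG\rho_t + \tfrac{L\eta_t^2}{2}\sigma^2 + L^3\eta_t^2\rho_t^2,
\end{equation*}
valid once the coefficient of $\|\nabla f(\w_t)\|^2$ is pushed below $-\eta_t/2$; verifying this inequality after absorbing the enlarged remainder is exactly where the strengthened radius condition $\rho_0\le G\eta_0/5$ is consumed, in place of SAM's $\rho_0\le\tfrac12 G\eta_0$.

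Finally I would sum from $t=0$ to $T$ and insert the schedules $\eta_t=\eta_0/\sqrt t$ and $\rho_t=\rho_0/\sqrt t$. The loss differences telescope to $f(\w_0)-\E f(\w_T)$; the terms $\sum_t \eta_t LG\rho_t = LG\eta_0\rho_0\sum_t 1/t$ and $\sum_t \tfrac{L\eta_t^2}{2}\sigma^2$ each scale like $\log T$ and assemble into the $C_4\,\log T/\sqrt T$ part, while the new term $\sum_t L^3\eta_t^2\rho_t^2 = L^3\eta_0^2\rho_0^2\sum_t 1/t^2$ \emph{converges} to $L^3\eta_0^2\rho_0^2\tfrac{\pi^2}{6}$ and folds into $C_3$. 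Dividing through by $\tfrac12\eta_T T$ and using $\eta_T=\eta_0/\sqrt T$ then produces the claimed rate with the stated constants.

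I expect the main obstacle to be the mask-induced misalignment and the verification that the second-order remainder it creates is summable rather than merely bounded. The key conceptual point, which I would emphasize, is that one need not fight a worst-case arbitrary mask: because $\|\boldsymbol{\epsilon}_t\|\le\rho_t$ holds for \emph{every} $\m_t$, the mask never enlarges the perturbation, so the entire additional penalty reduces to the finite $\pi^2/6$ series. This is what allows SSAM to inherit SAM's $O(\log T/\sqrt T)$ convergence rate despite sparsifying the perturbation.
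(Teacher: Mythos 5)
Your proposal is sound and arrives at the right rate, but it is organized differently from the paper's proof. The paper keeps the SAM analysis intact and treats the mask as a perturbation of it: it introduces the error vector $\boldsymbol{e}_t=\w_{t+\frac{1}{2}}-\tilde{\w}_{t+\frac{1}{2}}$ between the unmasked and masked perturbed iterates, carries $\|\boldsymbol{e}_t\|^2$ through analogues of the SAM lemmas (its Lemma 4 bounds $\E\langle\nabla f(\w_t),g(\tilde{\w}_{t+\frac12})\rangle\ge\frac12\|\nabla f(\w_t)\|^2-4L^2\rho^2-L\rho G-L^2\|\boldsymbol{e}_t\|^2$ by adding and subtracting the gradient at the \emph{unmasked} deterministic point), and only at the very end bounds $\|\boldsymbol{e}_t\|\le\rho_t$, producing the extra $(1+L\eta_t)\eta_t L^2\rho_t^2$ term whose $t^{-2}$ part sums to the $\pi^2/6$ constant. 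You instead discard the SAM machinery and treat the masked perturbation as an arbitrary vector with $\|\boldsymbol{\epsilon}_t\|\le\rho_t$, controlling the cross term by Cauchy--Schwarz plus $L$-smoothness and the gradient bound; this is more elementary and avoids the collinearity-dependent Lemma 1 entirely, at the cost of a slightly looser second-order remainder. Both arguments ultimately rest on the same fact — masking never enlarges the perturbation — so your route is a legitimate alternative.

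Two inaccuracies are worth flagging. First, you misplace where $\rho_0\le G\eta_0/5$ is consumed: the coefficient of $\|\nabla f(\w_t)\|^2$ is pushed below $-\eta_t/2$ by the step-size condition $\eta_0\le 1/L$ alone (which you should state); in the paper the radius condition is used after telescoping, to make the coefficient $L^2\rho_0\eta_0(5\rho_0-G\eta_0)$ of $\sum_t t^{-3/2}$ non-positive so that only the $\pi^2/6$ series survives and $C_3$ comes out exactly as stated. Your accounting leaves additional summable $t^{-3/2}$ terms (e.g.\ $L^2G\eta_0^2\rho_0\sum_t t^{-3/2}$), so as sketched you would prove the same $O(\log T/\sqrt{T})$ rate but with a constant $C_3$ that does not match the theorem; you need the radius condition to cancel those terms as the paper does. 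Second, you define the perturbation direction via the true gradient $\nabla f(\w_t)$, whereas the algorithm (and the paper's iteration) uses the stochastic direction $g(\w_t)/\|g(\w_t)\|$; the paper explicitly pays for this mismatch through the $\|g/\|g\|-\nabla f/\|\nabla f\|\|^2\le 4$ bound. Since your estimates only use $\|\boldsymbol{\epsilon}_t\|\le\rho_t$, the argument survives, but you should say so rather than silently analyzing the idealized variant.
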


For non-convex stochastic optimization, Theorems \ref{theorem:sam}\&\ref{theorem:ssam} imply that our SSAM could converge the same rate as SAM,~\emph{i.e.}, $O(\log T/\sqrt{T})$. Detailed proofs of the two theorems are given in \textbf{Supplementary Materials}.

\begin{theorem}
\label{generalization}
For any $\rho>0$ and any distribution $\mathscr{D}$, with probability $1-\delta$ over the choice of the training set $\mathcal{S}\sim \mathscr{D}$, 
\begin{align}
    &L_\mathscr{D}(
    w) \leq \max_{\|\boldsymbol{\epsilon}\|_2 \leq \rho} L_\mathcal{S}(\boldsymbol{w} + \boldsymbol{\epsilon})
    + \notag
    \\
    &\sqrt{\frac{k\log\left(1+\frac{\|\boldsymbol{w}\|_2^2}{\rho^2}\left(1+\sqrt{\frac{\log(n)}{k}}\right)^2\right) + 4\log\frac{n}{\delta} + \tilde{O}(1)}{n-1}}
\end{align}
where $n=|\mathcal{S}|$, $k$ is the number of parameters and we assumed $L_\mathscr{D}(\w) \leq \mathbb{E}_{\boldsymbol{\epsilon}\odot\boldsymbol{m} \sim \mathcal{N}(0,\boldsymbol{\rho})}[L_\mathscr{D}(\w+\boldsymbol{\epsilon}\odot\boldsymbol{m})]$.
\end{theorem}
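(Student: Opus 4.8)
The plan is to derive this bound as a PAC-Bayesian generalization guarantee, closely following the template of the SAM analysis of Foret~\emph{et al.}~\cite{sam}, with the perturbation confined to the subspace selected by the mask $\m$. First I would invoke the McAllester PAC-Bayes theorem: for any prior $P$ over weights fixed independently of $\mathcal{S}$ and any posterior $Q$, with probability at least $1-\delta$,
\begin{equation*}
\E_{\boldsymbol{v}\sim Q}[L_\mathscr{D}(\boldsymbol{v})] \leq \E_{\boldsymbol{v}\sim Q}[L_\mathcal{S}(\boldsymbol{v})] + \sqrt{\frac{\mathrm{KL}(Q\|P) + \log\frac{n}{\delta}}{2(n-1)}}.
\end{equation*}
I would then take the posterior $Q=\mathcal{N}(\w,\sigma^2\,\mathrm{diag}(\m))$ supported on the $k$ perturbed coordinates and a zero-mean prior $P=\mathcal{N}(\boldsymbol{0},\sigma_P^2\,\mathrm{diag}(\m))$ on the same subspace, so that a posterior draw $\boldsymbol{v}-\w$ coincides in law with the masked perturbation $\boldsymbol{\epsilon}\odot\m$ appearing in the stated assumption.

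Next, I would compute $\mathrm{KL}(Q\|P)=\frac{1}{2}\big[k\log(\sigma_P^2/\sigma^2)-k+(k\sigma^2+\|\w\|_2^2)/\sigma_P^2\big]$, where the effective dimension is the number $k$ of masked coordinates rather than the ambient dimension $d$; this is precisely where the sparse perturbation enters the bound. Because the variance $\sigma_P$ that minimizes this expression depends on the unknown $\|\w\|_2$, I would run a union bound over a predetermined geometric grid of candidate values of $\sigma_P$, replacing $\delta$ by $\delta$ divided by the grid size and absorbing the logarithmic overhead into the $4\log\frac{n}{\delta}$ and $\tilde{O}(1)$ terms. Calibrating $\sigma$ so that the typical perturbation radius $\sigma\sqrt{k}$ is a fixed fraction of $\rho$ then yields the factor $\big(1+\sqrt{\log(n)/k}\big)^2$ and the leading term $k\log\big(1+\tfrac{\|\w\|_2^2}{\rho^2}(\cdots)\big)$.

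I would then convert both expectations. On the empirical side, a $\chi^2$ tail bound shows that a draw $\boldsymbol{\epsilon}\odot\m\sim\mathcal{N}(\boldsymbol{0},\sigma^2\,\mathrm{diag}(\m))$ lies in the ball $\|\boldsymbol{\epsilon}\odot\m\|_2\leq\rho$ with high probability once $\sigma\sqrt{k}$ is slightly below $\rho$; splitting the expectation over this event and its complement (bounding the loss by its maximum on the complement) gives $\E_{\boldsymbol{v}\sim Q}[L_\mathcal{S}(\boldsymbol{v})]\leq\max_{\|\boldsymbol{\epsilon}\|_2\leq\rho}L_\mathcal{S}(\w+\boldsymbol{\epsilon})+\tilde{O}(1/\sqrt{n})$. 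On the population side, the assumption $L_\mathscr{D}(\w)\leq\E_{\boldsymbol{\epsilon}\odot\m}[L_\mathscr{D}(\w+\boldsymbol{\epsilon}\odot\m)]=\E_{\boldsymbol{v}\sim Q}[L_\mathscr{D}(\boldsymbol{v})]$ upgrades the left-hand side of the PAC-Bayes inequality from the posterior average to the deterministic quantity $L_\mathscr{D}(\w)$ that the theorem targets. Chaining these estimates with the PAC-Bayes inequality produces the claimed bound.

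The main obstacle I anticipate is making the gap between the Gaussian expectation $\E_{\boldsymbol{v}\sim Q}[L_\mathcal{S}(\boldsymbol{v})]$ and the worst-case maximum $\max_{\|\boldsymbol{\epsilon}\|_2\leq\rho}L_\mathcal{S}(\w+\boldsymbol{\epsilon})$ small enough that the residual collapses into $\tilde{O}(1)$: the concentration radius $\sigma\sqrt{k}$ must sit just inside $\rho$, yet $\sigma$ must stay large enough to keep $\mathrm{KL}(Q\|P)$ controlled, so these competing requirements have to be balanced through the grid choice of $\sigma_P$. The accompanying delicate bookkeeping is to verify that the sparsity, i.e., the reduced dimension $k$, propagates consistently through both the $\chi^2$ tail and the KL term without reintroducing a hidden dependence on the full dimension $d$.
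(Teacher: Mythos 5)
Your overall skeleton matches the paper's proof (which itself follows Foret~\emph{et al.}~\cite{sam}): a McAllester-type PAC-Bayes bound, an explicit Gaussian KL computation, a union bound over a predefined geometric grid of prior variances to emulate the data-dependent optimal $\sigma_P$, a $\chi^2$ tail bound to pass from the Gaussian expectation of $L_\mathcal{S}$ to the worst case over the $\rho$-ball, and the theorem's assumption to upgrade the posterior-averaged population loss to $L_\mathscr{D}(\w)$. However, the one place where you deviate is also where the argument breaks. You propose posterior $Q=\mathcal{N}(\w,\sigma^2\,\mathrm{diag}(\m))$ and prior $P=\mathcal{N}(\boldsymbol{0},\sigma_P^2\,\mathrm{diag}(\m))$, both degenerate and supported on the $k$ masked coordinates. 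But the support of $Q$ is the affine subspace obtained by translating the masked coordinate subspace by the \emph{unmasked} part of $\w$, which is a data-dependent, deterministic offset; the support of $P$ is the same linear subspace through the origin. Unless the unmasked coordinates of $\w$ vanish, $Q$ is not absolutely continuous with respect to $P$ and $\mathrm{KL}(Q\|P)=+\infty$. Widening the prior to full support does not help either: then the posterior is a point mass in the unmasked directions against a continuous prior, and the KL is again infinite. Since both the mask and the unmasked coordinates of $\w$ depend on $\mathcal{S}$, you cannot bake them into a legal prior. This is exactly why the paper keeps \emph{both} $\mathscr{P}$ and $\mathscr{Q}$ as full isotropic Gaussians over all $k$ parameters (in the theorem, $k$ is the total number of parameters, not the number of perturbed coordinates, so your reading of $k$ also produces a different bound from the one stated), and lets the mask enter only at the very end, through the trivial inequality $\|\m\odot\boldsymbol{\epsilon}\|_2\leq\|\boldsymbol{\epsilon}\|_2$ inside the $\chi^2$ step and through the modified assumption $L_\mathscr{D}(\w)\leq\E[L_\mathscr{D}(\w+\boldsymbol{\epsilon}\odot\m)]$.

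To repair your plan with minimal changes: keep your grid/union-bound and $\chi^2$ machinery, but take $Q=\mathcal{N}(\w,\sigma^2 \boldsymbol{I})$ and $P=\mathcal{N}(\boldsymbol{0},\sigma_P^2\boldsymbol{I})$ over the full parameter space with $\sigma_Q=\rho$, compute the KL with the ambient dimension, and observe that the event $\{\|\boldsymbol{\epsilon}\|_2\leq\rho\}$ automatically implies $\{\|\m\odot\boldsymbol{\epsilon}\|_2\leq\rho\}$, which is all that is needed to connect the Gaussian expectation to the sharpness term and to invoke the masked assumption on the population side. The price is that the dimension appearing in the bound is the full parameter count rather than the sparse one, which is what the theorem actually claims.
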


Theorem~\ref{generalization} indicates SSAM has a generalization error upbound and Detailed proof could be found in \textbf{Supplementary Materials}.
\section{Experiments}
\label[section]{sec:experiments}

In this section, we evaluate the effectiveness of  SSAM through extensive experiments on CIFAR\cite{cifar10/100} and ImageNet-1K~\cite{imagenet}. The base models include conventional convolutional network,~\emph{e.g.}, VGG~\cite{vgg} , ResNet~\cite{resnet}, WideResNet~\cite{wideresnet} and vision transformer~\cite{vit, deit}. We evaluate different masking structures,~\emph{i.e.}, unstructured, structured and $N$:$M$ structured, in Sec.~\ref{sec:unstructure},  Sec.~\ref{sec:structure} and Sec.~\ref{sec:n_mstructure_experiment}, respectively.


\subsection{Implementation details}
\label{sec:experiments-implement-detail}
\textbf{Datasets.} 
We use CIFAR10, CIFAR100~\cite{cifar10/100} and ImageNet-1K~\cite{imagenet} as the benchmarks of our method. Specifically, CIFAR10 and CIFAR100 have 50,000 images of 32$\times$32 resolution for training, while 10,000 images for testing. ImageNet-1K~\cite{imagenet} is the most widely used benchmark for image classification, which has 1,281,167 images of 1000 classes and 50,000 images for validation. 

\textbf{Hyper-parameter setting.} 
For small resolution datasets,~\emph{i.e.}, CIFAR10 and CIFAR100, we replace the first convolution layer in ResNet and WideResNet with the one of 3$\times$3 kernel size, 1 stride and 1 padding. The models on CIFAR10/CIFAR100 are trained with 128 batch size for 200 epochs. We apply the random crop, random horizontal flip, normalization and cutout~\cite{cutout} for data augmentation, and the initial learning rate is 0.05 with a cosine learning rate schedule. The momentum and weight decay of SGD are set to 0.9 and 5\emph{e}-4, respectively. SAM and SSAM apply the same settings, except that weight decay is set to 0.001 \cite{esam}. We determine the perturbation magnitude $\rho$ from $\{0.01, 0.02, 0.05, 0.1, 0.2, 0.5\}$ via grid search. Without specifically stating, we set $\rho$ as 0.1 and 0.2 in CIFAR10 and CIFAR100, respectively. 
For ImageNet-1K, we randomly resize and crop all images to a resolution of 224$\times$224, and apply random horizontal flip, normalization  during training. We train ResNet with a batch size of 256, and adopt the cosine learning rate schedule with initial learning rate 0.1.  The momentum and weight decay of SGD is set as 0.9 and 1\emph{e}-4. SAM and SSAM use the same settings as above. The test images of both architectures are resized to 256$\times$256 and then centerly cropped to 224$\times$224. The perturbation magnitude $\rho$ is set to 0.07.

\subsection{Experiments of \emph{unstructured} masking}
\label{sec:unstructure}
In this subsection, the perturbation is masked without structural constraint,~shown in the 2-th subfigure in~\ref{mask_pattern}. 
The sparse perturbation is implemented explicitly, and the experiments of SSAM include SSAM-F and SSAM-D.

\textbf{Results on CIFAR10/CIFAR100.}\
We first evaluate our SSAM on CIFAR-10 and CIFAR100. The models we used are VGG~\cite{vgg}, ResNet-18~\cite{resnet} and WideResNet-28-10~\cite{wideresnet}. 
The perturbation magnitude $\rho$ for SAM and SSAM are the same for a fair comparison. As shown in~\cref{vgg_cifar} and~\cref{resnet_cifar}, VGG and ResNet18 with SSAM of 50\% sparsity outperform SAM of full perturbations. We can also observe that the advantages of SSAM-F and SSAM-D on WideResNet28 are more significant, which achieves better performance than SAM with up to 95\% sparsity. Note that the parameter size of WideResNet28 is much larger than that of ResNet-18 and VGG, therefore it's easy to overfit on CIFAR. In addition, even with very large sparsity, both SSAM-F and SSAM-D can still obtain competitive performance against SAM. 

\vspace{-1mm}
\begin{table}[ht]
\centering
\caption{Test accuracy of VGG11-BN on CIFAR10 with unstructured Sparse SAM($\rho$ is set to 0.5).}
\vspace{-2mm}
\label[table]{vgg_cifar}
\resizebox{\linewidth}{!}{
\begin{tabular}{cccc}
\toprule
Model & Optimizer & Sparsity & CIFAR10 \\ \hline
\multirow{8}{*}{VGG11-BN} &  SGD & / & 93.42\% \\ 
 &  SAM & 0\% & 93.87\% \\ \cline{2-4} 
 & \multirow{6}{*}{SSAM-F/SSAM-D} & 50\% & \textbf{94.03\%\textcolor{blue}{(+0.16)}}/93.79\%\textcolor{blue}{(-0.08)} \\
 &  & 80\% & 93.83\%\textcolor{blue}{(-0.04)}/\textbf{93.95\%\textcolor{blue}{(+0.08)}} \\
 &  & 90\% & 93.76\%\textcolor{blue}{(-0.11)}/93.85\%\textcolor{blue}{(-0.02)} \\
 &  & 95\% & 93.77\%\textcolor{blue}{(-0.10)}/93.48\%\textcolor{blue}{(-0.39)} \\
 &  & 98\% & 93.54\%\textcolor{blue}{(-0.43)}/93.54\%\textcolor{blue}{(-0.43)} \\
 &  & 99\% & 93.47\%\textcolor{blue}{(-0.40)}/93.33\%\textcolor{blue}{(-0.54)} \\ \bottomrule
\end{tabular}%
}
\end{table}

\textbf{Results on ImageNet.}\ 
~\cref{imagenet-experiments} reports the result of SSAM-F and SSAM-D on the large-scale ImageNet-1K~\cite{imagenet} dataset. The model we used is ResNet50~\cite{resnet}. We can observe that SSAM-F and SSAM-D can maintain the performance with 50\% sparsity. However, as sparsity ratio increases, SSAM-F will receive relatively obvious performance drops while SSAM-D is more robust. 

\vspace{-1mm}
\begin{table}[ht]
\caption{Test accuracy of ResNet-50 on ImageNet-1k with unstructured Sparse SAM.}
\vspace{-2mm}
\label[table]{imagenet-experiments}
\centering
\begin{tabular}{cccc}
\toprule
Model & Optimizer & Sparsity & ImageNet \\ \hline
\multirow{8}{*}{ResNet50} & SGD  & /  & 76.67\% \\ 
& SAM & 0\% & 77.25\% \\ \cline{2-4} 
& \multirow{3}{*}{SSAM-F (Ours)} & 50\% & \textbf{77.31\%\textcolor{blue}{(+0.06)}}  \\
& & 80\% & 76.81\%\textcolor{blue}{(-0.44)} \\
& & 90\% & 76.74\%\textcolor{blue}{(-0.51)} \\ \cline{2-4} 
& \multirow{3}{*}{SSAM-D (Ours)} & 50\% & \textbf{77.25\%\textcolor{blue}{(-0.00)}} \\
& & 80\% & 77.00\%\textcolor{blue}{(-0.25)} \\
& & 90\% & 77.00\%\textcolor{blue}{(-0.25)} \\  \bottomrule
\end{tabular}
\end{table}

\subsection{Experiments of \emph{structured} masking}
\label{sec:structure}
In this subsection, the perturbation is masked structurally, shown in the 3-th subfigure of figure~\ref{mask_pattern}. Specifically, a kernel of a convolution layer is treated as the unit to be masked. 
The sparse perturbation is implemented explicitly and implicitly, and the experiments of SSAM is based on SSAM-F, since SSAM-D is restricted since the unit to be masked has a different number of parameters.

\textbf{Results on CIFAR10/CIFAR100.} We evaluate our structure SSAM on CIFAR10 and CIFAR100, as shown in Table~\ref{tab:cifar_structure}. From this table, we can observe a considerable degradation in performance, which is believed to be the reason for the restricted mask structure. Compared to the SSAM with an unstructured mask, the performance of SSAM with a structured mask generally degrades,~\emph{e.g.}, 96.81\% \emph{v.s.} 96.73\% and 96.75\% \emph{v.s.} 96.65\%. Similar to the explicit implementation of sparse perturbation, the performance of implicit implementation degrades severely when the sparsity increase,~\emph{e.g.}, 81.02\% of CIFAR100 at 50\% sparsity and 78.87\% of CIFAR100 at 80\% sparsity. Nevertheless, implicit sparse perturbation could achieve comparable performance,~\emph{e.g.}, 81.02\% at 50\% sparsity ~\emph{v.s.} 81.03\% of SAM. 

Besides, since the unit to be masked is the kernel whose number of parameters is different, the sparsity can not be controlled precisely. The sparsity could not be set too large,~\emph{e.g.}, 99\% as used in pervious experiments.

\begin{table}[ht]
\centering
\caption{Test accuracy of ResNet-18 on CIFAR with structured Sparse SAM.}
\label{tab:cifar_structure}
\resizebox{\linewidth}{!}{
\begin{tabular}{ccccc}
\toprule
Model & Optimizer & Sparsity & CIFAR10 & CIFAR100 \\ \hline
\multirow{10}{*}{ResNet18} & SGD & / & 96.07\% & 77.80\% \\
 & SAM & 0\% & 96.83\% & 81.03\% \\ \cline{2-5} 
 & \multirow{5}{*}{SSAM-F Explicitly} & 50\% & \textbf{96.73\%\textcolor{blue}{(-0.10)}} & 80.62\%\textcolor{blue}{(-0.41)} \\
 &  & 80\% & 96.60\%\textcolor{blue}{(-0.23)} & 79.56\%\textcolor{blue}{(-1.47)} \\
 &  & 90\% & 96.65\%\textcolor{blue}{(-0.18)} & 80.14\%\textcolor{blue}{(-0.89)} \\
 &  & 95\% & 96.53\%\textcolor{blue}{(-0.30)} & 80.19\%\textcolor{blue}{(-0.84)} \\
 &  & 98\% & 96.42\%\textcolor{blue}{(-0.41)} & 79.63\%\textcolor{blue}{(-1.40)} \\ \cline{2-5} 
 & \multirow{3}{*}{SSAM-F Implicitly} & 50\% & 96.44\%\textcolor{blue}{(-0.39)} & \textbf{81.02\%\textcolor{blue}{(-0.01)}} \\
 &  & 80\% & 95.61\%\textcolor{blue}{(-1.22)} & 78.87\%\textcolor{blue}{(-2.16)} \\
 &  & 90\% & 95.97\%\textcolor{blue}{(-0.86)} & 78.66\%\textcolor{blue}{(-2.37)} \\ \bottomrule
\end{tabular}%
}
\end{table}



\subsection{Experiments of \emph{$N$:$M$ structured} masking}
\label{sec:n_mstructure_experiment}
In this subsection, the perturbation is masked in $N$:$M$ structure, shown in the 4-th subfigure of figure~\ref{mask_pattern}. The models used in the experiments include transformer architecture, since the principal parts in transformer,~\emph{i.e.}, mlp module and attention module, are based on matrix multiplication. Besides, in order to match the matrix multiplication, we rewrite the convolution operator in \emph{img2col} mode~\cite{cnn_implement}. The experiments of SSAM is based on SSAM-F.

The \emph{cusparseLt}~\cite{cusparselt} is used for Vision Transformer to achieve truly accelerated SSAM. More details of transformer and implementation by \emph{cusparseLt} could be found in \textbf{Supplementary Materials}.

\textbf{Result on CIFAR10/CIFAR100.} The results of $N$:$M$ structured SSAM on CIFAR10 and CIFAR100 are shown in Table~\ref{tab:nm_cifar}. The performance of $N$:$M$ structured SSAM is more closer to the unstructured mask, since the structural limitation of $N$:$M$ is slacker than structured mask. The performance of implicit SSAM is more unstable and normally degrades at high sparsity,~\emph{e.g.}, 1:8.

\begin{table}[ht]
\centering
\caption{Test accuracy of ResNet-18 on CIFAR with $N$:$M$ structured Sparse SAM}
\label{tab:nm_cifar}
\resizebox{\linewidth}{!}{
\begin{tabular}{ccccc}
\hline
Model & Optimizer & N:M & CIFAR10 & CIFAR100 \\ \hline
\multirow{10}{*}{ResNet18} & SGD & / & 96.07\% & 77.80\% \\
 & SAM & / & 96.83\% & 81.03\% \\ \cline{2-5} 
 & \multirow{4}{*}{SSAM-F Explicitly} & 1:2 & \textbf{96.83\%\textcolor{blue}{(+0.00)}} & 81.14\%\textcolor{blue}{(+0.11)} \\
 &  & 1:4 & 96.70\%\textcolor{blue}{(-0.13)} & 80.68\%\textcolor{blue}{(-0.35)} \\
 &  & 1:8 & 96.53\%\textcolor{blue}{(-0.30)} & 80.72\%\textcolor{blue}{(-0.31)} \\
 &  & 2:4 & 96.79\%\textcolor{blue}{(-0.04)} & \textbf{81.25\%\textcolor{blue}{(+0.22)}} \\ \cline{2-5} 
 & \multirow{4}{*}{SSAM-F Implicitly} & 1:2 & 96.57\%\textcolor{blue}{(-0.26)} & \textbf{81.26\%\textcolor{blue}{(+0.23)}} \\
 &  & 1:4 & \textbf{96.73\%\textcolor{blue}{(-0.10)}} & 80.11\%\textcolor{blue}{(-0.92)} \\
 &  & 1:8 & 96.03\%\textcolor{blue}{(-0.80)} & 80.42\%\textcolor{blue}{(-0.61)} \\
 &  & 2:4 & 96.62\%\textcolor{blue}{(-0.11)} & 81.04\%\textcolor{blue}{(+0.01)} \\ \hline
\end{tabular}%
}
\end{table}


\vspace{-3mm}
\begin{table}[ht]
\centering
\caption{Running time of Transformer on CIFAR of ViT with $N$:$M$ structured Sparse SAM}
\label{tab:nm_vit}
\begin{tabular}{ccccc}
\toprule
Model & Optimizer & N:M & Time\tablefootnote{Time for one iteration on A100} & Speed\\ \hline
\multirow{3}{*}{ViT} & SAM & / & 4.10s & $\times$1\\ \cline{2-5} 
 & \multirow{2}{*}{SSAM-F Implicitly} & 1:2 & 3.58s & $\times$1.2\\
 &  & 2:4 & 3.53s  & $\times$1.2 \\ \bottomrule
\end{tabular}
\end{table}

It should be emphasized that the experimental purpose of Sec.~\ref{tab:nm_vit} is not to compare with other SOTA algorithms, but to demonstrate that our proposed SSAM can achieve real training acceleration through CUDA libraries for relevant sparse matrices. The figure about Epoch is in the appendix.

\vspace{-2mm}
\subsection{Robustness to Noisy Data}
Since SAM converges at a flat minima suggests that the robustness of SAM to the noisy training data. To verify that without perturbation of all parameters, the Sparse SAM can also achieve comparable flat minima, we conduct the noisy datasets including label noise and corrupted images. In this subsection, we test the robustness of ResNet18 on noisy CIFAR and the sparsity of unstructured Sparse SAM is 50\%.

\textbf{Label Noise.} To be more specific, the noisy dataset is a classical noisy-label setting, in which some labels in the training set are flipped to incorrect ones, while the test set remains unaffected. We keep the experimental setting unchanged except for flipping the partial training label and without cutout augmentation. The results are presented in the following Tabel~\ref{label-noise}.

\vspace{-1mm}
\begin{table}[ht]
\small
\centering
\caption{Test Accuracy on the clean CIFAR100 testset of ResNet18 trained by partional flipped label.}
\label{label-noise}
\vspace{-3mm}
\begin{tabular}{ccccc}
\toprule
\multirow{2}{*}{Method} & \multicolumn{3}{c}{Noisy Rate(\%)} \\ \cline{2-4} 
  & 5 & 10 & 20 \\ \hline
SGD  & 73.95\% & 71.08\% & 64.54\% \\
SAM & 77.74\% & 74.50\% & 69.46\% \\
SSAM-F & 77.18\% & 74.32\% & 68.91\% &  \\
SSAM-D & 76.84\% & 73.99\% & 68.78\% &  \\ \bottomrule
\end{tabular}
\end{table}

\textbf{Corrupted Image.} We corrupt partial training data manually to introduce noisy components via \emph{Gaussian noise}, \emph{resolution}, \emph{fog}, and \emph{central rectangle occlusion}.In particular, \emph{Gaussian Noise} applies Gaussian noise to images and \emph{Fog} indicates fogging the image. \emph{Resolution} downsamples the image and upsamples it back to its original resolution, \emph{Central rectangle occlusion} blocks the object in the middle of the image, which accounts for a quarter of an image, which are shown in figure~\ref{fig:corruption}.

\begin{figure}[ht]
    \centering
    \vspace{-0.2cm}
    \includegraphics[width=0.9\linewidth]{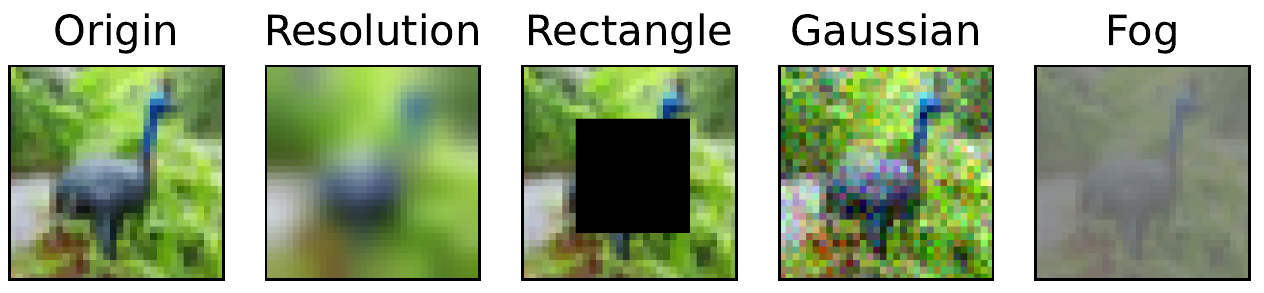}
    \vspace{-0.1cm}
    \caption{Corrupted images after \emph{resolution}, \emph{central rectangle occlusion},  \emph{Gaussian noise} or \emph{fog}. }
    \label[figure]{fig:corruption}
    \vspace{-0.4cm}
\end{figure}

As shown in Table~\ref{tab:corruption}, we test the ResNet18 trained by SAM and our Sparse SAM maintaining the same setting in Section~\ref{sec:experiments} except for cutout. From this table, the fact SAM outperforms SGD with corrupted training data indicates the robustness of SAM. In addition to this and more importantly, our Sparse SAM could maintain the superiority of SAM and even transcend it. SSAM-F has better performance with half of the perturbated parameters indicating that the perturbation of  entire parameters,~\emph{i.e.}, SAM, is unnecessary and suboptimal.

\begin{table}[ht]
\centering
\caption{Test Accuracy on the clean testset of ResNet18 trained by partional corrupted images.}
\label{tab:corruption}
\resizebox{\linewidth}{!}{%
\begin{tabular}{ccccccc}
\hline
\multicolumn{7}{c}{\cellcolor[HTML]{C0C0C0}\textbf{CIFAR10}} \\ \hline
\multirow{6}{*}{ResNet18} & & \multicolumn{5}{c}{Corrupt Rate(\%)} \\ \cline{3-7} 
&\multirow{-2}{*}{Method} & 0 & 10 & 20 & 40 & 80 \\ \cline{2-7}
&SGD & 95.36\% & 94.68\% & 94.68\% & 93.79\% & 91.16\% \\
&SAM & 96.30\% & 95.79\% & 95.48\% & 94.70\% & \textbf{92.61\%} \\
&SSAM-F & 96.36\% & \textbf{96.00\%} & 95.28\% & 94.71\% & 92.23\% \\
&SSAM-D & \textbf{96.43\%} & 95.57\% & \textbf{95.54\%} & \textbf{94.89\%} & 92.58\% \\ \hline
\multicolumn{7}{c}{\cellcolor[HTML]{C0C0C0}\textbf{CIFAR100}} \\ \hline
\multirow{6}{*}{ResNet18}& & \multicolumn{5}{c}{Corrupt Rate(\%)} \\ \cline{3-7} 
&\multirow{-2}{*}{Method} & 0 & 5 & 10 & 20 & 40 \\ \cline{2-7}
& SGD & 77.80\% & 78.01\% & 77.20\% & 75.30\% & 73.06\% \\
& SAM & \textbf{80.07\%} & 79.57\% & 78.30\% & 77.3\% & 74.77\% \\
& SSAM-F & 79.95\%&\textbf{80.07\%} & \textbf{79.04\%} &\textbf{77.75\%} & \textbf{75.01\%} \\
& SSAM-D & 79.78\% & 79.31\% & 78.38\% & 77.66\% & 74.68\% \\ \hline
\end{tabular}%
}
\end{table}



\subsection{Ablation and anaysis}

\textbf{Ablations of Masking Strategy.} For further verification of our masking strategy, we perform more ablations in this paragraph. For SSAM-F, which selects the parameters with largest fisher information to be perturbed, we consider a random mask,~\emph{i.e.}, parameters are perturbed totally random. For SSAM-D, which drops the flattest weights to be masked, we conduct the experiments where SSAM-D drops randomly or drops the sharpest weights,~\emph{i.e.}, the weights with large gradients. The results of ablations are shown in~\cref{table:mask-strategy}. The results show that random strategies are less effective than our SSAM. The performance of SSAM-D dropping sharpest weights drops a lot even worse than random strategy, which is consistent with our conjecture,~\emph{i.e.} the parameters with small gradient value have reached a relatively flat position and updated by SAM doesn't bring better benefits.

\vspace{-1.5mm}
\begin{table}[htbp]
\centering
\caption{Ablation of different masking strategies.}
\label[table]{table:mask-strategy}
\vspace{-2mm}
\begin{tabular}{cccc}
\toprule
Model & Optimizer & Strategy & ImageNet \\ \hline
\multirow{7}{*}{ResNet50} & SGD & / & 76.67\% \\
& SAM & / & 77.25\% \\ \cline{2-4}
& \multirow{2}{*}{SSAM-F} & Random Mask & 77.08\%\textcolor{blue}{(-0.17)} \\
&  & Topk Fisher Information & 77.31\%\textcolor{blue}{(+0.06)} \\ \cline{2-4}
& \multirow{3}{*}{SSAM-D} & Random Drop & 77.08\%\textcolor{blue}{(-0.17)} \\
& & Drop Sharpnest weights & 76.68\%\textcolor{blue}{(-0.57)} \\
& & Drop Flattest weights & 77.25\%\textcolor{blue}{(-0.00)} \\ \bottomrule
\end{tabular}
\end{table}

\vspace{-1.5mm}
\textbf{Gradient ascent with few data.} As stated in~\cite{smallbsz_ga_sam}, an efficient strategy to reduce computation cost is using a subset,~\emph{e.g.} 20\%, of the minibatch for the ascent gradient calculation,~\emph{i.e.} finding the perturbation. We also perform the experiments of SAM with a subset of minibatch for calculating perturbation and keep else unchanged. We show the result of SSAM with 50\% sparsity in Table~\ref{tab:smallbsz4ga}. Contrary to the origin, the strategy does not bring a gain to SAM, but a decrease. The reason we believe is the gap between dataset, origin conducts experiment on language dataset and ours is on image dataset. However, our SSAM still has better performance compared with SAM in such case.
\begin{table}[ht]
\centering
\caption{Results of ResNet18 on CIFAR10 with different data size for calculating perturbation.}
\label{tab:smallbsz4ga}
\begin{tabular}{ccccc}
\toprule
Model & Data & Optimizer & CIFAR10 & Time(Speed up)\tablefootnote{Time for one epoch on A100}  \\ \hline
\multirow{6}{*}{ResNet18} & \multirow{3}{*}{100\%} & SAM & 96.83\% & 13.60s($\times$1) \\ 
 &  & SSAM-F & 96.81\% & 15.55s($\times$0.87) \\ 
 &  & SSAM-D & 96.87\% & 14.67s($\times$0.92) \\ \cline{2-5} 
 & \multirow{3}{*}{25\%} & SAM & 96.55\% & 12.56s($\times$1.08) \\ 
 &  & SSAM-F & 96.57\% & 13.38s($\times$1.01) \\  
 &  & SSAM-D & 96.66\% & 13.47s($\times$1.00) \\ \bottomrule
\end{tabular}
\end{table}

\textbf{Training curves.}
We visualize the training curves of SGD, SAM and SSAM in~\cref{fig:train-curve}. The training curves of SGD are more jitter, while SAM is more stable. In SSAM, half of gradient updates are the same as SGD, but its training curves are similar to those of SAM, suggesting the effectiveness. 
The trend of the training curve shows that SSAM has the same convergence rate as SAM, which supports the validity of the convergence analysis in Sec~\ref{sec:method:theoretical}.

\begin{figure}[ht]
\centering
\subfloat[ResNet\&CIFAR10]{
    \centering
    \includegraphics[width=0.30\linewidth]{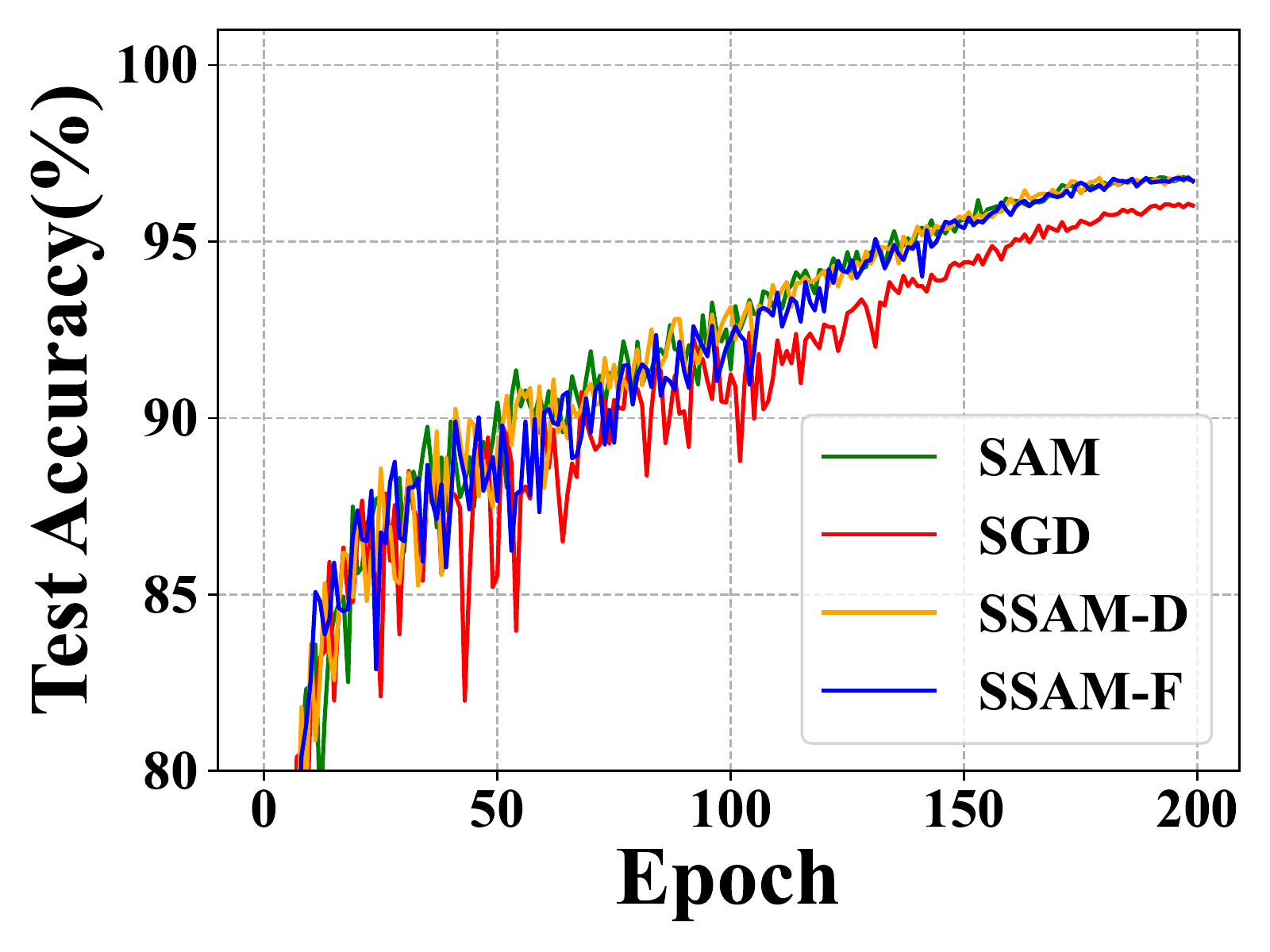}
    \label{fig:resnet18-cifar10-train-cruve}
}
\hspace{-1mm}
\subfloat[ResNet\&CIFAR100]{
    \centering
    \includegraphics[width=0.30\linewidth]{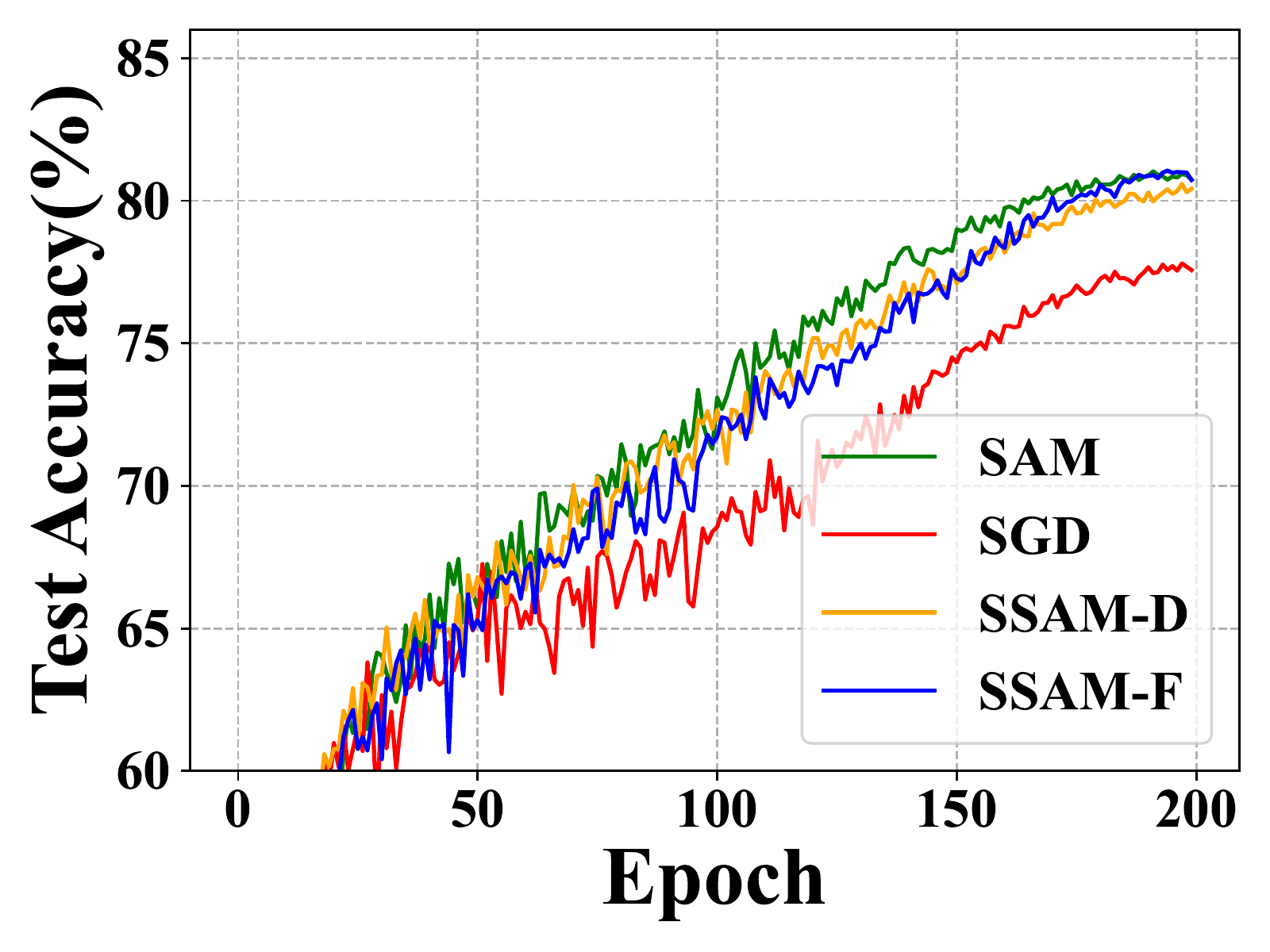}
    \label{fig:resnet18-cifar100-train-cruve}
}
\hspace{-1mm}
\subfloat[ResNet\&ImageNet]{
    \centering
    \includegraphics[width=0.30\linewidth]{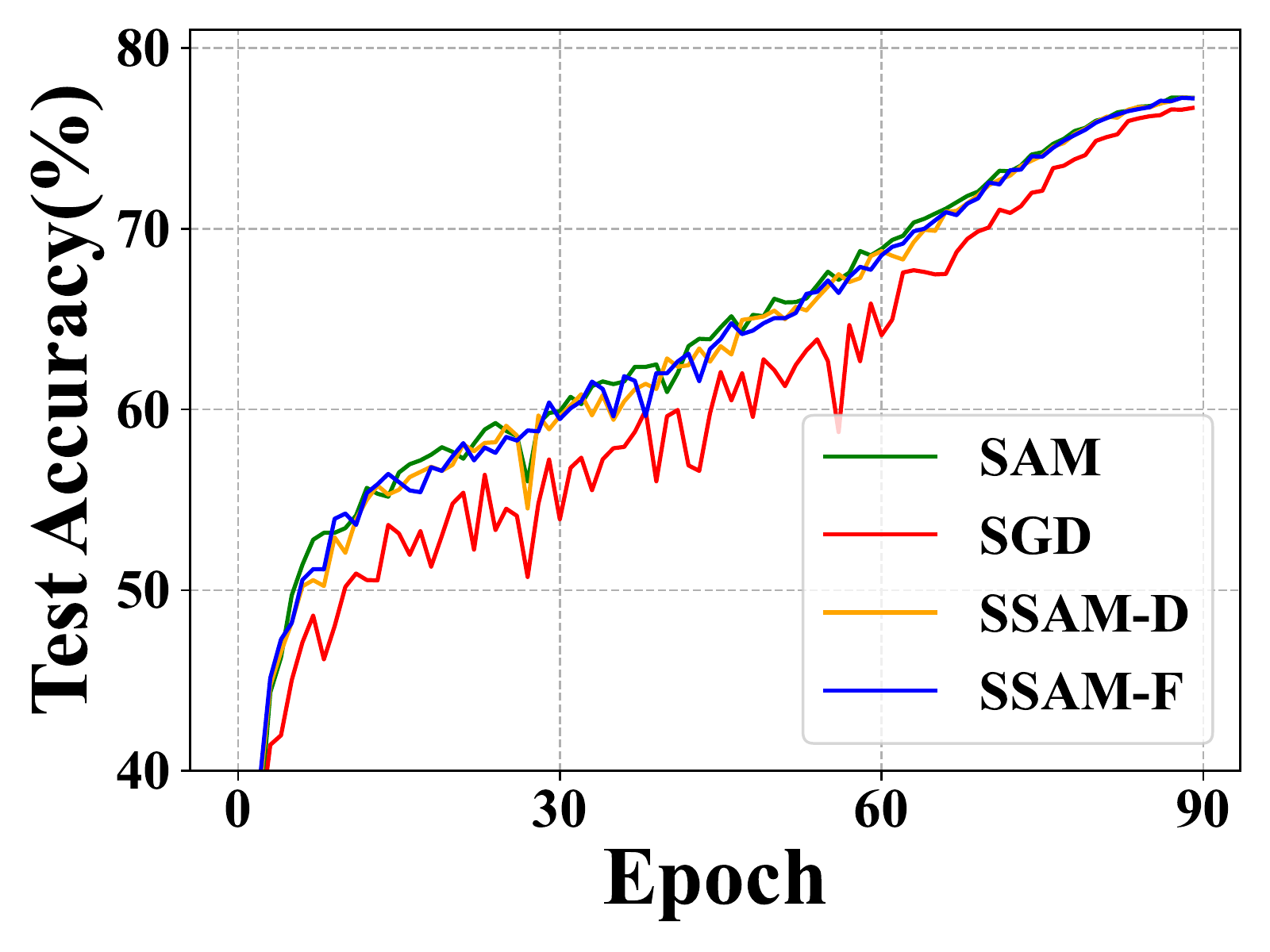}
    \label{fig:resnet50-imagenet-train-cruve}
}
\vspace{-2mm}
\caption{The training curves of SGD, SAM and SSAM. The sparsity of SSAM is 50\%.}
\label[figure]{fig:train-curve}
\end{figure}

\textbf{Sparsity \emph{vs.} Accuracy.}
We report the effect of sparsity ratios in SSAM, as depicted in~\cref{fig:sparsity-acc}. We can observe that on CIFAR datasets, the sparsities of SSAM-F and SSAM-D pose little impact on performance. In addition, they can obtain better accuracies than SAM with up to 99\% sparsity. On the much larger dataset,~\emph{i.e.}, ImageNet, a higher sparsity will lead to more obvious performance drop. 

\begin{figure}[ht]
\centering
\subfloat[WRNet\&CIFAR10]{
    \centering
    \!\!\!
    \includegraphics[width=0.32\linewidth]{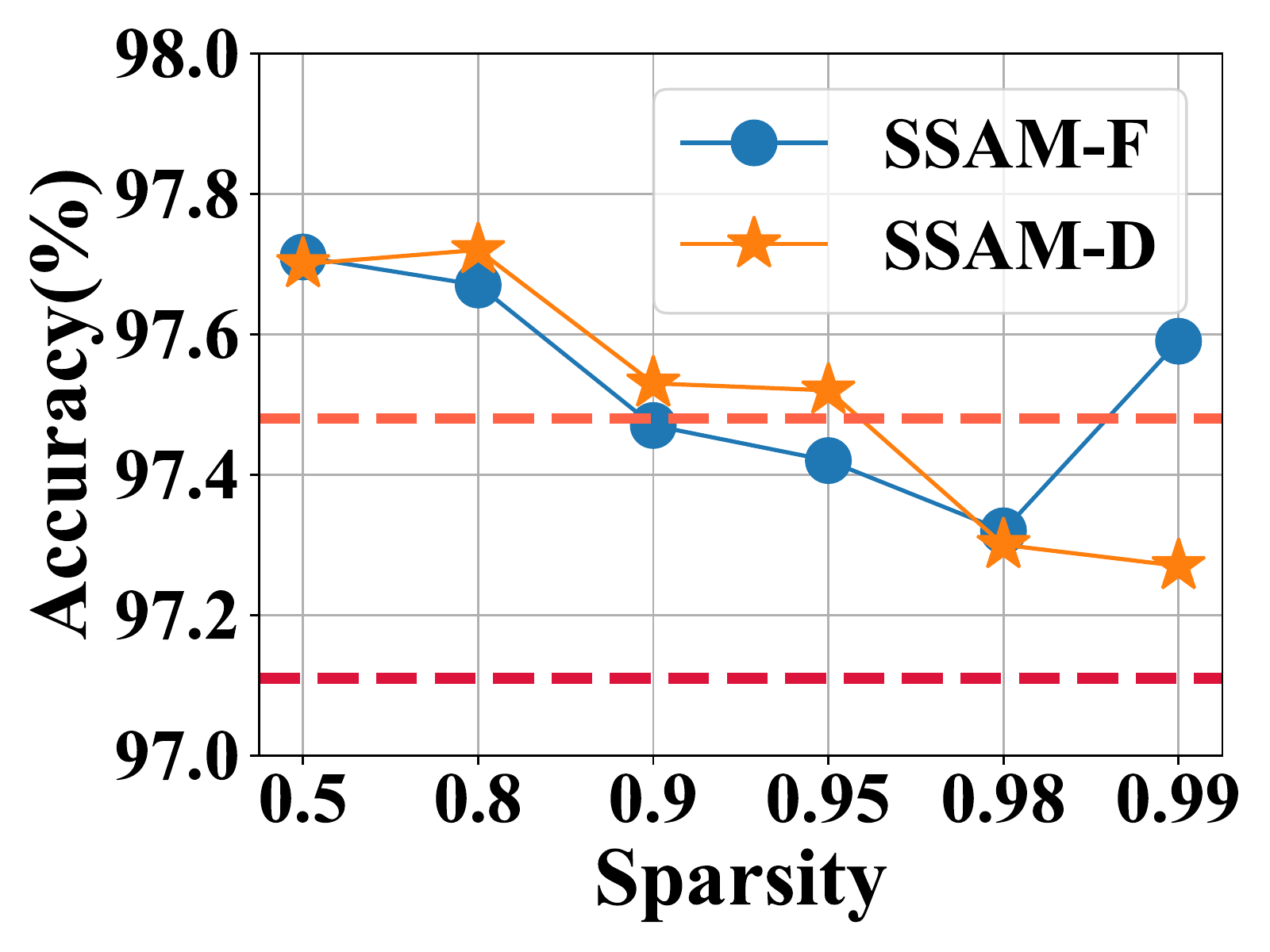}
}
\subfloat[WRNet\&CIFAR100]{
    \centering
    \includegraphics[width=0.32\linewidth]{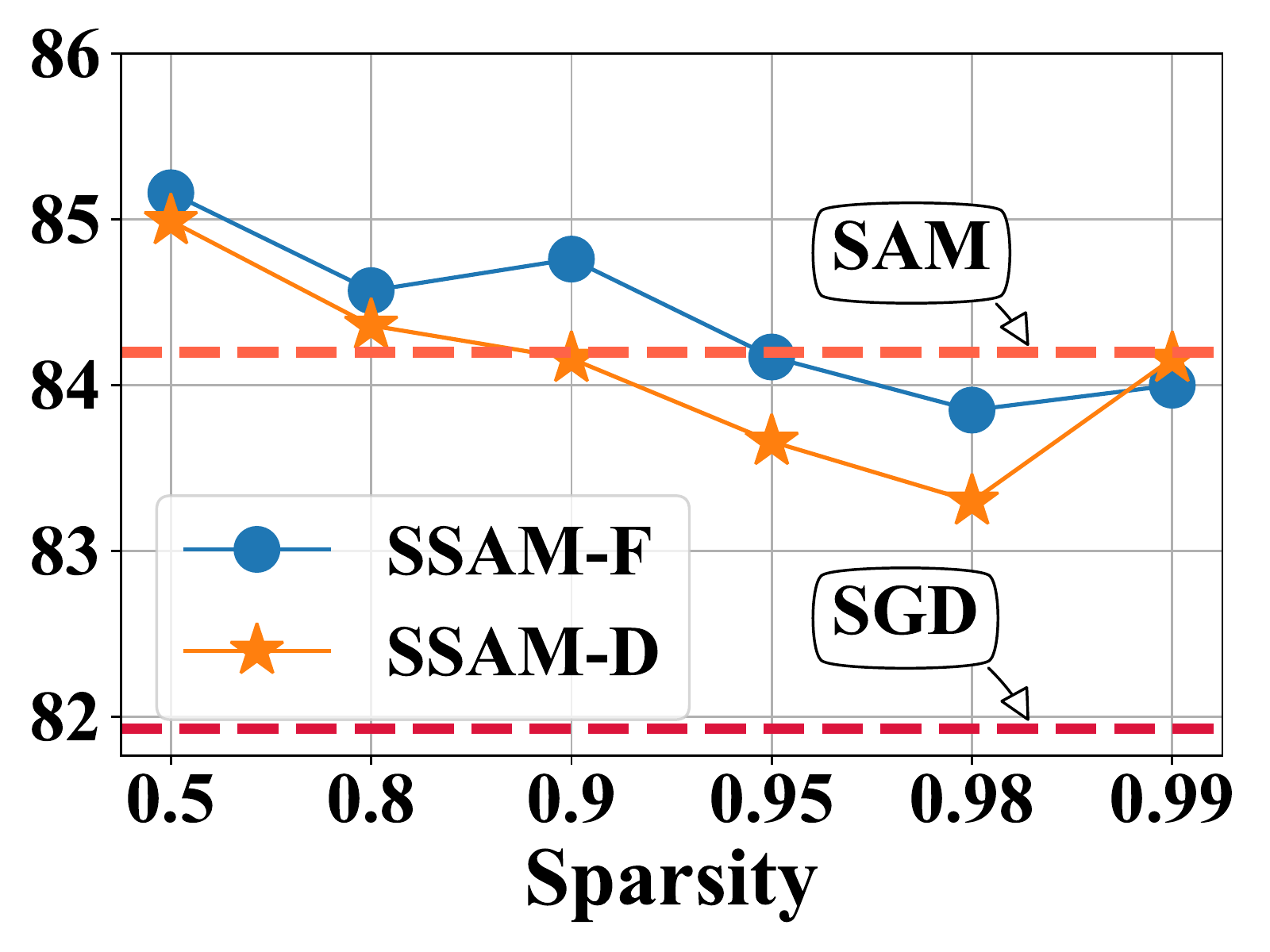}
}
\subfloat[ResNet\&ImageNet]{
    \centering
    \includegraphics[width=0.30\linewidth]{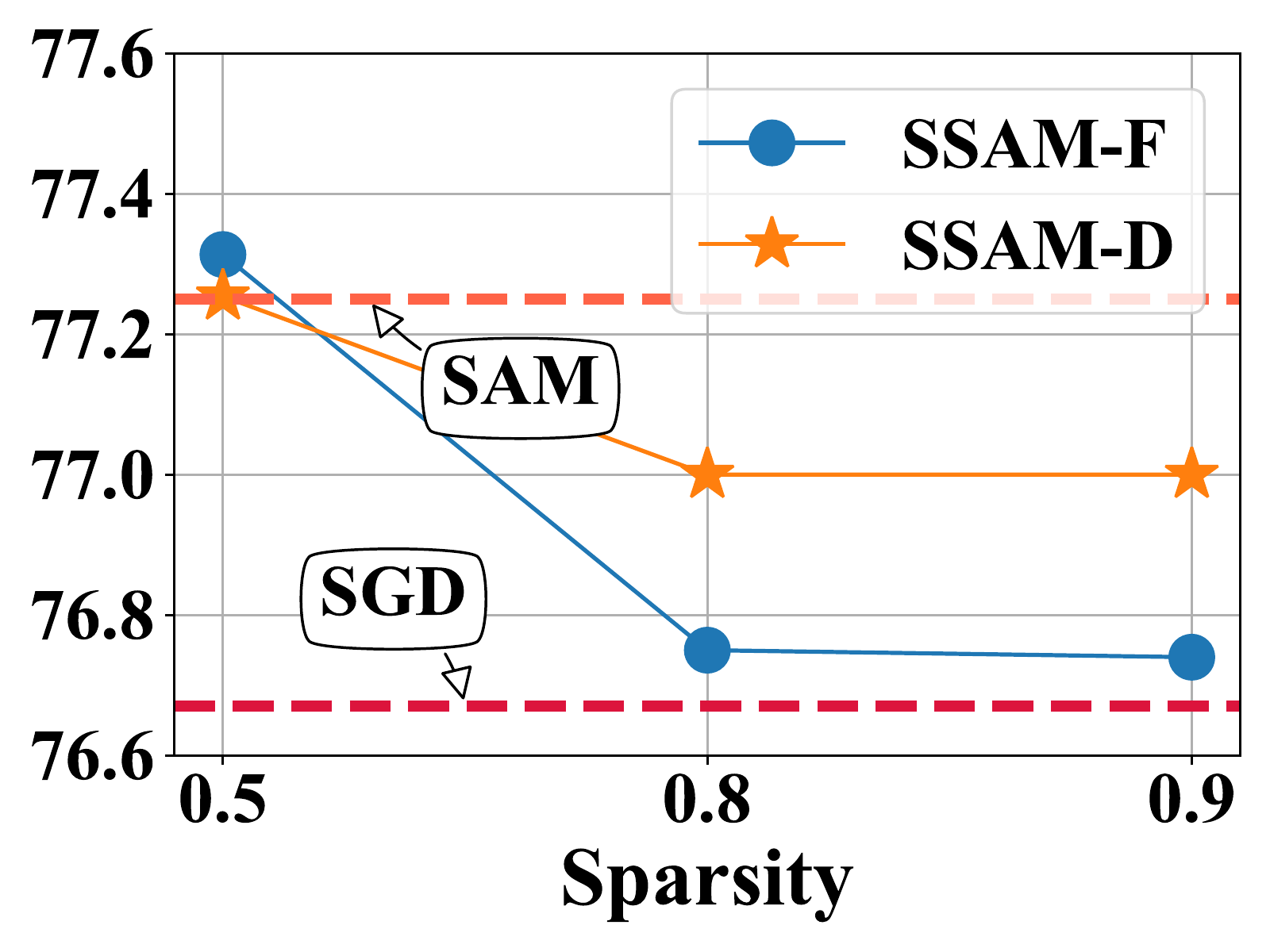}
    }
\vspace{-2mm}
\caption{Accuarcy \emph{v.s.} sparsity on CIFAR and ImageNet, where WRNet is the abbreviation of WideResNet.}
\label[figure]{fig:sparsity-acc}
\end{figure}

\subsection{SSAM with better generalization}
\textbf{Visualization of landscape.} For a more intuitive comparison between different optimization schemes, we perform a qualitative analysis for generalization,~\emph{i.e.}, visualize the training loss landscapes of ResNet18 optimized by SGD, SAM and SSAM as shown in~\cref{fig:vis-landscape}. Following~\cite{vis-landscape}, we sample $50\times50$ points in the range of $[-1,1]$ from random "filter normalized"~\cite{vis-landscape} directions, \emph{i.e.}, the $x$ and $y$ axes. As shown in~\cref{fig:vis-landscape}, the landscape of SSAM is flatter than both SGD and SAM, and most of its area is low loss (blue). This result indicates that SSAM can smooth the loss landscape notably with sparse perturbation, and it also suggests that the complete perturbation on all parameters will result in suboptimal minima.

\begin{figure}[h]
    \centering
    \includegraphics[width=1\linewidth]{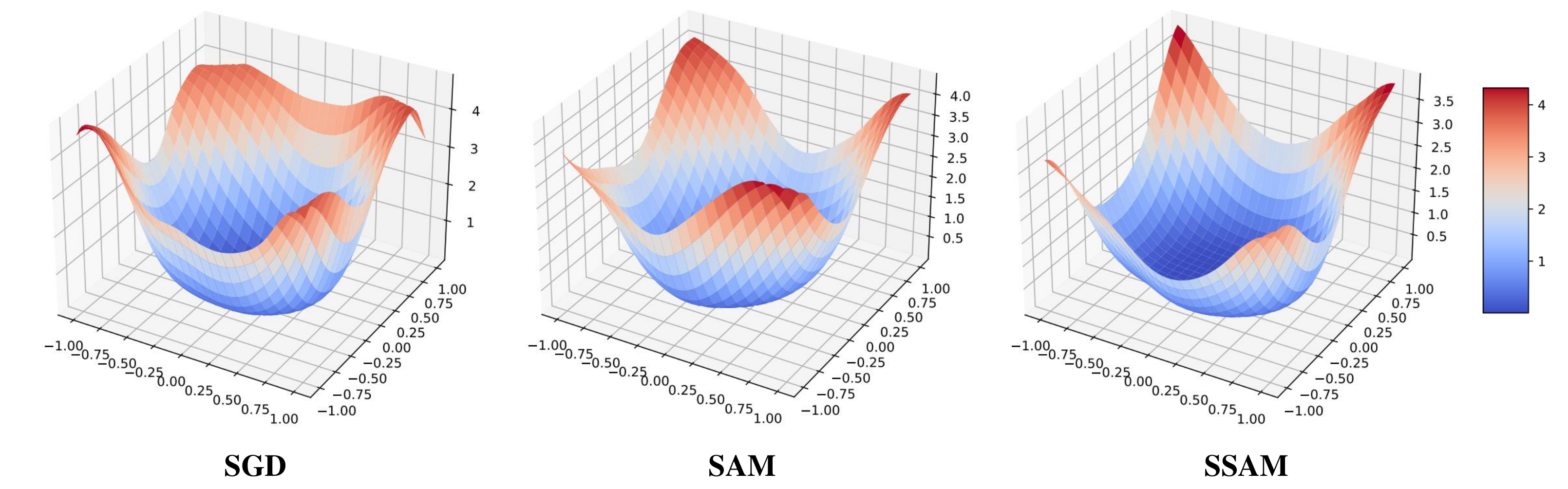}
    \caption{Training loss landscapes of ResNet18 on CIFAR10 trained with SGD, SAM, SSAM.}
    \label{fig:vis-landscape}
\end{figure}

\textbf{Hessian spectra.}
To further verify the generalization ability, we also perform quantitative analysis. In~\cref{fig:hessian}, we report the Hessian spectrum to demonstrate that SSAM can converge to a flat minima. Here, we also report the ratio of dominant eigenvalue to fifth largest ones,~\emph{i.e.}, 
$\lambda_1/\lambda_5$, used as the criteria in~\cite{sam, lambda1/lambda5}. Note that the size of Hessian is the square of the number of model parameters, which makes the eigenvalues of Hessian impossible to be computed. 
We approximate the Hessian spectrum by the Lanczos algorithm~\cite{appro-hessian} and illustrate the Hessian spectra of ResNet18 using SGD, SAM and SSAM on CIFAR10. From this figure, we observe that the dominant eigenvalue $\lambda_1$ with SSAM is less than SGD and comparable to SAM. It confirms that SSAM with a sparse perturbation can still converge to the flat minima as SAM does, or even better.

\begin{figure}[ht]
\centering
\subfloat[SGD H-spectrum]{
    \centering
    \includegraphics[width=0.30\linewidth]{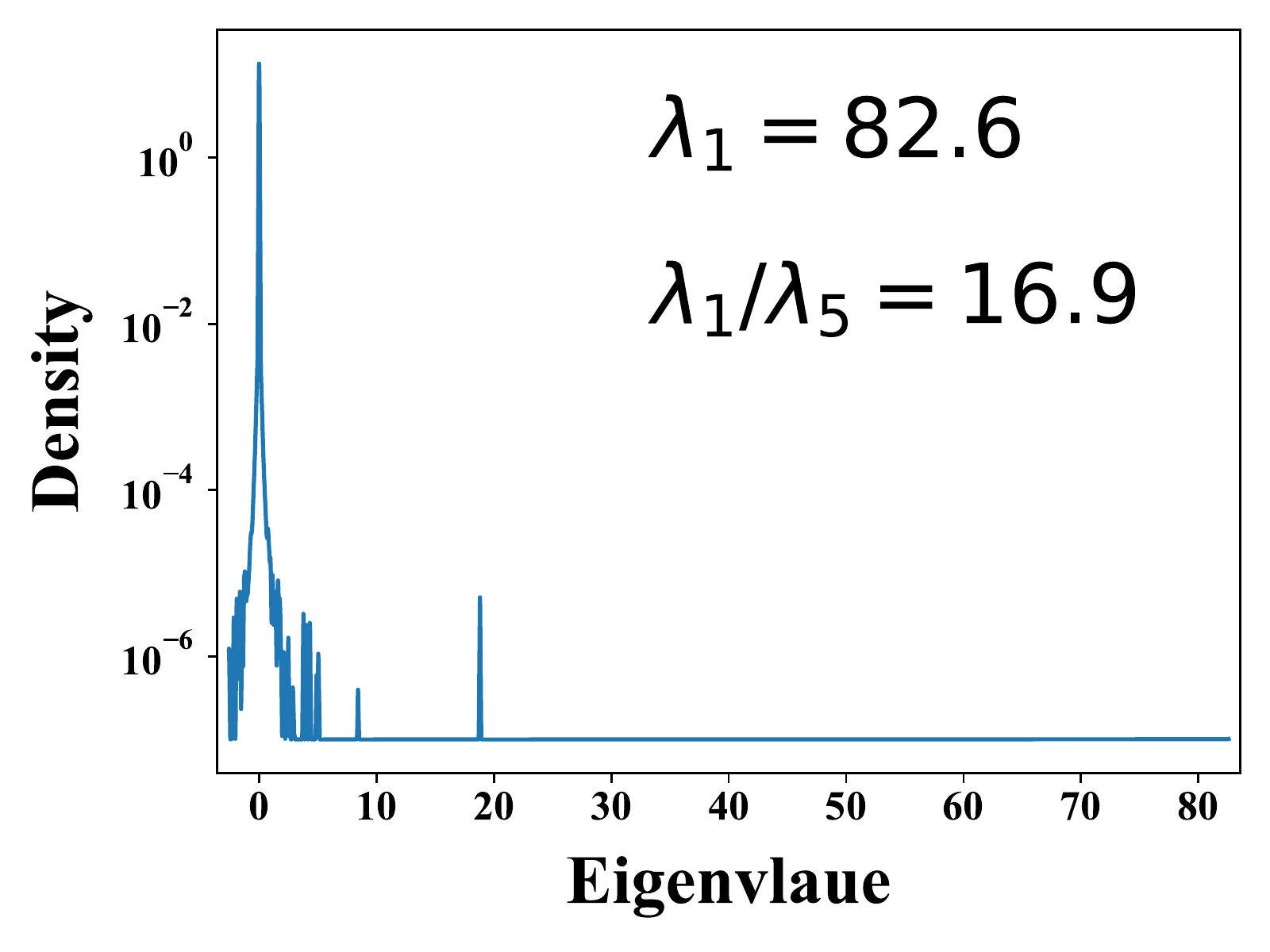}
    \label{fig:hessian-sgd}
}
\hspace{-1mm}
\subfloat[SAM H-spectrum]{
    \centering
    \includegraphics[width=0.30\linewidth]{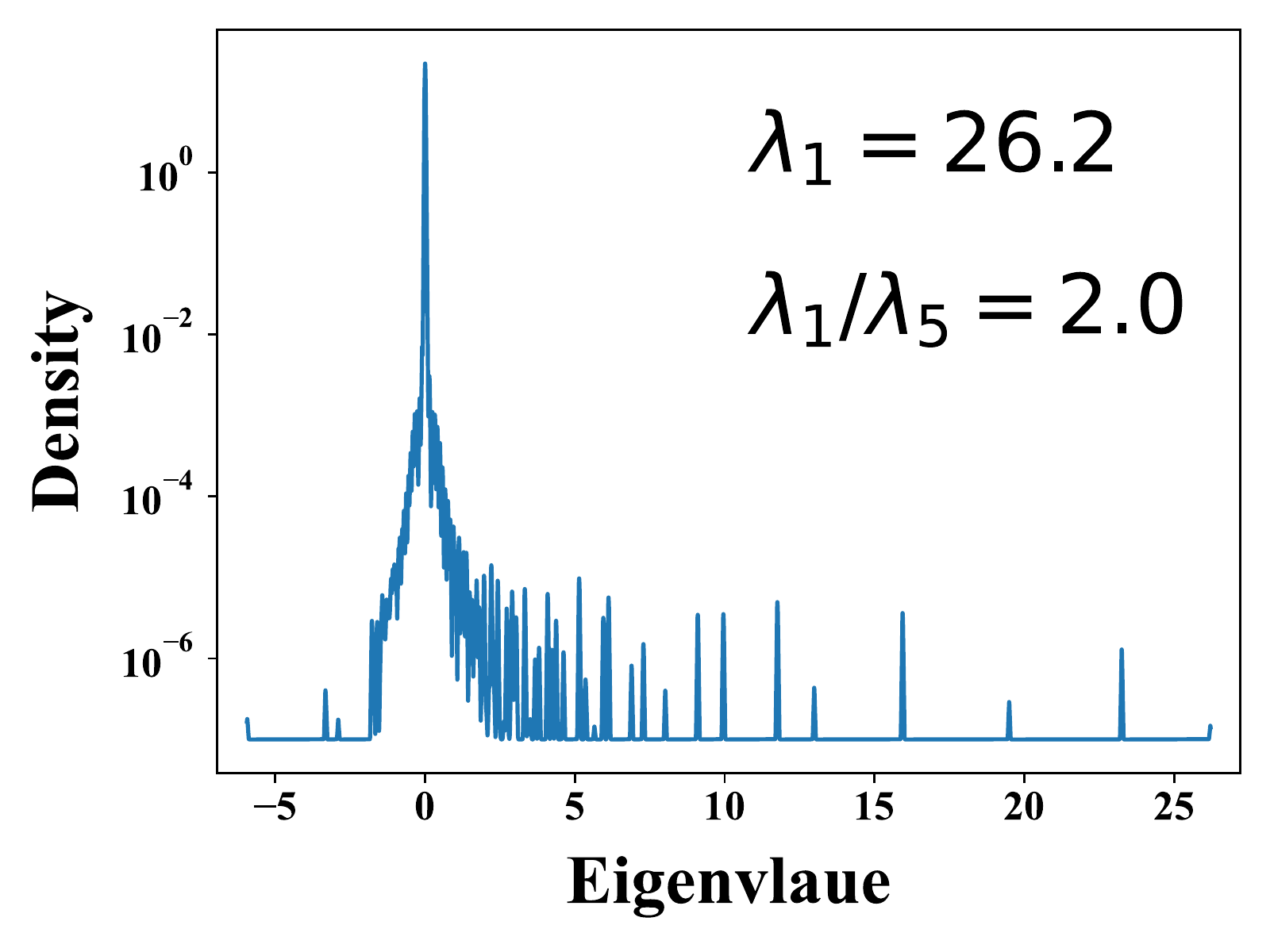}
    \label{fig:hessian-sam}
}
\hspace{-1mm}
\subfloat[SSAM H-spectrum]{
    \centering
    \includegraphics[width=0.30\linewidth]{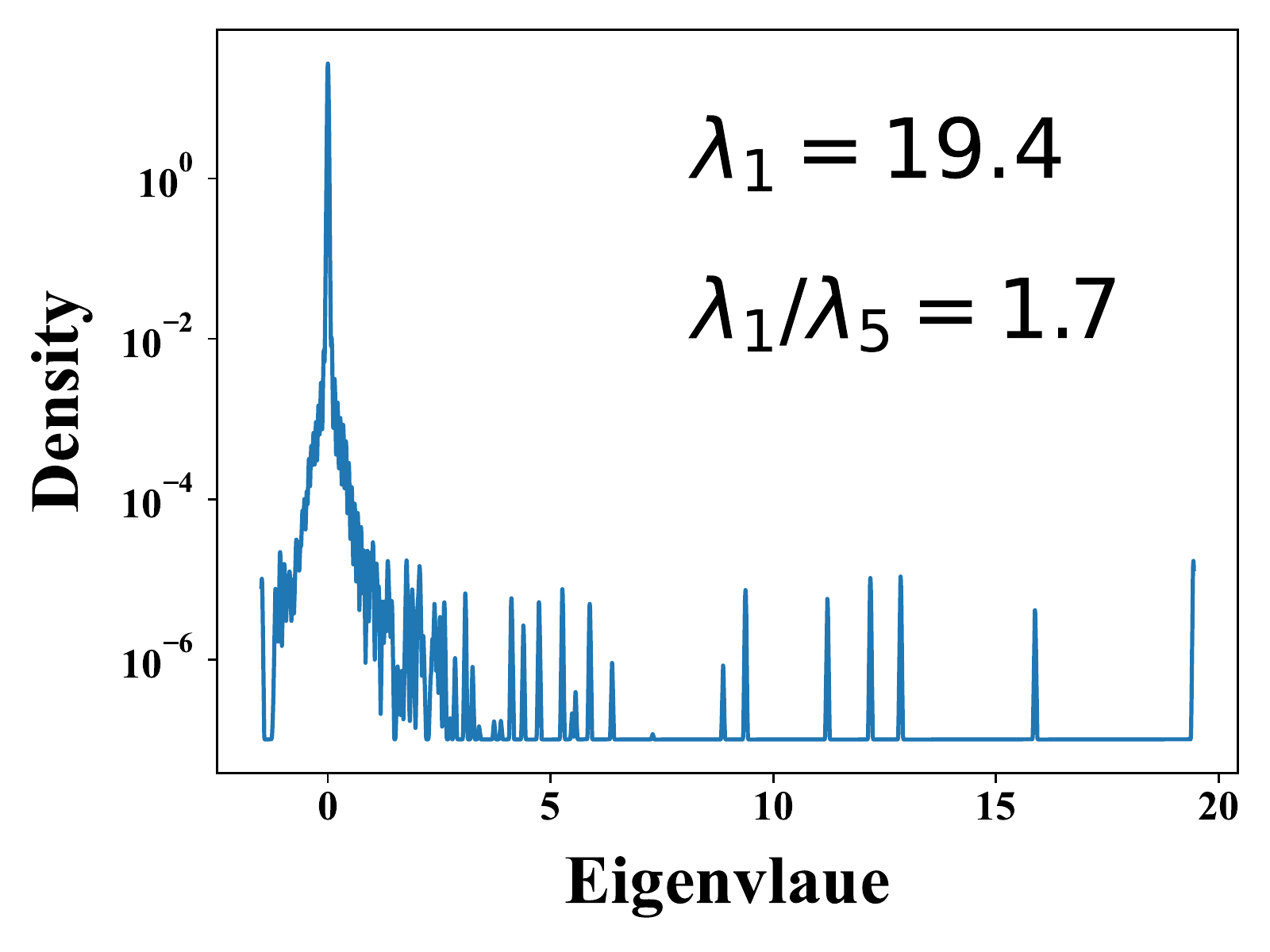}
    \label{fig:hessian-ssam}
}
\vspace{-2mm}
\caption{Hessian spectra of ResNet18 using SGD, SAM and SSAM on CIFAR10, where H-spectrum is the abbreviation of Hessian spectra.}
\label{fig:hessian}
\end{figure}

\section{Conclusion}
In this paper, we reveal that the SAM gradients for most parameters are not significantly different from the SGD ones. Based on this finding, we propose an efficient training scheme called Sparse SAM, which is achieved by computing a sparse perturbation. For a comprehensive investigation, we implement masks with different patterns,~\emph{e.g.}, unstructured, structured, and $N$:$M$ structured, and we implement sparse perturbation implicitly and explicitly. We provide two solutions for sparse perturbation, which are based on Fisher information and dynamic sparse training, respectively.  In addition, we also theoretically prove that SSAM has the same convergence rate as SAM. We validate our SSAM on extensive datasets with various models. The experimental results show that while retaining much better efficiency, SSAM can achieve competitive and even better performance than SAM.

\vspace{-3mm}
\section*{Acknowledgement}
\small{
This work is supported by the Major Science and Technology Innovation 2030 “Brain Science and Brain-like Research” key project (No. 2021ZD0201405), the National Science Fund for Distinguished Young Scholars (No.
62025603), the National Natural Science Foundation of China (No. U21B2037, No. 62176222,
No. 62176223, No. 62176226, No. 62072386, No. 62072387, No. 62072389, and No. 62002305),
Guangdong Basic and Applied Basic Research Foundation (No. 2019B1515120049), and the Natural
Science Foundation of Fujian Province of China (No. 2021J01002).}

\bibliographystyle{IEEEtran}
\bibliography{ref}

\clearpage

\onecolumn

\begin{appendices}

\begin{center}
    \Large \bf Supplementary Materials
\end{center}

\newenvironment{assumption-multiplex}[1]
  {\renewcommand{\theassumption}{\ref{#1}$'$}%
   \addtocounter{assumption}{-1}%
   \begin{assumption}}
  {\end{assumption}}

\section{Missing Proofs}

In this section, we will proof the main theorem in our paper. Before the detailed proof, we first recall the following assumptions that are commonly used for characterizing the convergence of non-convex stochastic optimization.
\begin{assumption-multiplex}{assume:bounded-gradient}
(Bounded Gradient.) It exists $G \geq 0$ s.t. $||\nabla f(\w)|| \leq G$.
\end{assumption-multiplex}

\begin{assumption-multiplex}{assume:bounded-variance}
(Bounded Variance.) It exists $\sigma \geq 0$ s.t. $\E [||g(\w) - \nabla f(\w)||^2] \leq \sigma^2$.
\end{assumption-multiplex}

\begin{assumption-multiplex}{assume:l-smoothness}
(L-smoothness.) It exists $L > 0 $ s.t. $||\nabla f(\w) - \nabla f(\boldsymbol{v})|| \leq L||\w - \boldsymbol{v}|| $, $\forall  \w, \boldsymbol{v} \in \mathbb{R}^d$.
\end{assumption-multiplex}

\subsection{Proof of Theorem 1}
Based on the objective function of Sharpness-Aware Minimization~(SAM), suppose we can obtain the noisy observation gradient $g(\w)$ of true gradient $\nabla f(\w)$, we can write the iteration of SAM:
\begin{equation}
\label[equation]{equ:iter-sam}
\left\{
    \begin{array}{ll}
        \w_{t+\frac{1}{2}}=&\w_t + \rho \cdot \frac{g(\w_t)}{||g(\w_t)||} \\
        \w_{t+1}=&\w_t - \eta \cdot g(\w_{t+\frac{1}{2}})
    \end{array}
\right.
\end{equation}

\begin{lemma}
\label{lemma:1}
For any $\rho > 0$, $L > 0$ and the differentiable function $f$, we have the following inequality:
\begin{align*}
    \langle \nabla f(\w_{t}), \nabla f(\w_t + \rho \frac{\nabla f(\w_t)}{||\nabla f(\w_t)||})\rangle 
    \geq 
    ||\nabla f(\w_t)||^2 - \rho L G
\end{align*}
\end{lemma}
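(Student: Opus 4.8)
The plan is to reduce the statement to a single application of the triangle/Cauchy--Schwarz inequality combined with the two relevant assumptions. First I would split off the ``diagonal'' term by writing the inner product as
\begin{align*}
\langle \nabla f(\w_t), \nabla f(\w_t + \rho \tfrac{\nabla f(\w_t)}{\|\nabla f(\w_t)\|})\rangle
= \|\nabla f(\w_t)\|^2
+ \langle \nabla f(\w_t),\, \nabla f(\w_t + \rho \tfrac{\nabla f(\w_t)}{\|\nabla f(\w_t)\|}) - \nabla f(\w_t)\rangle .
\end{align*}
This isolates exactly the target quantity $\|\nabla f(\w_t)\|^2$ and leaves a cross term that I must show is bounded below by $-\rho L G$.

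Next I would lower-bound that cross term. By Cauchy--Schwarz,
\begin{align*}
\langle \nabla f(\w_t),\, \nabla f(\w_t + \rho \tfrac{\nabla f(\w_t)}{\|\nabla f(\w_t)\|}) - \nabla f(\w_t)\rangle
\geq -\,\|\nabla f(\w_t)\| \cdot \big\| \nabla f(\w_t + \rho \tfrac{\nabla f(\w_t)}{\|\nabla f(\w_t)\|}) - \nabla f(\w_t)\big\|.
\end{align*}
The second factor is controlled by $L$-smoothness (Assumption~\ref{assume:l-smoothness}): since the displacement is $\rho \cdot \nabla f(\w_t)/\|\nabla f(\w_t)\|$, a unit-length direction scaled by $\rho$, its norm is exactly $\rho$, so the gradient difference has norm at most $L\rho$. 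The first factor is bounded by $G$ via Assumption~\ref{assume:bounded-gradient}. Multiplying these two bounds gives that the cross term is at least $-\rho L G$, and combining with the decomposition above yields the claim.

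This argument is essentially mechanical, so there is no real analytical obstacle; the only points requiring a moment of care are (i) checking that the normalized perturbation direction has norm $1$, so that the smoothness estimate produces the clean factor $\rho$ rather than a quantity depending on $\|\nabla f(\w_t)\|$, and (ii) applying the bounded-gradient and smoothness assumptions to the \emph{true} gradient $\nabla f$ (not the stochastic estimate $g$), which is legitimate since the lemma is stated for $\nabla f$. I would also implicitly assume $\nabla f(\w_t)\neq 0$ so that the normalization is well defined; the degenerate case $\nabla f(\w_t)=0$ makes both sides trivial and can be dismissed in one line.
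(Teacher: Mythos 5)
Your proposal is correct and follows essentially the same route as the paper: the same decomposition isolating $\|\nabla f(\w_t)\|^2$, followed by bounding the cross term using $L$-smoothness of $\nabla f$ over a displacement of norm $\rho$ and the bounded-gradient assumption. The only cosmetic difference is that the paper rescales the cross term to invoke the inequality $\langle \nabla f(\w_1)-\nabla f(\w_2), \w_1-\w_2\rangle \geq -L\|\w_1-\w_2\|^2$, which is exactly your Cauchy--Schwarz-plus-smoothness step in disguise.
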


\begin{proof}
We first add and subtract a term $||\nabla f(\w_t)||$ to make use of classical inequalities bounding $\langle \nabla f(\w_1) - \nabla f(\w_2), \w_1 - \w_2 \rangle$ by $||\w_1 - \w_2||^2$ for smooth.

\begin{align*}
LHS
=& \langle \nabla f(\w_t) ,\nabla f(\w_t + \rho \frac{\nabla f(\w_t)}{||\nabla f(\w_t)||}) - \nabla f(\w_t) \rangle + ||\nabla f(\w_t)||^2
\\
=& \frac{||\nabla f(\w_t)||}{\rho} \langle \frac{\rho}{||\nabla f(\w_t)||}\nabla f(\w_t) ,\nabla f(\w_t + \rho \frac{\nabla f(\w_t)}{||\nabla f(\w_t)||})) - \nabla f(\w_t) \rangle + ||\nabla f(\w_t)||^2
\\
\geq&  -L \frac{||\nabla f(\w_t)||}{\rho} \bigl|\bigl|\frac{\rho}{||\nabla f(\w_t)||}\nabla f(\w_t)\bigl|\bigl|^2 + ||\nabla f(\w_t)||^2
\\
=& - ||\nabla f(\w_t)|| \rho L + ||\nabla f(\w_t)||^2
\\
\geq& - G \rho L + ||\nabla f(\w_t)||^2
\end{align*}

where the first inequality is that
\begin{align*}
    \langle \nabla f(\w_1)-\nabla f(\w_2), \w_1 - \w_2 \rangle \geq -L||\w_1 - \w_2||^2,
\end{align*}
and the second inequality is the Assumption~\ref{assume:bounded-gradient}.
\end{proof}

\begin{lemma}
\label{lemma:2}
For $\rho > 0$, $L>0$, the iteration~\ref{equ:iter-sam} satisfies following inequality:
\begin{align*}
    \E\langle \nabla f(\w_t), g(\w_{t + \frac{1}{2}})\rangle 
    \geq
    \frac{1}{2}||\nabla f(\w_t)||^2 - 2L^2 \rho^2 - L \rho G
\end{align*}
\end{lemma}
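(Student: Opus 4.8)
The plan is to strip away the stochasticity of the second gradient evaluation by unbiasedness, reduce the claim to a \emph{deterministic} inequality at the noise-free ascent point, and then combine Lemma~\ref{lemma:1} with $L$-smoothness and Young's inequality. Throughout I write the first update from the iteration~\eqref{equ:iter-sam} as $\w_{t+\frac12}=\w_t+\rho\, g(\w_t)/\|g(\w_t)\|$, and I introduce the clean ascent point $\widehat{\w}_t=\w_t+\rho\,\nabla f(\w_t)/\|\nabla f(\w_t)\|$ to which Lemma~\ref{lemma:1} directly applies.

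First I would handle the expectation over the fresh noise in the second gradient. Since $\w_t$ is fixed (hence $\nabla f(\w_t)$ is deterministic) and $\E[g(\w_{t+\frac12})\mid \w_{t+\frac12}]=\nabla f(\w_{t+\frac12})$, conditioning on $\w_{t+\frac12}$ lets me replace $g(\w_{t+\frac12})$ by $\nabla f(\w_{t+\frac12})$ inside the inner product. It therefore suffices to prove the pointwise bound $\langle \nabla f(\w_t),\nabla f(\w_{t+\frac12})\rangle \geq \tfrac12\|\nabla f(\w_t)\|^2-2L^2\rho^2-L\rho G$ for every realization of the first-stage noise, and then take the outer expectation.

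Next I would split $\nabla f(\w_{t+\frac12})=\nabla f(\widehat{\w}_t)+\bigl(\nabla f(\w_{t+\frac12})-\nabla f(\widehat{\w}_t)\bigr)$. The term $\langle\nabla f(\w_t),\nabla f(\widehat{\w}_t)\rangle$ is exactly the quantity Lemma~\ref{lemma:1} lower-bounds by $\|\nabla f(\w_t)\|^2-\rho LG$. For the remaining cross term I would use Cauchy--Schwarz together with Assumption~\ref{assume:l-smoothness} to obtain $\langle\nabla f(\w_t),\nabla f(\w_{t+\frac12})-\nabla f(\widehat{\w}_t)\rangle \geq -\|\nabla f(\w_t)\|\,L\,\|\w_{t+\frac12}-\widehat{\w}_t\|$.

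The key observation is that $\w_{t+\frac12}-\widehat{\w}_t=\rho\bigl(g(\w_t)/\|g(\w_t)\|-\nabla f(\w_t)/\|\nabla f(\w_t)\|\bigr)$ is $\rho$ times a difference of two unit vectors, so $\|\w_{t+\frac12}-\widehat{\w}_t\|\leq 2\rho$ deterministically. This bounds the cross term below by $-2L\rho\,\|\nabla f(\w_t)\|$, and Young's inequality $2L\rho\|\nabla f(\w_t)\|\leq \tfrac12\|\nabla f(\w_t)\|^2+2L^2\rho^2$ converts it into $-\tfrac12\|\nabla f(\w_t)\|^2-2L^2\rho^2$. Adding this to the Lemma~\ref{lemma:1} contribution gives $\|\nabla f(\w_t)\|^2-\rho LG-\tfrac12\|\nabla f(\w_t)\|^2-2L^2\rho^2$, which is precisely the claimed bound. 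I expect the only genuine obstacle to be the bookkeeping in the expectation step, namely justifying that the second-stage noise averages out (requiring conditional unbiasedness of the second stochastic gradient and that the intermediate estimate hold pointwise in the first-stage noise); the rest is Cauchy--Schwarz, the unit-vector bound, and Young's inequality, with the factor $\tfrac12$ in front of $\|\nabla f(\w_t)\|^2$ arising exactly from the Young split.
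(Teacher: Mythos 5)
Your proof is correct and follows essentially the same route as the paper's: both decompose around the clean ascent point $\hat{\w}_{t+\frac{1}{2}}=\w_t+\rho\nabla f(\w_t)/\|\nabla f(\w_t)\|$, invoke Lemma~\ref{lemma:1} for the main term, bound the displacement $\|\w_{t+\frac12}-\hat{\w}_{t+\frac12}\|\leq 2\rho$ as $\rho$ times a difference of unit vectors, and use a Young split to produce the $\tfrac12\|\nabla f(\w_t)\|^2$ and $2L^2\rho^2$ terms, arriving at identical constants. The one substantive difference is where the expectation is taken. The paper keeps the stochastic gradient $g$ in play and bounds the cross term $\langle\nabla f(\w_t),\,g(\w_{t+\frac12})-g(\hat{\w}_{t+\frac12})\rangle$ pointwise, which requires the \emph{stochastic} gradient $g$ (not just $f$) to be $L$-Lipschitz --- an implicit strengthening of Assumption~\ref{assume:l-smoothness} --- but never needs the expectation of $g$ at the random point $\w_{t+\frac12}$, so it is agnostic to whether the second gradient reuses the first-stage minibatch; unbiasedness is only invoked at the deterministic point $\hat{\w}_{t+\frac12}$. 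You instead condition on $\w_{t+\frac12}$ to replace $g(\w_{t+\frac12})$ by $\nabla f(\w_{t+\frac12})$ up front, which lets you work entirely with true gradients and use only the stated smoothness of $f$, but it requires the second-stage noise to be conditionally unbiased given $\w_{t+\frac12}$ --- which fails in the standard SAM implementation where the same minibatch computes both the perturbation and the update gradient. You correctly flagged this as the delicate step; just be aware that it is a genuine modeling assumption rather than pure bookkeeping, and that the paper's ordering trades it for a smoothness assumption on $g$.
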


\begin{proof}
We denote the deterministic values of $\w_{t+\frac{1}{2}}$ as $\hat{\w}_{t+\frac{1}{2}} = \w_t + \rho \frac{\nabla f(\w_t)}{||\nabla f(\w_t)||}$ in this section. After we add and subtract the term $g(\hat{\w}_{t+\frac{1}{2}})$, we have the following equation:
\begin{align*}
    \langle \nabla f(\w_t), g(\w_{t + \frac{1}{2}})\rangle 
    =&
    \langle \nabla f(\w_t), g(\w_t + \rho \frac{g(\w_t)}{||g(\w_t)||}) - g(\hat{\w}_{t+\frac{1}{2}})\rangle  + \langle g(\hat{\w}_{t+\frac{1}{2}}), \nabla f(\w_t) \rangle
\end{align*}
For the first term, we bound it by using the smoothness of $g(\w)$:
\begin{align*}
    - \langle \nabla f(\w_t), g(\w_t + \rho \frac{g(\w_t)}{||g(\w_t)||}) - g(\hat{\w}_{t+\frac{1}{2}})\rangle
    \leq& \frac{1}{2}||g(\w_t + \rho \frac{g(\w_t)}{||g(\w_t)||}) - g(\hat{\w}_{t+\frac{1}{2}})||^2 + \frac{1}{2}||\nabla f(\w_t)||^2
    \\
    \leq& \frac{L^2}{2} ||\w_t + \rho \frac{g(\w_t)}{||g(\w_t)||} - \hat{\w}_{t+\frac{1}{2}}||^2 + \frac{1}{2}||\nabla f(\w_t)||^2
    \\
    =& \frac{L^2}{2} ||\w_t + \rho \frac{g(\w_t)}{||g(\w_t)||} - (\w_t+\rho \frac{\nabla f(\w_t)}{||\nabla f(\w_t)||})||^2 + \frac{1}{2}||\nabla f(\w_t)||^2
    \\
    =& \frac{L^2\rho^2}{2} ||\frac{g(\w_t)}{||g(\w_t)||} - \frac{\nabla f(\w_t)}{||\nabla f(\w_t)||}||^2 + \frac{1}{2}||\nabla f(\w_t)||^2
    \\
    \leq& 2L^2 \rho^2 + \frac{1}{2}||\nabla f(\w_t)||^2
\end{align*}
For the second term, by using the Lemma~\ref{lemma:1}, we have:
\begin{align*}
    \E\langle g(\hat{\w}_{t+\frac{1}{2}}), \nabla f(\w_t) \rangle
    =&
    \langle \nabla f(\hat{\w}_{t+\frac{1}{2}}), \nabla f(\w_t) \rangle
    \\
    =& \langle \nabla f(\w_t + \rho \frac{\nabla f(\w_t)}{||\nabla f(\w_t)||}), \nabla f(\w_t) \rangle
    \\
    \geq& ||\nabla f(\w_t)||^2 - \rho L G
\end{align*}
Asesmbling the two inequalities yields to the result. 
\end{proof}

\begin{lemma}
\label{lemma:3}
For $\eta \leq \frac{1}{L}$, the iteration~\ref{equ:iter-sam} satisfies for all $t>0$:
\begin{align*}
    \E f(\w_{t+1}) \leq \E f(\w_t) - \frac{\eta}{2}\E||\nabla f(\w_t)||^2 + L\eta^2\sigma^2 +(2-L\eta)\eta L^2\rho^2 + (1-L\eta)\eta LG\rho
\end{align*}
\end{lemma}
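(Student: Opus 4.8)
The plan is to prove this as a standard one-step descent inequality, building directly on the two lemmas already established. \textbf{First}, I would start from the $L$-smoothness of $f$ (Assumption~\ref{assume:l-smoothness}), which in its quadratic-upper-bound form gives
\[
f(\w_{t+1}) \le f(\w_t) + \langle \nabla f(\w_t),\, \w_{t+1}-\w_t\rangle + \frac{L}{2}\|\w_{t+1}-\w_t\|^2 .
\]
\textbf{Second}, I would substitute the second line of the SAM iteration~\eqref{equ:iter-sam}, namely $\w_{t+1}-\w_t = -\eta\, g(\w_{t+\frac{1}{2}})$, so that the first-order term becomes $-\eta\langle \nabla f(\w_t), g(\w_{t+\frac{1}{2}})\rangle$ and the second-order term becomes $\tfrac{L\eta^2}{2}\|g(\w_{t+\frac{1}{2}})\|^2$.

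Then I would take expectations and treat the two terms separately. For the first-order term I would invoke Lemma~\ref{lemma:2}: since that lemma \emph{lower}-bounds $\E\langle\nabla f(\w_t), g(\w_{t+\frac{1}{2}})\rangle$, multiplying by $-\eta<0$ turns it into the \emph{upper} bound $-\tfrac{\eta}{2}\E\|\nabla f(\w_t)\|^2 + 2\eta L^2\rho^2 + \eta LG\rho$. This already supplies the leading $-\tfrac{\eta}{2}\E\|\nabla f(\w_t)\|^2$ together with the two $\rho$-dependent error terms, and it is exactly where the discrepancy between the stochastic perturbation direction $g(\w_t)/\|g(\w_t)\|$ and the true direction $\nabla f(\w_t)/\|\nabla f(\w_t)\|$ is absorbed. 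The remaining task is to bound the second-order term $\tfrac{L\eta^2}{2}\E\|g(\w_{t+\frac{1}{2}})\|^2$.

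This second-order term is the crux. I would handle it by conditioning on the randomness up to the perturbed iterate $\w_{t+\frac{1}{2}}$: because $g(\w_{t+\frac{1}{2}})$ is a conditionally unbiased estimate of $\nabla f(\w_{t+\frac{1}{2}})$, the bias--variance split together with Assumption~\ref{assume:bounded-variance} peels off the $\sigma^2$ contribution and leaves $\E\|\nabla f(\w_{t+\frac{1}{2}})\|^2$. Since $\w_{t+\frac{1}{2}}-\w_t = \rho\, g(\w_t)/\|g(\w_t)\|$ has norm exactly $\rho$, $L$-smoothness keeps $\nabla f(\w_{t+\frac{1}{2}})$ within $L\rho$ of $\nabla f(\w_t)$, and the bounded-gradient Assumption~\ref{assume:bounded-gradient} then controls the residual through $G$ and $\rho$. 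Tracking these $O(\eta^2)$ contributions is what produces the $\eta^2L^3\rho^2$ and $\eta^2L^2G\rho$ corrections, which combine with the Lemma~\ref{lemma:2} terms into the stated factors $(2-L\eta)$ and $(1-L\eta)$.

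\textbf{Main obstacle.} The delicate part is precisely this second-order term. The order in which the two stochastic gradients $g(\w_t)$ and $g(\w_{t+\frac{1}{2}})$ are integrated must be respected, since the perturbed point $\w_{t+\frac{1}{2}}$ is itself random through $g(\w_t)$; and the constants must be tracked exactly so that the $O(\eta^2)$ pieces assemble into $(2-L\eta)$ and $(1-L\eta)$ rather than into looser coefficients. Finally, the hypothesis $\eta\le 1/L$ enters here to guarantee $1-L\eta\ge 0$, keeping these coefficients nonnegative so that the inequality is a genuine upper bound; this same smallness of $\eta$ is what will later render the per-step decrease summable in the proof of Theorem~\ref{theorem:sam}.
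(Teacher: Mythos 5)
Your skeleton (the $L$-smooth descent inequality, substitution of the update $\w_{t+1}-\w_t=-\eta\,g(\w_{t+\frac{1}{2}})$, and Lemma~\ref{lemma:2} for the inner product) matches the paper's, but the way you treat the second-order term $\frac{L\eta^2}{2}\E\|g(\w_{t+\frac{1}{2}})\|^2$ does not deliver the lemma as stated. Your bias--variance split leaves $\E\|\nabla f(\w_{t+\frac{1}{2}})\|^2$, which after the smoothness/bounded-gradient step contributes a positive $\frac{L\eta^2}{2}\E\|\nabla f(\w_t)\|^2$ (plus $L^2\eta^2 G\rho+\frac{1}{2}L^3\eta^2\rho^2$). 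Since you have already spent the full coefficient $-\eta$ on Lemma~\ref{lemma:2} to extract $-\frac{\eta}{2}\E\|\nabla f(\w_t)\|^2$, this leftover degrades the leading coefficient to $-\frac{\eta}{2}(1-L\eta)$, strictly weaker than the claimed $-\frac{\eta}{2}$ and degenerate at $\eta=1/L$. Likewise your $\rho$-dependent corrections come out as $\bigl(2+\frac{1}{2}L\eta\bigr)\eta L^2\rho^2$ and $(1+L\eta)\eta LG\rho$, i.e.\ with the wrong sign on the $L\eta$ correction, so the stated factors $(2-L\eta)$ and $(1-L\eta)$ cannot assemble from your decomposition.

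The missing idea is the algebraic identity the paper applies before invoking Lemma~\ref{lemma:2}: $\|g(\w_{t+\frac{1}{2}})\|^2=\|\nabla f(\w_t)-g(\w_{t+\frac{1}{2}})\|^2-\|\nabla f(\w_t)\|^2+2\langle\nabla f(\w_t),g(\w_{t+\frac{1}{2}})\rangle$. This (i) converts the quadratic term into the difference $\|\nabla f(\w_t)-g(\w_{t+\frac{1}{2}})\|^2$, which is controlled by the variance bound plus $L$-smoothness (at most $2\sigma^2+2L^2\rho^2$ in expectation) with no $\|\nabla f(\w_{t+\frac{1}{2}})\|^2$ left over; (ii) donates a helpful $-\frac{L\eta^2}{2}\|\nabla f(\w_t)\|^2$; and (iii) shifts the inner-product coefficient to $-(1-L\eta)\eta$, which is where the hypothesis $\eta\le 1/L$ is actually used (to keep that coefficient nonpositive so the lower bound of Lemma~\ref{lemma:2} can be turned into an upper bound). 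The pieces $-\frac{L\eta^2}{2}\|\nabla f(\w_t)\|^2-\frac{(1-L\eta)\eta}{2}\|\nabla f(\w_t)\|^2=-\frac{\eta}{2}\|\nabla f(\w_t)\|^2$ then recombine exactly, and the $\rho$-terms pick up precisely $(2-L\eta)$ and $(1-L\eta)$. Without this rearrangement your argument proves only a weaker inequality (still sufficient for the final $O(\log T/\sqrt{T})$ rate if one shrinks $\eta_0$, but not the lemma as written).
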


\begin{proof}
By the smoothness of the function $f$, we obtain
\begin{align*}
    f(\w_{t+1}) 
    \leq& 
    f(\w_t) - \eta \langle \nabla f(\w_t), g(\w_{t+\frac{1}{2}}) \rangle + \frac{L\eta^2}{2}||g(\w_{t+\frac{1}{2}})||^2
    \\
    =& f(\w_t) - \eta \langle \nabla f(\w_t), g(\w_{t+\frac{1}{2}}) \rangle + \frac{L\eta^2}{2} ( ||\nabla f(\w_t)-g(\w_{t+\frac{1}{2}})||^2-||\nabla f(\w_t)||^2 + 2\langle \nabla f(\w_t), g(\w_{t+\frac{1}{2}})\rangle)
    \\
    =& f(\w_t) -\frac{L\eta^2}{2}||\nabla f(\w_t)||^2 + \frac{L\eta^2}{2} ||\nabla f(\w_t) - g(\w_{t+\frac{1}{2}})||^2 - (1-L\eta)\eta \langle  \nabla f(\w_t), g(\w_{t+\frac{1}{2}})\rangle
    \\
    \leq& f(\w_t) - \frac{L\eta^2}{2}||\nabla f(\w_t)||^2 + L\eta^2 ||\nabla f(\w_t) - g(\w_t)||^2 + L\eta^2 ||g(\w_t) - g(\w_{t+\frac{1}{2}})||^2 \\& - (1-L\eta)\eta \langle  \nabla f(\w_t), g(\w_{t+\frac{1}{2}})\rangle
    \\
    \leq& f(\w_t) - \frac{L\eta^2}{2}||\nabla f(\w_t)||^2 + L\eta^2 ||\nabla f(\w_t) - g(\w_t)||^2 + L\eta^2 L^2||\w_t - \w_{t+\frac{1}{2}}||^2 \\& - (1-L\eta)\eta \langle  \nabla f(\w_t), g(\w_{t+\frac{1}{2}})\rangle
    \\
    =& f(\w_t) - \frac{L\eta^2}{2}||\nabla f(\w_t)||^2 + L\eta^2 ||\nabla f(\w_t) - g(\w_t)||^2 + \eta^2L^3 \rho^2
    \\& -  (1-L\eta)\eta \langle  \nabla f(\w_t), g(\w_{t+\frac{1}{2}})\rangle
\end{align*}
Taking the expectation and using Lemma~\ref{lemma:2} we obtain
\begin{align*}
\E f(\w_{t+1}) 
\leq& 
\E f(\w_t) - \frac{L\eta^2}{2}\E ||\nabla f(\w_t)||^2 + L\eta^2\E ||\nabla f(\w_t) - g(\w_t)||^2 + \eta^2L^3 \rho^2 \\& - (1-L\eta)\eta \E \langle  \nabla f(\w_t), g(\w_{t+\frac{1}{2}})\rangle
\\
\leq& \E f(\w_t) - \frac{L\eta^2}{2}\E ||\nabla f(\w_t)||^2 + L\eta^2 \sigma^2 + \eta^2L^3 \rho^2 \\& - (1-L\eta)\eta \E \langle  \nabla f(\w_t), g(\w_{t+\frac{1}{2}})\rangle
\\
\leq& \E f(\w_t) - \frac{L\eta^2}{2}\E ||\nabla f(\w_t)||^2 + L\eta^2 \sigma^2 + \eta^2L^3 \rho^2 \\& - (1-L\eta)\eta \left[\frac{1}{2}\E||\nabla f(\w_t)||^2 - 2L^2\rho^2 -L\rho G\right]
\\
=& \E f(\w_t) - \frac{\eta}{2} \E ||\nabla f(\w_t)||^2 + L\eta^2 \sigma^2 + \eta^2L^3 \rho^2 + 2(1-L\eta)\eta  L^2\rho^2 + (1-L\eta)\eta L\rho G
\\
=& \E f(\w_t) - \frac{\eta}{2} \E ||\nabla f(\w_t)||^2 + L\eta^2 \sigma^2 + (2-L\eta)\eta L^2\rho^2 + (1-L\eta)\eta L G \rho
\end{align*}
\end{proof}

\begin{proposition}
\label[Proposition]{pro:1}
Let $\eta_t = \frac{\eta_0}{\sqrt{t}}$ and perturbation amplitude $\rho$ decay with square root of $t$, \emph{e.g.}, $\rho_t=\frac{\rho_0}{\sqrt{t}}$. For $\rho_0 \leq \frac{1}{2}G \eta_0$ and $\eta_0 \leq \frac{1}{L}$, we have

\begin{align*}
    \frac{1}{T} \sum_{t=1}^T \E ||\nabla f(\w_t)||^2 \leq& C_1 \frac{1}{\sqrt{T}} + C_2\frac{\log T}{\sqrt{T}},
\end{align*}
where $C_1=\frac{2}{\eta_0}(f(\w_0)-\E f(\w_T))$ and $C_2=2 (L \sigma^2 \eta_0 + LG\rho_0)$.

\begin{proof}
By Lemma~\ref{lemma:3}, we replace $\rho$ and $\eta$ with $\rho_t=\frac{\rho_0}{\sqrt{t}}$ and $\eta_t = \frac{\eta_0}{\sqrt{t}}$, we have
\begin{align*}
   \E f(\w_{t+1}) \leq& \E f(\w_t) - \frac{\eta_t}{2} \E ||\nabla f(\w_t)||^2 + L\eta_t^2 \sigma^2 + 2L^2\eta_t\rho_t^2 - L^3\eta_t^2\rho_t^2 + (1-L\eta_t)\eta_t L G \rho_t.
   \\
   \leq& \E f(\w_t) - \frac{\eta_t}{2} \E ||\nabla f(\w_t)||^2 + L\eta_t^2 \sigma^2 + 2L^2\eta_t\rho_t^2 + (1-L\eta_t)\eta_t L G \rho_t.
\end{align*}
Take telescope sum, we have
\begin{align*}
    \E f(\w_{T}) - f(\w_0) \leq&  - \sum_{t=1}^T \frac{\eta_t}{2} \E ||\nabla f(\w_t)||^2 + (L \sigma^2 \eta_0^2 + LG\rho_0\eta_0) \sum_{t=1}^T\frac{1}{t} + (2L^2\eta_0\rho_0^2 - L^2G\eta_0^2\rho_0)\sum_{t=1}^T\frac{1}{t^{\frac{3}{2}}} 
\end{align*}
Under $\rho_0 \leq \frac{1}{2}G \eta_0$, the last term will be less than 0, which means:
\begin{align*}
    \E f(\w_{T}) - f(\w_0) \leq&  - \sum_{t=1}^T \frac{\eta_t}{2} \E ||\nabla f(\w_t)||^2 + (L \sigma^2 \eta_0^2 + LG\rho_0\eta_0) \sum_{t=1}^T\frac{1}{t}.
\end{align*}
With 
\begin{align*}
    \frac{\eta_T}{2}\sum_{t=1}^T \E||\nabla f(\w_t)||^2 \leq& \sum_{t=1}^T \frac{\eta_t}{2} \E ||\nabla f(\w_t)||^2 \leq f(\w_0) - \E f(\w_{T}) + (L \sigma^2 \eta_0^2 + LG\rho_0\eta_0) \sum_{t=1}^T\frac{1}{t},
\end{align*}
we have 
\begin{align*}
    \frac{\eta_0}{2\sqrt{T}}\sum_{t=1}^T \E||\nabla f(\w_t)||^2 \leq& f(\w_0) - \E f(\w_{T}) + (L \sigma^2 \eta_0^2 + LG\rho_0\eta_0) \sum_{t=1}^T\frac{1}{t}
    \\
    \leq& f(\w_0) - \E f(\w_{T}) + (L \sigma^2 \eta_0^2 + LG\rho_0\eta_0) \log T.
\end{align*}
Finally, we achieve the result:
\begin{align*}
    \frac{1}{T}\sum_{t=1}^T\E||\nabla f(\w_t)||^2 \leq& \frac{2 \cdot (f(\w_0) - \E f(\w_{T}))}{\eta_0}\frac{1}{\sqrt{T}} + 2 (L \sigma^2 \eta_0 + LG\rho_0)\frac{\log T}{\sqrt{T}},
\end{align*}
which shows that SAM can converge at the rate of $O(\log T/\sqrt{T})$.
\end{proof}
\end{proposition}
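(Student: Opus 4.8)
The target Theorem~\ref{theorem:ssam} is the sparse counterpart of Theorem~\ref{theorem:sam}, so the plan is to replay the four-step chain behind Proposition~\ref{pro:1} (the analogues of Lemmas~\ref{lemma:1}--\ref{lemma:3}) for the \emph{masked} iteration
\begin{equation*}
\w_{t+\frac12}=\w_t+\rho_t\frac{g(\w_t)}{\|g(\w_t)\|}\odot\m_t,\qquad \w_{t+1}=\w_t-\eta_t\,g(\w_{t+\frac12}),
\end{equation*}
where $\m_t\in\{0,1\}^d$ is the mask in force at step $t$. The one structural property I would exploit throughout is that a binary mask acts as a coordinate projection and is therefore nonexpansive: $\|\boldsymbol{v}\odot\m_t\|\le\|\boldsymbol{v}\|$ for every $\boldsymbol{v}$. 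Consequently the effective step obeys $\|\w_{t+\frac12}-\w_t\|\le\rho_t$, and the stochastic-versus-deterministic perturbation gap obeys $\bigl\|\bigl(\frac{g(\w_t)}{\|g(\w_t)\|}-\frac{\nabla f(\w_t)}{\|\nabla f(\w_t)\|}\bigr)\odot\m_t\bigr\|\le 2$. These are exactly the two quantities the dense SAM proof controlled, so the masked analysis will inherit the same constants up to one deliberate bookkeeping change described below.

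First I would re-establish the masked version of Lemma~\ref{lemma:1}. Writing $\boldsymbol{\delta}_t=\rho_t\frac{\nabla f(\w_t)}{\|\nabla f(\w_t)\|}\odot\m_t$, the parallel-direction identity of the dense case is unavailable because $\boldsymbol{\delta}_t$ is no longer aligned with $\nabla f(\w_t)$; instead I would use plain Cauchy--Schwarz with $L$-smoothness (Assumption~\ref{assume:l-smoothness}) and $\|\boldsymbol{\delta}_t\|\le\rho_t$ to get $\langle\nabla f(\w_t),\nabla f(\w_t+\boldsymbol{\delta}_t)\rangle\ge\|\nabla f(\w_t)\|^2-LG\rho_t$, the last step invoking Assumption~\ref{assume:bounded-gradient}. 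Feeding this into the masked analogue of Lemma~\ref{lemma:2} (split $\langle\nabla f(\w_t),g(\w_{t+\frac12})\rangle$ at the deterministic point $\hat{\w}_{t+\frac12}=\w_t+\boldsymbol{\delta}_t$, bound the stochastic gap by Young's inequality and $L$-smoothness using the gap bound above, and take expectations) yields $\E\langle\nabla f(\w_t),g(\w_{t+\frac12})\rangle\ge\tfrac12\|\nabla f(\w_t)\|^2-2L^2\rho_t^2-LG\rho_t$. Combining this with the descent inequality from $L$-smoothness of $f$, where the curvature term contributes a \emph{positive} second-order piece $L^3\eta_t^2\rho_t^2$ (via $\|g(\w_t)-g(\w_{t+\frac12})\|^2\le L^2\|\w_t-\w_{t+\frac12}\|^2\le L^2\rho_t^2$) and Assumption~\ref{assume:bounded-variance} bounds the noise, gives a one-step descent lemma of the form
\begin{equation*}
\E f(\w_{t+1})\le\E f(\w_t)-\tfrac{\eta_t}{2}\E\|\nabla f(\w_t)\|^2+L\eta_t^2\sigma^2+L^3\eta_t^2\rho_t^2+2L^2\eta_t\rho_t^2+(1-L\eta_t)\eta_t LG\rho_t.
\end{equation*}

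Finally I would insert $\eta_t=\eta_0/\sqrt{t}$ and $\rho_t=\rho_0/\sqrt{t}$ and telescope from $t=1$ to $T$ (the $t=0$ term being a harmless off-by-one). The error terms then split by decay order. The $L\eta_t^2\sigma^2$ and $LG\eta_t\rho_t$ pieces are $\Theta(t^{-1})$ and produce the dominant $(L\sigma^2\eta_0^2+LG\rho_0\eta_0)\sum_t t^{-1}\le(L\sigma^2\eta_0^2+LG\rho_0\eta_0)\log T$, which becomes the $C_4\,\log T/\sqrt{T}$ rate. The $2L^2\eta_t\rho_t^2$ piece is $\Theta(t^{-3/2})$ and, together with the $-L^2G\eta_t^2\rho_t$ tail of the last term, combines into a coefficient proportional to $(c\,\rho_0-G\eta_0)$; the sharper hypothesis $\rho_0\le G\eta_0/5$ is exactly what renders this intermediate-order contribution nonpositive, so it may be discarded. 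Crucially, unlike the SAM proof I would \emph{retain} the $L^3\eta_t^2\rho_t^2=L^3\eta_0^2\rho_0^2\,t^{-2}$ piece rather than dropping it, and sum it: $\sum_{t\ge1}t^{-2}=\tfrac{\pi^2}{6}$ contributes the finite constant $L^3\eta_0^2\rho_0^2\tfrac{\pi^2}{6}$, which is precisely the extra summand distinguishing $C_3$ from $C_1$. Lower-bounding $\sum_t\tfrac{\eta_t}{2}\E\|\nabla f(\w_t)\|^2\ge\tfrac{\eta_0}{2\sqrt{T}}\sum_t\E\|\nabla f(\w_t)\|^2$ and rearranging then delivers the stated $C_3/\sqrt{T}+C_4\,\log T/\sqrt{T}$ bound.

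The main obstacle I anticipate is not any single inequality but the bookkeeping of the $\rho_t^2$ terms across the time-varying masks $\m_t$: I must verify that the projection inequalities hold pointwise at each step irrespective of whether $\m_t$ is produced by the Fisher rule or the dynamic rule, and that the second-order term is tracked with the correct sign so that it survives as the $\tfrac{\pi^2}{6}$ constant in $C_3$ while the intermediate $\Theta(t^{-3/2})$ term is exactly neutralized by $\rho_0\le G\eta_0/5$.
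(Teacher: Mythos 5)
You have proved the wrong statement. The statement in question is Proposition~\ref{pro:1}, i.e., the convergence rate of \emph{dense} SAM (Theorem~\ref{theorem:sam}) under the hypothesis $\rho_0 \leq \frac{1}{2}G\eta_0$, with constants $C_1=\frac{2}{\eta_0}(f(\w_0)-\E f(\w_T))$ and $C_2=2(L\sigma^2\eta_0+LG\rho_0)$. Your proposal instead targets Theorem~\ref{theorem:ssam}/Proposition~\ref{pro:2}: you analyze the masked iteration $\w_{t+\frac{1}{2}}=\w_t+\rho_t\frac{g(\w_t)}{\|g(\w_t)\|}\odot\m_t$, invoke the sparse-case hypothesis $\rho_0\leq G\eta_0/5$, and land on the SSAM constants $C_3$ (with the extra $L^3\eta_0^2\rho_0^2\frac{\pi^2}{6}$ summand) and $C_4$. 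Moreover, you explicitly presuppose the result you were asked to prove --- your plan is to ``replay the four-step chain behind Proposition~\ref{pro:1}'' --- so nothing in the proposal establishes the analogues of Lemmas~\ref{lemma:1}--\ref{lemma:3} or the telescoping argument for the unmasked iteration from scratch; that chain is cited as known background rather than derived.

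The mismatch is not merely cosmetic, because even specializing your argument to the all-ones mask (so $\boldsymbol{e}_t=0$) does not recover Proposition~\ref{pro:1} as stated. In your one-step descent inequality you deliberately retain the curvature term with a \emph{positive} sign, $+L^3\eta_t^2\rho_t^2$, and sum it into the leading constant via $\sum_{t\geq 1}t^{-2}=\frac{\pi^2}{6}$. In the paper's dense analysis (Lemma~\ref{lemma:3}) the corresponding term enters as $(2-L\eta)\eta L^2\rho^2 = 2\eta L^2\rho^2 - L^3\eta^2\rho^2$, i.e., the $L^3\eta^2\rho^2$ piece is \emph{negative}; the proof of Proposition~\ref{pro:1} simply discards it, so no $\frac{\pi^2}{6}$ residue survives and $C_1$ contains only $f(\w_0)-\E f(\w_T)$. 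Your route would therefore at best prove Proposition~\ref{pro:1} with the strictly larger constant $\frac{2}{\eta_0}\bigl(f(\w_0)-\E f(\w_T)+L^3\eta_0^2\rho_0^2\frac{\pi^2}{6}\bigr)$, and under a different condition on $\rho_0$. For fairness: where your ingredients overlap with the paper they are sound --- your Cauchy--Schwarz-plus-smoothness derivation of $\langle\nabla f(\w_t),\nabla f(\w_t+\boldsymbol{\delta}_t)\rangle\geq\|\nabla f(\w_t)\|^2-LG\rho_t$ is a valid and in fact more general substitute for Lemma~\ref{lemma:1}, and masking the deterministic comparison point is a neat way to avoid the $\boldsymbol{e}_t$ bookkeeping of the paper's Lemma~\ref{lemma:4} --- but as an answer to the stated problem, the proposal proves a different bound and leans on the target proposition as an assumption.
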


\subsection{Proof of Theorem 2}
Suppose we can obtain the noisy observation gradient $g(\w_t)$ of true gradient $\nabla f(\w_t)$, and the mask $\m$, we can write the iteration of SAM.
Consider the iteration of Sparse SAM:
\begin{align}
\label{equ:iter-ssam}
\left\{
    \begin{array}{ll}
        \tilde{\w}_{t+\frac{1}{2}}=&\w_t + \rho \frac{g(\w_t)}{||g(\w_t)||} \odot \m_t \\
        \w_{t+1}=&\w_t - g(\tilde{\w}_{t+\frac{1}{2}})
    \end{array}
\right.
\end{align}
Let us denote the difference as $\w_{t+\frac{1}{2}} - \tilde{\w}_{t+\frac{1}{2}}=\boldsymbol{e}_t$.

\begin{lemma}
\label{lemma:4}
With $\rho > 0$, we have:
\begin{align*}
    \E\langle \nabla f(\w_t), g(\tilde{\w}_{t + \frac{1}{2}})\rangle 
    \geq
    \frac{1}{2}||\nabla f(\w_t)||^2 - 4 L^2 \rho^2 - L \rho G - L^2 ||\boldsymbol{e}_t||^2
\end{align*}

\begin{proof}
Similar to Lemma~\ref{lemma:2}, We denote the true gradient as $\hat{\w}_{t+\frac{1}{2}} = \w_t + \rho \frac{\nabla f(\w_t)}{||\nabla f(\w_t)||}$, and also add and subtract the item $g(\tilde{\w}_{t + \frac{1}{2}})$:
\begin{align*}
    \langle \nabla f(\w_t), g(\tilde{\w}_{t + \frac{1}{2}})\rangle 
    =&
    \langle \nabla f(\w_t), g(\tilde{\w}_{t + \frac{1}{2}}) - g(\hat{\w}_{t+\frac{1}{2}})\rangle  + \langle g(\hat{\w}_{t+\frac{1}{2}}), \nabla f(\w_t) \rangle
\end{align*}
For the first term, we bound it by using the smoothness of $g(\w)$:
\begin{align*}
    - \langle \nabla f(\w_t), g(\tilde{\w}_{t + \frac{1}{2}}) - g(\hat{\w}_{t+\frac{1}{2}})\rangle
    \leq& \frac{1}{2}||g(\tilde{\w}_{t + \frac{1}{2}}) - g(\hat{\w}_{t+\frac{1}{2}})||^2 + \frac{1}{2}||\nabla f(\w_t)||^2
    \\
    \leq& \frac{L^2}{2} ||\tilde{\w}_{t + \frac{1}{2}} - \hat{\w}_{t+\frac{1}{2}}||^2 + \frac{1}{2}||\nabla f(\w_t)||^2
    \\
    =& \frac{L^2}{2} ||\w_{t + \frac{1}{2}} - \boldsymbol{e}_t - \hat{\w}_{t+\frac{1}{2}}||^2 + \frac{1}{2}||\nabla f(\w_t)||^2
    \\
    \leq& L^2 (||\w_{t+\frac{1}{2}} - \hat{\w}_{t+\frac{1}{2}}||^2 + ||\boldsymbol{e}_t||^2) + \frac{1}{2}||\nabla f(\w_t)||^2
    \\
    =& L^2 (\rho^2 ||\frac{g(\w_t)}{||g(\w_t)||} - \frac{\nabla f(\w_t)}{||\nabla f(\w_t)||}||^2 + ||\boldsymbol{e}_t||^2) + \frac{1}{2}||\nabla f(\w_t)||^2
    \\
    \leq& 4L^2 \rho^2 + L^2||\boldsymbol{e}_t||^2 + \frac{1}{2}||\nabla f(\w_t)||^2
\end{align*}
For the second term, we do the same in Lemma~\ref{lemma:2}:
\begin{align*}
    \E\langle g(\hat{\w}_{t+\frac{1}{2}}), \nabla f(\w_t) \rangle
    \geq& ||\nabla f(\w_t)||^2 - \rho L G.
\end{align*}
Assembling the two inequalities yields to the result. 
\end{proof}
\end{lemma}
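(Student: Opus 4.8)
The plan is to mirror the argument of Lemma~\ref{lemma:2}, treating the masking error $\boldsymbol{e}_t$ as an extra perturbation that smoothness must absorb. First I would introduce the fully deterministic perturbed point $\hat{\w}_{t+\frac{1}{2}} = \w_t + \rho \frac{\nabla f(\w_t)}{||\nabla f(\w_t)||}$, which uses the \emph{true} gradient and carries no mask, and split the inner product by adding and subtracting $g(\hat{\w}_{t+\frac{1}{2}})$:
\begin{align*}
\langle \nabla f(\w_t), g(\tilde{\w}_{t+\frac{1}{2}})\rangle
= \langle \nabla f(\w_t), g(\tilde{\w}_{t+\frac{1}{2}}) - g(\hat{\w}_{t+\frac{1}{2}})\rangle
+ \langle g(\hat{\w}_{t+\frac{1}{2}}), \nabla f(\w_t)\rangle.
\end{align*}
Here the second summand is the ``signal'' that I will lower-bound via Lemma~\ref{lemma:1}, while the first is the ``error'' to be controlled by smoothness.

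For the error term I would apply Young's inequality $\langle a,b\rangle \geq -\tfrac{1}{2}||a||^2 - \tfrac{1}{2}||b||^2$ and then the $L$-smoothness of $g$ (Assumption~\ref{assume:l-smoothness}) to obtain $||g(\tilde{\w}_{t+\frac{1}{2}}) - g(\hat{\w}_{t+\frac{1}{2}})||^2 \leq L^2 ||\tilde{\w}_{t+\frac{1}{2}} - \hat{\w}_{t+\frac{1}{2}}||^2$. The new ingredient relative to the dense case is the relation $\tilde{\w}_{t+\frac{1}{2}} = \w_{t+\frac{1}{2}} - \boldsymbol{e}_t$, which lets the displacement split into the directional mismatch $\w_{t+\frac{1}{2}} - \hat{\w}_{t+\frac{1}{2}} = \rho\bigl(\frac{g(\w_t)}{||g(\w_t)||} - \frac{\nabla f(\w_t)}{||\nabla f(\w_t)||}\bigr)$ and the masking error $\boldsymbol{e}_t$. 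Applying $||a+b||^2 \leq 2||a||^2 + 2||b||^2$ isolates these two pieces, and since both differenced vectors are unit vectors we have $||\frac{g(\w_t)}{||g(\w_t)||} - \frac{\nabla f(\w_t)}{||\nabla f(\w_t)||}||^2 \leq 4$. This bounds the error term below by $-\tfrac{1}{2}||\nabla f(\w_t)||^2 - 4L^2\rho^2 - L^2||\boldsymbol{e}_t||^2$; the constant doubles from the $2L^2\rho^2$ of Lemma~\ref{lemma:2} to $4L^2\rho^2$ precisely because of this factor-of-two split, and the genuinely new $L^2||\boldsymbol{e}_t||^2$ penalty is exactly the cost of masking.

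For the signal term I would take the expectation over the stochastic-gradient noise. Since $\hat{\w}_{t+\frac{1}{2}}$ is deterministic given $\w_t$, unbiasedness yields $\E[g(\hat{\w}_{t+\frac{1}{2}})] = \nabla f(\hat{\w}_{t+\frac{1}{2}})$, and Lemma~\ref{lemma:1} then gives $\langle \nabla f(\hat{\w}_{t+\frac{1}{2}}), \nabla f(\w_t)\rangle \geq ||\nabla f(\w_t)||^2 - \rho L G$. Adding the two bounds cancels half of $||\nabla f(\w_t)||^2$ and produces exactly the claimed inequality. I expect the only delicate point to be the displacement decomposition isolating $\boldsymbol{e}_t$: one must verify that the mask enters additively and that the factor-of-two split is what upgrades $2L^2\rho^2$ to $4L^2\rho^2$ while introducing the masking penalty---everything else is the routine Young/smoothness bookkeeping already carried out for the dense case.
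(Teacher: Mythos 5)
Your proposal is correct and follows essentially the same route as the paper's proof: the same split via $g(\hat{\w}_{t+\frac{1}{2}})$, Young's inequality plus $L$-smoothness on the error term, the decomposition $\tilde{\w}_{t+\frac{1}{2}} = \w_{t+\frac{1}{2}} - \boldsymbol{e}_t$ with the factor-of-two split (which you rightly identify as the source of $4L^2\rho^2$ replacing $2L^2\rho^2$ and of the $L^2\|\boldsymbol{e}_t\|^2$ penalty), and unbiasedness plus Lemma~\ref{lemma:1} on the signal term. No gaps to report.
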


\begin{lemma}
For $\eta \leq \frac{1}{L}$, the iteration~\ref{equ:iter-ssam} satisfies for all $t>0$:
\begin{align*}
    \E f(\w_{t+1}) \leq& \E f(\w_t) - \frac{\eta}{2} \E ||\nabla f(\w_t)||^2 + L\eta^2\sigma^2 + 2\eta L^2\rho^2(2-L\eta) + (1-L\eta)\eta LG\rho \\
    & + (1+L\eta)\eta L^2||\boldsymbol{e}_t||^2
\end{align*}
\end{lemma}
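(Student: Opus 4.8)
The plan is to mirror the SAM descent lemma (Lemma~\ref{lemma:3}), replacing the clean perturbation by the masked one and routing the mask error $\boldsymbol{e}_t$ through both the variance term and the residual inner-product term. First I would apply $L$-smoothness of $f$ along the update $\w_{t+1}=\w_t-\eta\, g(\tilde{\w}_{t+\frac{1}{2}})$ of iteration~\ref{equ:iter-ssam} to obtain
\begin{align*}
f(\w_{t+1}) \leq f(\w_t) - \eta \langle \nabla f(\w_t), g(\tilde{\w}_{t+\frac{1}{2}})\rangle + \tfrac{L\eta^2}{2}||g(\tilde{\w}_{t+\frac{1}{2}})||^2,
\end{align*}
and then rewrite $||g(\tilde{\w}_{t+\frac{1}{2}})||^2 = ||\nabla f(\w_t)-g(\tilde{\w}_{t+\frac{1}{2}})||^2 - ||\nabla f(\w_t)||^2 + 2\langle \nabla f(\w_t), g(\tilde{\w}_{t+\frac{1}{2}})\rangle$. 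This splits the bound into a $-\tfrac{L\eta^2}{2}||\nabla f(\w_t)||^2$ piece, a variance piece $\tfrac{L\eta^2}{2}||\nabla f(\w_t)-g(\tilde{\w}_{t+\frac{1}{2}})||^2$, and a residual inner product $-(1-L\eta)\eta\langle \nabla f(\w_t), g(\tilde{\w}_{t+\frac{1}{2}})\rangle$, where $1-L\eta\geq 0$ by the hypothesis $\eta\leq 1/L$.

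Next I would bound the variance piece. Writing $\nabla f(\w_t)-g(\tilde{\w}_{t+\frac{1}{2}}) = (\nabla f(\w_t)-g(\w_t)) + (g(\w_t)-g(\tilde{\w}_{t+\frac{1}{2}}))$ and applying $||a+b||^2\le 2||a||^2+2||b||^2$, then using $L$-smoothness of $g$ together with the decomposition $\tilde{\w}_{t+\frac{1}{2}}=\w_{t+\frac{1}{2}}-\boldsymbol{e}_t$ and $||\w_t-\w_{t+\frac{1}{2}}||=\rho$, I get $||\w_t-\tilde{\w}_{t+\frac{1}{2}}||^2 \leq 2\rho^2 + 2||\boldsymbol{e}_t||^2$. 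Taking expectation and invoking Assumption~\ref{assume:bounded-variance} yields
\begin{align*}
\tfrac{L\eta^2}{2}\,\E||\nabla f(\w_t)-g(\tilde{\w}_{t+\frac{1}{2}})||^2 \leq L\eta^2\sigma^2 + 2L^3\eta^2\rho^2 + 2L^3\eta^2||\boldsymbol{e}_t||^2,
\end{align*}
which is where the $\boldsymbol{e}_t$ contribution first enters. For the residual inner product I would apply Lemma~\ref{lemma:4} directly; since $1-L\eta\geq 0$ the lower bound passes through with the correct sign, giving
\begin{align*}
-(1-L\eta)\eta\, \E\langle \nabla f(\w_t), g(\tilde{\w}_{t+\frac{1}{2}})\rangle
\leq{}& -(1-L\eta)\tfrac{\eta}{2}\E||\nabla f(\w_t)||^2 + 4(1-L\eta)\eta L^2\rho^2 \\
&+ (1-L\eta)\eta LG\rho + (1-L\eta)\eta L^2||\boldsymbol{e}_t||^2.
\end{align*}

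The final step is bookkeeping. The two gradient-norm contributions $-\tfrac{L\eta^2}{2}$ and $-(1-L\eta)\tfrac{\eta}{2}$ collapse to $-\tfrac{\eta}{2}$; the $\rho^2$ terms $2L^3\eta^2\rho^2 + 4(1-L\eta)\eta L^2\rho^2$ collect into $2\eta L^2\rho^2(2-L\eta)$; the mask-error terms $2L^3\eta^2||\boldsymbol{e}_t||^2 + (1-L\eta)\eta L^2||\boldsymbol{e}_t||^2$ collect into $(1+L\eta)\eta L^2||\boldsymbol{e}_t||^2$; and the remaining $L\eta^2\sigma^2$ and $(1-L\eta)\eta LG\rho$ carry over unchanged, producing the claimed inequality. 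I expect the main obstacle to be ensuring the two separate appearances of $\boldsymbol{e}_t$ combine cleanly into the stated $(1+L\eta)$ factor. This forces a deliberate choice in the variance step: one must bound $||\w_t-\tilde{\w}_{t+\frac{1}{2}}||^2$ by the looser $2\rho^2+2||\boldsymbol{e}_t||^2$ rather than the tighter value available for plain SAM, so that the mask error is already present in the variance term and adds to the $\boldsymbol{e}_t$ term coming out of Lemma~\ref{lemma:4}. Making that looser decision consistently is the one nonobvious point; the remaining arithmetic is identical in spirit to Lemma~\ref{lemma:3}.
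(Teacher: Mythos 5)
Your proof is correct and takes essentially the same route as the paper's: the identical smoothness expansion and rewriting of $||g(\tilde{\w}_{t+\frac{1}{2}})||^2$, the same split of the deviation term via $||\w_t-\tilde{\w}_{t+\frac{1}{2}}||^2 \leq 2(\rho^2+||\boldsymbol{e}_t||^2)$ so that $\boldsymbol{e}_t$ enters both there and through Lemma~\ref{lemma:4}, and the same bookkeeping collapsing the gradient-norm, $\rho^2$, and $\boldsymbol{e}_t$ coefficients to $-\frac{\eta}{2}$, $2\eta L^2\rho^2(2-L\eta)$, and $(1+L\eta)\eta L^2||\boldsymbol{e}_t||^2$. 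No gaps to report.
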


\begin{proof}
By the smoothness of the function $f$, we obtain
\begin{align*}
    f(\w_{t+1}) 
    \leq& 
    f(\w_t) - \eta \langle \nabla f(\w_t), g(\tilde{\w}_{t+\frac{1}{2}}) \rangle + \frac{L\eta^2}{2}||g(\tilde{\w}_{t+\frac{1}{2}})||^2
    \\
    =& f(\w_t) - \eta \langle \nabla f(\w_t), g(\tilde{\w}_{t+\frac{1}{2}}) \rangle + \frac{L\eta^2}{2} ( ||\nabla f(\w_t)-g(\tilde{\w}_{t+\frac{1}{2}})||^2-||\nabla f(\w_t)||^2 + 2\langle \nabla f(\w_t), g(\tilde{\w}_{t+\frac{1}{2}})\rangle)
    \\
    =& f(\w_t) -\frac{L\eta^2}{2}||\nabla f(\w_t)||^2 + \frac{L\eta^2}{2} ||\nabla f(\w_t) - g(\tilde{\w}_{t+\frac{1}{2}})||^2 - (1-L\eta)\eta \langle  \nabla f(\w_t), g(\tilde{\w}_{t+\frac{1}{2}})\rangle
    \\
    \leq& f(\w_t) - \frac{L\eta^2}{2}||\nabla f(\w_t)||^2 + L\eta^2 ||\nabla f(\w_t) - g(\w_t)||^2 + L\eta^2 ||g(\w_t) - g(\tilde{\w}_{t+\frac{1}{2}})||^2 \\& - (1-L\eta)\eta \langle  \nabla f(\w_t), g(\tilde{\w}_{t+\frac{1}{2}})\rangle
    \\
    \leq& f(\w_t) - \frac{L\eta^2}{2}||\nabla f(\w_t)||^2 + L\eta^2 ||\nabla f(\w_t) - g(\w_t)||^2 + L\eta^2 L^2||\w_t - \tilde{\w}_{t+\frac{1}{2}}||^2 \\& - (1-L\eta)\eta \langle  \nabla f(\w_t), g(\tilde{\w}_{t+\frac{1}{2}})\rangle
    \\
    =& f(\w_t) - \frac{L\eta^2}{2}||\nabla f(\w_t)||^2 + L\eta^2 ||\nabla f(\w_t) - g(\w_t)||^2 + \eta^2L^3 ||\w_t - \w_{t+\frac{1}{2}} + \boldsymbol{e}_t||^2
    \\& -  (1-L\eta)\eta \langle  \nabla f(\w_t), g(\tilde{\w}_{t+\frac{1}{2}} )\rangle
    \\
    \leq& f(\w_t) - \frac{L\eta^2}{2}||\nabla f(\w_t)||^2 + L\eta^2 ||\nabla f(\w_t) - g(\w_t)||^2 + 2 \eta^2L^3 (||\w_t - \w_{t+\frac{1}{2}}||^2 + ||\boldsymbol{e}_t||^2)
    \\& -  (1-L\eta)\eta \langle  \nabla f(\w_t), g(\tilde{\w}_{t+\frac{1}{2}} )\rangle
    \\
    =& f(\w_t) - \frac{L\eta^2}{2}||\nabla f(\w_t)||^2 + L\eta^2 ||\nabla f(\w_t) - g(\w_t)||^2 + 2 \eta^2L^3 ( \rho^2 + ||\boldsymbol{e}_t||^2)
    \\& -  (1-L\eta)\eta \langle  \nabla f(\w_t), g(\tilde{\w}_{t+\frac{1}{2}} )\rangle
\end{align*}
Taking the expectation and using Lemma~\ref{lemma:4} we obtain
\begin{align*}
\E f(\w_{t+1}) 
\leq& 
\E f(\w_t) - \frac{L\eta^2}{2}\E ||\nabla f(\w_t)||^2 + L\eta^2\E ||\nabla f(\w_t) - g(\w_t)||^2 + 2\eta^2L^3 ( \rho^2 + ||\boldsymbol{e}_t||^2) \\& - (1-L\eta)\eta \E \langle  \nabla f(\w_t), g(\tilde{\w}_{t+\frac{1}{2}})\rangle
\\
\leq& \E f(\w_t) - \frac{L\eta^2}{2}\E ||\nabla f(\w_t)||^2 + L\eta^2 \sigma^2 + 2\eta^2L^3 ( \rho^2 + ||\boldsymbol{e}_t||^2) \\& - (1-L\eta)\eta \E \langle  \nabla f(\w_t), g(\tilde{\w}_{t+\frac{1}{2}})\rangle
\\
\leq& \E f(\w_t) - \frac{L\eta^2}{2}\E ||\nabla f(\w_t)||^2 + L\eta^2 \sigma^2 + 2\eta^2L^3 ( \rho^2 + ||\boldsymbol{e}_t||^2) \\& - (1-L\eta)\eta \left[\frac{1}{2}||\nabla f(\w_t)||^2 - 4 L^2 \rho^2 - L \rho G - L^2 ||\boldsymbol{e}_t||^2\right]
\\
=& \E f(\w_t) - \frac{\eta}{2} \E ||\nabla f(\w_t)||^2 + L\eta^2\sigma^2 + 2\eta L^2\rho^2(2-L\eta) + (1-L\eta)\eta LG\rho \\& + (1+L\eta)\eta L^2||\rho \frac{g(\w_t)}{||g(\w_t)||} \odot \m_t - \rho \frac{g(\w_t)}{||g(\w_t)||}||^2
\\
=& \E f(\w_t) - \frac{\eta}{2} \E ||\nabla f(\w_t)||^2 + L\eta^2\sigma^2 + 2\eta L^2\rho^2(2-L\eta) + (1-L\eta)\eta LG\rho \\& + (1+L\eta)\eta L^2||\boldsymbol{e}_t||^2
\end{align*}
\end{proof}

\begin{proposition}
\label[Proposition]{pro:2}
Let us $\eta_t=\frac{\eta_0}{\sqrt{t}}$ and perturbation amplitude $\rho$ decay with square root of $t$, \emph{e.g.}, $\rho_t=\frac{\rho_0}{\sqrt{t}}$. With $\rho_0 \leq G\eta_0 / 5$, we have:
\begin{align*}
    \frac{1}{T}\sum_{t=1}^T \E||\nabla f(\w)||^2 \leq&
    C_3\frac{1}{\sqrt{T}} + C_4\frac{\log T}{\sqrt{T}},
\end{align*}
where $C_3=\frac{2}{\eta_0}(f(\w_0-\E f(\w_T)+L^3\eta_0^2\rho_0^2\frac{\pi^2}{6})$ and $C_4=2(L\sigma^2\eta_0+LG\rho_0)$.
\end{proposition}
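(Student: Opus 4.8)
The plan is to follow the template of Proposition~\ref{pro:1} almost verbatim, starting from the per-step descent inequality established in the Lemma immediately preceding this Proposition, and to isolate the single new quantity that distinguishes SSAM from SAM, namely the masking error $\boldsymbol{e}_t$. The first and most important step is to control this error. Since $\boldsymbol{e}_t = \rho_t \frac{g(\w_t)}{\|g(\w_t)\|}\odot(\boldsymbol{1}-\m_t)$ and $\m_t\in\{0,1\}^d$ is binary, zeroing out a subset of the coordinates of a unit vector can only decrease its norm, so $\|\boldsymbol{e}_t\|^2 \le \rho_t^2\big\|\tfrac{g(\w_t)}{\|g(\w_t)\|}\big\|^2 = \rho_t^2$. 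Substituting this bound into the descent Lemma replaces the stochastic mask term $(1+L\eta_t)\eta_t L^2\|\boldsymbol{e}_t\|^2$ by the deterministic $(1+L\eta_t)\eta_t L^2\rho_t^2$.

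Next I would plug in $\eta_t=\eta_0/\sqrt{t}$ and $\rho_t=\rho_0/\sqrt{t}$ and expand every error term into pure powers of $t$, grouping them as $t^{-1}$, $t^{-3/2}$, and $t^{-2}$. The $t^{-1}$ terms collect to $(L\sigma^2\eta_0^2+LG\rho_0\eta_0)\,t^{-1}$; the $t^{-3/2}$ terms collect to $L^2\eta_0\rho_0(5\rho_0-G\eta_0)\,t^{-3/2}$ (the constant $5$ arising because the SSAM descent bound carries $4L^2\rho^2$ from Lemma~\ref{lemma:4} together with the extra $L^2\rho^2$ from the mask term, rather than the $2L^2\rho^2$ of SAM); and, after discarding the harmless negative $-2L^3\eta_t^2\rho_t^2$ piece coming from the factor $(2-L\eta_t)$, the surviving $t^{-2}$ contribution is exactly $L^3\eta_0^2\rho_0^2\,t^{-2}$, produced by the $(1+L\eta_t)$ factor multiplying the mask error.

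I would then telescope over $t=1,\dots,T$. The hypothesis $\rho_0\le G\eta_0/5$ forces the $t^{-3/2}$ coefficient to be nonpositive, so that entire sum is dropped; this is precisely where the constant $5$ in the statement originates. The two remaining series are bounded by the standard estimates $\sum_{t=1}^T \tfrac{1}{t}\le \log T$ and $\sum_{t=1}^T \tfrac{1}{t^2}\le \tfrac{\pi^2}{6}$, the latter being the source of the $\tfrac{\pi^2}{6}$ factor in $C_3$. Rearranging the telescoped inequality and using $\eta_T=\eta_0/\sqrt{T}\le \eta_t$ to factor $\tfrac{\eta_0}{2\sqrt{T}}\sum_t \E\|\nabla f(\w_t)\|^2$ out of the left-hand side, then dividing through by $T\eta_0/2$, yields the claimed bound with $C_3=\tfrac{2}{\eta_0}\big(f(\w_0)-\E f(\w_T)+L^3\eta_0^2\rho_0^2\tfrac{\pi^2}{6}\big)$ and $C_4=2(L\sigma^2\eta_0+LG\rho_0)$.

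The only genuinely new work relative to Proposition~\ref{pro:1} is the mask-error bound and its bookkeeping, so I expect the main obstacle to be organizational rather than technical: one must be careful not to over-aggressively discard the $t^{-2}$ terms, since the negative $-2L^3\eta_t^2\rho_t^2$ contribution should be dropped while the positive $L^3\eta_t^2\rho_t^2$ contribution from the mask error must be retained to generate the $\tfrac{\pi^2}{6}$ constant. Getting the sign accounting right across the $t^{-3/2}$ and $t^{-2}$ groups simultaneously---so that $\rho_0\le G\eta_0/5$ cleanly annihilates the former while the latter converges to a constant---is the crux of the argument.
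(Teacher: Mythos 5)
Your proposal follows the paper's own proof essentially verbatim: the same bound $\|\boldsymbol{e}_t\|^2\le\rho_t^2$ on the masking error (which the paper applies implicitly when passing from $\|\boldsymbol{e}_t\|^2$ to $\rho_t^2$ in the per-step bound), the same grouping of the telescoped sum into $t^{-1}$, $t^{-3/2}$, and $t^{-2}$ series, the same use of $\rho_0\le G\eta_0/5$ to kill the $t^{-3/2}$ coefficient $L^2\rho_0\eta_0(5\rho_0-G\eta_0)$, and the same $\sum_{t\ge1}t^{-2}\le\pi^2/6$ bound producing the extra term in $C_3$. The sign bookkeeping you flag as the crux—dropping the negative $-2L^3\eta_t^2\rho_t^2$ piece while retaining the positive $L^3\eta_t^2\rho_t^2$ contribution from the mask error—is exactly how the paper arrives at its constants, so the argument is correct and identical in approach.
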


\begin{proof}
By taking the expectation and using Lemma~\ref{lemma:4}, and taking the schedule to be $\eta_t = \frac{\eta_0}{\sqrt{t}}$, $\rho_t = \frac{\rho _0}{\sqrt{t}}$, we obtain:
\begin{align*}
    \E f(\w_{t+1}) \leq& \E f(\w_t) - \frac{\eta_t}{2} \E ||\nabla f(\w_t)||^2 + L\eta_t^2\sigma^2 + 2\eta_t L^2\rho_t^2(2-L\eta_t) + (1-L\eta_t)\eta_t LG\rho_t \\& + (1+L\eta_t)\eta_t L^2||\boldsymbol{e}_t||^2
    \\ 
    \leq&  \E f(\w_t) - \frac{\eta_t}{2} \E ||\nabla f(\w_t)||^2 + L\eta_t^2\sigma^2 + 2\eta_t L^2\rho_t^2(2-L\eta_t) + (1-L\eta_t)\eta_t LG\rho_t \\& + (1+L\eta_t)\eta_t L^2\rho_t^2
\end{align*}
By taking sum, we have
\begin{align*}
    \E f(\w_T)-f(\w_0) \leq& -\sum_{t=1}^{T}\frac{\eta_t}{2}\E||\nabla f(\w_t)||^2 + (L\eta_0^2\sigma^2 + LG\eta_0\rho_0)\sum_{t=1}^{T}\frac{1}{t} \\&+ L^2\rho_0\eta_0(4\rho_0 - G\eta_0)\sum_{t=1}^{T}\frac{1}{t^{\frac{3}{2}}}     \\&+ \sum_{t=1}^T(1+L\eta_t)\eta_t L^2\rho_t^2
    \\
    \leq& -\sum_{t=1}^{T}\frac{\eta_t}{2}\E||\nabla f(\w_t)||^2 + (L\eta_0^2\sigma^2 + LG\eta_0\rho_0)\sum_{t=1}^{T}\frac{1}{t} \\&+ L^2\rho_0\eta_0(4\rho_0 - G\eta_0)\sum_{t=1}^{T}\frac{1}{t^{\frac{3}{2}}}  \\&+\eta_0L^2\rho_0^2 \sum_{t=1}^{T}\frac{1}{t^{\frac{3}{2}}} + L^3\eta_0^2\rho_0^2 \sum_{t=1}^{T}\frac{1}{t^2}
\end{align*}
and finally, we have,
\begin{align*}
    \E f(\w_T)-f(\w_0) \leq& -\sum_{t=1}^{T}\frac{\eta_t}{2}\E||\nabla f(\w_t)||^2 + (L\eta_0^2\sigma^2 + LG\eta_0\rho_0)\sum_{t=1}^{T}\frac{1}{t} \\&+L^2\rho_0\eta_0(5\rho_0 - G\eta_0)\sum_{t=1}^{T}\frac{1}{t^{\frac{3}{2}}} + L^3\eta_0^2\rho_0^2 \frac{\pi^2}{6}
\end{align*}

Bound $\rho_0$ with $\frac{G\eta_0}{5}$, we have:
\begin{align*}
    \frac{\eta_0}{2\sqrt{T}}\sum_{t=1}^T \E||\nabla f(\w_t)||^2 \leq& f(\w_0) - \E f(\w_{T}) + (L \sigma^2 \eta_0^2 + LG\rho_0\eta_0) \sum_{t=1}^T\frac{1}{t} \\ & + L^3\eta_0^2\rho_0^2 \sum_{t=1}^{T}\frac{1}{t^2}
\end{align*}
Finally, we achieve the result:
\begin{align*}
    \frac{1}{T}\sum_{t=1}^T\E ||\nabla f(\w_t)||^2\leq& \frac{2\left(f(\w_0)-\E f(\w_T)+L^3\eta_0^2\rho_0^2 \frac{\pi^2}{6}\right)}{\eta_0}\frac{1}{\sqrt{T}} \\
    & + 2(L\sigma^2\eta_0+LG\rho_0)\frac{\log T}{\sqrt{T}}
\end{align*}
\end{proof}
So far, we have completed the proof of the theory in the main text.

\subsection{Proof of Theorem 3}
The proof is mainly based on the generalization error in Sharpness-Aware Minimization~\cite{sam}. Similar to~\cite{sam}, we have the condition $L_\mathscr{D}(\boldsymbol{w}) \leq \mathbb{E}_{\boldsymbol{\epsilon} \sim \mathcal{N}(\boldsymbol{0},\boldsymbol{\rho})}[L_\mathscr{D}(\boldsymbol{w}+\boldsymbol{m}\odot \boldsymbol{\epsilon})]$ means that adding Gaussian perturbation in a specific dimension,~\emph{i.e.}, the element of $\boldsymbol{m}$ is 1, should not decrease the test error, which is more loose compared to SAM's. This condition is satisfied only for the final solution of convergence.

\begin{proof}
There is a lower bounded in the right-hand of the theorem, $\sqrt{k\log(1+\|\boldsymbol{w}\|_2^2/\rho^2)/(4n)}$. This could be achieved if $\|\boldsymbol{w}\|_2^2 > \rho^2(\exp(4n/k)-1)$ trivially. Therefore, we only consider the case when $\|\boldsymbol{w}\|_2^2 \leq \rho^2(\exp(4n/k)-1)$ in the rest of the proof.

Inspired from \cite{chatterji2019intriguing}, using PAC-Bayesian generalization bound~\cite{pac_bayesian} and following \cite{dziugaite2017computing}, the following generalization bound holds for any prior $\mathscr{P}$ over parameters with probability $1-\delta$ over the choice of the training set $\mathcal{S}$, for any posterior $\mathscr{Q}$ over parameters:
\begin{equation}
    \mathbb{E}_{\boldsymbol{w}\sim \mathscr{Q}}[L_\mathscr{D}(\boldsymbol{w})] \leq \mathbb{E}_{\boldsymbol{w}\sim \mathscr{Q}}[L_\mathcal{S}(\boldsymbol{w})] +\sqrt{\frac{KL(\mathscr{Q}||\mathscr{P}) + \log \frac{n}{\delta}}{2(n-1)}}
\end{equation}
Moreover, if $\mathscr{P}=\mathcal{N}(\boldsymbol{\mu}_P,\sigma_P^2 \boldsymbol{I})$ and $\mathscr{Q}=\mathcal{N}(\boldsymbol{\mu}_Q,\sigma_Q^2 \boldsymbol{I})$, then the KL divergence can be written as follows:
\begin{equation}
    KL(\mathscr{P}||\mathscr{Q}) = \frac{1}{2}\bigg[\frac{k\sigma_Q^2+\|\boldsymbol{\mu}_P-\boldsymbol{\mu}_Q\|_2^2}{\sigma_P^2}- k + k\log\left(\frac{\sigma_P^2}{\sigma_Q^2}\right)\bigg]
\end{equation}
Given a posterior standard deviation $\sigma_Q$, one could choose a prior standard deviation $\sigma_P$ to minimize the above KL divergence and hence the generalization bound by taking the derivative\footnote{Despite the nonconvexity of the function here in $\sigma_P^2$, it has a unique stationary point which happens to be its minimizer.} of the above KL with respect to $\sigma_P$ and setting it to zero. We would then have ${\sigma^*_P}^2=\sigma_Q^2+\|\boldsymbol{\mu}_P-\boldsymbol{\mu}_Q\|_2^2/k$. However, since $\sigma_P$ should be chosen before observing the training data $\mathcal{S}$ and $\boldsymbol{\mu}_Q$,$\sigma_Q$ could depend on $\mathcal{S}$, we are not allowed to optimize $\sigma_P$ in this way. Instead, one can have a set of predefined values for $\sigma_P$ and pick the best one in that set. Given fixed $a,b>0$, let $T=\{c\exp((1-j)/k)| j\in \mathbb{N}\}$ be that predefined set of values for $\sigma_P^2$. If for any $j\in \mathbb{N}$, the above PAC-Bayesian bound holds for $\sigma_P^2=c\exp((1-j)/k)$ with probability $1-\delta_j$ with $\delta_j=\frac{6\delta}{\pi^2j^2}$, then by the union bound, all above bounds hold simultaneously with probability at least $1-\sum_{j=1}^\infty\frac{6\delta}{\pi^2j^2} =1-\delta$.

Let $\sigma_Q=\rho$, $\boldsymbol{\mu}_Q=\boldsymbol{w}$ and $\boldsymbol{\mu}_P=\boldsymbol{0}$. Therefore, we have:
\begin{equation}\label{eq:weight_upper}
    \sigma_Q^2+\|\boldsymbol{\mu}_P-\boldsymbol{\mu}_Q\|_2^2/k \leq \rho^2 + \|\boldsymbol{w}\|_2^2/k \leq \rho^2(1+\exp(4n/k))
\end{equation}
We now consider the bound that corresponds to $j=\lfloor 1 - k\log((\rho^2 + \|\boldsymbol{w}\|_2^2/k)/c)\rfloor$. We can ensure that $j\in \mathbb{N}$ using inequality \eqref{eq:weight_upper} and by setting $c=\rho^2(1+\exp(4n/k))$. Furthermore, for $\sigma_P^2=c\exp((1-j)/k)$, we have:
\begin{equation}
    \rho^2 + \|\boldsymbol{w}\|_2^2/k \leq \sigma_P^2 \leq \exp(1/k) \left(\rho^2 + \|\boldsymbol{w}\|_2^2/k\right)
\end{equation}

Therefore, using the above value for $\sigma_P$, KL divergence can be bounded as follows:

\begin{align}
    KL(\mathscr{P}||\mathscr{Q}) &=\frac{1}{2}\bigg[\frac{k\sigma_Q^2+\|\boldsymbol{\mu}_P-\boldsymbol{\mu}_Q\|_2^2}{\sigma_P^2}- k + k\log\left(\frac{\sigma_P^2}{\sigma_Q^2}\right)\bigg]\\
    &\leq \frac{1}{2}\bigg[\frac{k(\rho^2 + \|\boldsymbol{w}\|_2^2/k)}{\rho^2 + \|\boldsymbol{w}\|_2^2/k}- k + k\log\left(\frac{\exp(1/k) \left(\rho^2 + \|\boldsymbol{w}\|_2^2/k\right)}{\rho^2}\right)\bigg]\\
    &=\frac{1}{2}\bigg[k\log\left(\frac{\exp(1/k) \left(\rho^2 + \|\boldsymbol{w}\|_2^2/k\right)}{\rho^2}\right)\bigg]\\    
    &= \frac{1}{2}\bigg[1+k\log\left(1+\frac{\|\boldsymbol{w}\|_2^2}{k\sigma_Q^2}\right)\bigg]
\end{align}
Given the bound that corresponds to $j$ holds with probability $1-\delta_j$ for $\delta_j=\frac{6\delta}{\pi^2j^2}$, the log term in the bound can be written as:
\begin{align*}
    \log\frac{n}{\delta_j} &= \log\frac{n}{\delta} + \log\frac{\pi^2j^2}{6}\\
    &\leq \log\frac{n}{\delta} + \log\frac{\pi^2k^2\log^2(c/(\rho^2 + \|\boldsymbol{w}\|_2^2/k))}{6}\\
    &\leq \log\frac{n}{\delta} + \log\frac{\pi^2k^2\log^2(c/\rho^2)}{6}\\
    &\leq \log\frac{n}{\delta} + \log\frac{\pi^2k^2\log^2(1+\exp(4n/k))}{6}\\
    &\leq \log\frac{n}{\delta} + \log\frac{\pi^2k^2(2 +4n/k)^2}{6}\\
    &\leq \log \frac{n}{\delta} + 2 \log\left(6n + 3 k\right)
\end{align*}
Therefore, the generalization bound can be written as follows:
\begin{equation}\label{eq:almost_theorem}
    \mathbb{E}_{\epsilon_i \sim \mathcal{N}(0,\sigma)}[L_\mathscr{D}(\boldsymbol{w}+\boldsymbol{\epsilon})] \leq \mathbb{E}_{\epsilon_i \sim \mathcal{N}(0,\sigma)}[L_\mathcal{S}(\boldsymbol{w}+\boldsymbol{\epsilon})] +\sqrt{\frac{\frac{1}{4}k\log\left(1+\frac{\|\boldsymbol{w}\|_2^2}{k\sigma^2}\right) + \frac{1}{4}+\log \frac{n}{\delta} + 2\log\left(6n + 3k\right)}{n-1}}
\end{equation}
In the above bound, we have $\epsilon_i \sim \mathcal{N}(0,\sigma)$. Therefore, $\|\boldsymbol{\epsilon}\|_2^2$ has chi-square distribution and by Lemma 1 in \cite{laurent2000adaptive}, we have that for any positive $t$:
\begin{equation}
    P(\|\boldsymbol{\epsilon}\|_2^2 - k\sigma^2 \geq 2\sigma^2\sqrt{kt} + 2t\sigma^2) \leq \exp(-t)
\end{equation}
Therefore, with probability $1-1/\sqrt{n}$ we have that:
$$||m\odot \epsilon||^2  \leq \|\boldsymbol{\epsilon}\|_2^2\leq \sigma^2(2\ln(\sqrt{n})+k + 2\sqrt{k\ln(\sqrt{n})}) \leq \sigma^2k\left(1+\sqrt{\frac{\ln(n)}{k}}\right)^2\leq \rho^2$$
Substituting the above value for $\sigma$ back to the inequality and using theorem's assumption gives us following inequality:
\begin{align*}
    L_\mathscr{D}(\boldsymbol{w}) &\leq (1-1/\sqrt{n})\max_{\|\boldsymbol{m\odot\epsilon}\|_2 \leq \rho} L_\mathcal{S}(\boldsymbol{w} + \boldsymbol{\epsilon}) + 1/\sqrt{n}\\
    &+\sqrt{\frac{\frac{1}{4}k\log\left(1+\frac{\|\boldsymbol{w}\|_2^2}{\rho^2}\left(1+\sqrt{\frac{\log(n)}{k}}\right)^2\right) + \log\frac{n}{\delta} + 2\log\left(6n + 3k\right)}{n-1}}\\
    &\leq \max_{\|\boldsymbol{\epsilon}\|_2 \leq \rho} L_\mathcal{S}(\boldsymbol{w} + \boldsymbol{m\odot\epsilon}) + \\
    &+\sqrt{\frac{k\log\left(1+\frac{\|\boldsymbol{w}\|_2^2}{\rho^2}\left(1+\sqrt{\frac{\log(n)}{k}}\right)^2\right) + 4\log\frac{n}{\delta} + 8\log\left(6n + 3k\right)}{n-1}}
\end{align*}

\end{proof}

\section{More Experimets}

\textbf{SAM with different perturbation magnitude $\rho$.}
We determine the perturbation magnitude $\rho$ by using grid search. We choose $\rho$ from the set $\{0.01, 0.02, 0.05, 0.1, 0.2, 0.5\}$ for CIFAR, and choose $\rho$ from $\{0.01, 0.02, 0.05, 0.07, 0.1, 0.2\}$ for ImageNet. We show the results when varying $\rho$ in~\cref{table:rho-cifar} and~\cref{table:rho-imagenet}. From this table, we can see that the $\rho=0.1$, $\rho=0.2$ and $\rho=0.07$ is sutiable for CIFAR10, CIFAR100 and ImageNet respectively.

\begin{table}[ht]
\centering
\vspace{-2mm}
\caption{Test accuracy of ResNet18 and WideResNet28-10 on CIFAR10 and CIFAR100 with different perturbation magnitude $\rho$.}
\label[table]{table:rho-cifar}
\begin{tabular}{cccccccc}
\toprule
Dataset & SAM $\rho$ & 0.01    & 0.02    & 0.05    & 0.1              & 0.2              & 0.5     \\ \hline
         & ResNet18   & 96.58\% & 96.54\% & 96.68\% & \textbf{96.83\%} & 96.32\%          & 93.16\% \\
\multirow{-2}{*}{CIFAR10}  & WideResNet28-10 & 97.26\% & 97.34\% & 97.31\% & \textbf{97.48\%} & 97.29\%          & 95.13\% \\ \hline
         & ResNet18   & 79.56\% & 79.98\% & 80.71\% & 80.65\% & \textbf{81.03\%} & 77.57\% \\
\multirow{-2}{*}{CIFAR100} & WideResNet28-10 & 82.25\% & 83.04\% & 83.47\% & 83.47\% & \textbf{84.20\%} & 84.03\% \\ \bottomrule
\end{tabular}
\end{table}

\begin{table}[ht]
\centering
\vspace{-3mm}
\caption{Test accuracy of ResNet50 on ImageNet with different perturbation magnitude $\rho$.}

\label{table:rho-imagenet}
\begin{tabular}{cccccccc}
\toprule
datasets & SAM $\rho$ & 0.01    & 0.02    & 0.05    & 0.07             & 0.1     & 0.2     \\ \hline
ImageNet & ResNet50   & 76.63\% & 76.78\% & 77.12\% & \textbf{77.25\%} & 77.00\% & 76.37\% \\ \bottomrule
\end{tabular}
\end{table}

\textbf{Influence of hyper-parameters.}
We first examine the effect of the number of sample size $N_F$ of SSAM-F in~\cref{ablation-num-samples}. From it we can see that a certain number of samples is enough for the approximation of data distribution in SSAM-F,~\emph{e.g.}, $N_F=128$, which greatly saves the computational cost of SSAM-F. In~\cref{ablation-update-mask}, we also report the influence of the mask update interval on SSAM-F and SSAM-D. The results show that the performance degrades as the interval becom longer, suggesting that dense mask updates are necessary for our methods. Both of them are ResNet18 on CIFAR10.

\begin{table}[ht]
	\begin{minipage}[t]{0.53\textwidth}
	\small
		\centering
		\caption{Results of ResNet18 on CIFAR10 with different number of samples $N_F$ in SSAM-F. `Time' reported in table is the time cost to calculate Fisher Information based on $N_F$ samples.}
		\label{ablation-num-samples}
		\begin{tabular}{cccc}
        \toprule
        Sparsity & $N_F$ & Acc & Time \\ \hline
        \multirow{6}{*}{0.5} & 16 & 96.77\% & 1.49s \\ 
         & 128 & 96.84\% & 4.40s \\ 
         & 512 & 96.67\% & 15.35s \\ 
         & 1024 & 96.83\% & 30.99s \\ 
         & 2048 & 96.68\% & 56.23s \\ 
         & 4096 & 96.66\% & 109.31s \\ \hline
         \multirow{6}{*}{0.9} & 16 &  96.79\% & 1.47s\\
         & 128 & 96.50\% & 5.42s\\
         & 512 & 96.43\% & 15.57s\\
         & 1024 & 96.75\% & 29.24s \\
         & 2048 & 96.62\% & 57.72s\\
         & 4096 & 96.59\% & 110.65s\\ \bottomrule
        \end{tabular}
	\end{minipage}
	\hspace{+3mm}
	\begin{minipage}[t]{0.43\textwidth}
	\small
	\caption{Results of ResNet18 on CIFAR10 with different $T_m$ intervals of update mask. The left of `/' is accuracy of SSAM-F, while the right is SSAM-D.}
		\label{ablation-update-mask}
		\centering
		\begin{tabular}{ccc}
        \toprule
        Sparsity & $T_m$ & Acc \\ \hline
        \multirow{6}{*}{0.5} & 1 & 96.81\%/96.74\% \\
         & 2 & 96.51\%/96.74\% \\
         & 5 & 96.83\%/96.60\% \\
         & 10 & 96.71\%/96.73\% \\
         & 50 & 96.65\%/96.75\% \\
         & Fixed & 96.57\%/96.52\% \\ \hline
        \multirow{6}{*}{0.9} & 1 & 96.70\%/96.65\% \\
         & 2 & 93.75\%/96.63\% \\
         & 5 & 96.51\%/96.69\% \\
         & 10 & 96.67\%/96.74\% \\
         & 50 & 96.64\%/96.66\% \\
         & Fixed & 96.21\%/96.46\% \\ \bottomrule
        \end{tabular}
	\end{minipage}
\end{table}

\textbf{Details of ViT experiments in Sec.~\ref{sec:n_mstructure_experiment}.} In order to achieve true training acceleration, the experiments with ViT in Sec.~\ref{sec:n_mstructure_experiment} utilize the \textit{cusparseLt} library~\cite{cusparselt} developed by Nvidia, which is a high-performance CUDA library dedicated to general matrix-matrix operations in which at least one operand is a sparse matrix. Based on official test results in~\cite{bert_with_spmm} and our own testing experience, it can be observed that the larger the dimension of the matrix, the more significant the acceleration effect of sparse matrix multiplication implemented by the \textit{cusparseLt} library,~\emph{e.g.}, \textit{cusparseLt} can accelerate sparse matrix multiplication by 1.6 times in the case of two matrices with dimensions 4096, 10684, and 1024.

The structure of ViT follows the original design of~\cite{vit}, except that we set the dimensions as large as possible in order to highlight the acceleration effect of \textit{cusparseLt}, as shown in the Table~\ref{appendix_vit_model_setting}. Considering the reality that we are limited by hardware, the dimension of the transformer we modified the training settings as shown in Table~\ref{appendix_vit_train_setting}.

It should be emphasized that the experimental purpose of Sec.~\ref{sec:n_mstructure_experiment} is to demonstrate that our proposed SSAM can achieve real training acceleration through CUDA libraries for relevant sparse matrices. The figure of running time~\emph{v.s.} training iteration is shown in figure~\ref{speedup}. Our code using~\textit{cusparseLt} still has the optimization possibility to further improve the computational speed.

\begin{figure}[ht]
  \centering
  \begin{minipage}[b]{0.3\linewidth}
    \centering
    {
        \captionof{table}{Setting of ViT structure}
        \label{appendix_vit_model_setting}
        \begin{tabular}{cc}
        \toprule
        \multicolumn{2}{c}{ViT structure} \\ \hline
        image size & $32\times32$ \\
        patch size & $1\times1$ \\
        dim &  8192\\
        mlp ratio & 1\\
        heads &  1\\
        number of blocks &  1\\
        dropout &  0\\
        embedding dropout &  0\\ \bottomrule
        \end{tabular}%
    }
  \end{minipage}
  \begin{minipage}[b]{0.3\linewidth}
    \centering
    {
        \captionof{table}{Hyper-parameters}
        \label{appendix_vit_train_setting}
        \begin{tabular}{cc}
        \toprule
        \multicolumn{2}{c}{Training setting} \\ \hline
        epochs & 10 \\
        batch size & 16 \\
        basic optimizer & Adam \\
        learning rate & 1e-3 \\
        weight decay & 0 \\
        beta & (0.9, 0.999) \\
        rho & 0.2\\ 
        number of samples & 512\\ \bottomrule
        \end{tabular}%
    }
    \end{minipage}
\end{figure}

\begin{figure}[ht]
    \centering
    \vspace{-0.2cm}
    \includegraphics[width=0.5\textwidth]{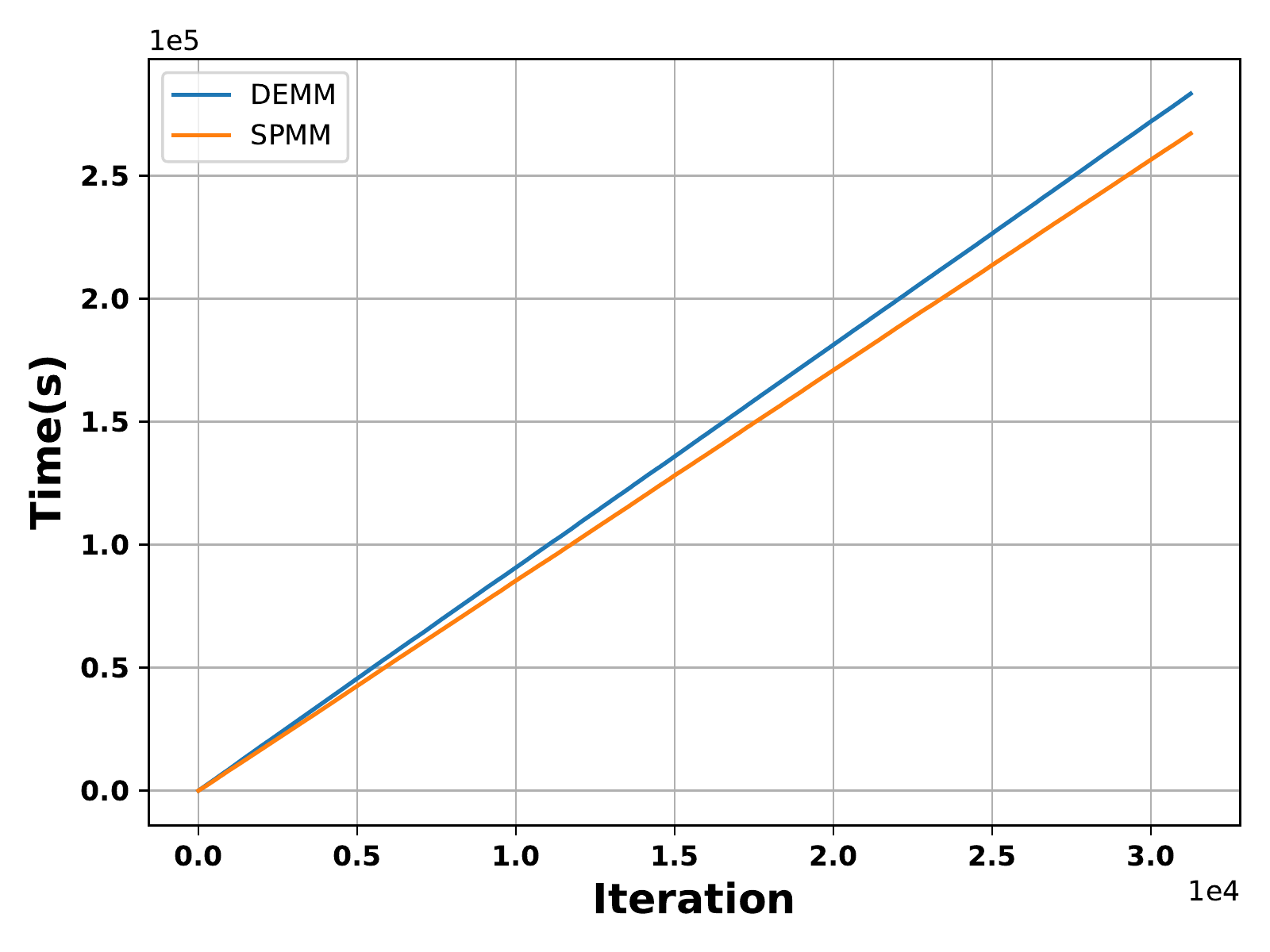}
    \vspace{-0.1cm}
    \caption{Running Time~\emph{v.s.} Training iteration.}
    \label[figure]{speedup}
    \vspace{-0.4cm}
\end{figure}




\end{appendices}

\end{document}